\documentclass{article}

\usepackage{enumitem,outlines}
\usepackage{amsmath,amssymb,bm,amsthm,mathtools}
\usepackage{xcolor}
\usepackage{lipsum}
\usepackage{geometry}
\usepackage[numbers]{natbib}

\usepackage{graphicx,subcaption}

\usepackage{algorithm,algpseudocode}

\usepackage{hyperref}
\hypersetup{
    colorlinks=true,
    linkcolor=blue!60!black,
    filecolor=blue,      
    urlcolor=blue,
    citecolor=blue
}
\urlstyle{tt}

\newenvironment{named}[1]
  {\noindent\textbf{#1.}}
  % {\par\medskip}

%%%%% Theorems %%%%%

\newtheorem{regthm}{Theorem}
\newtheorem{reglem}{Lemma}
\newtheorem{regprop}{Proposition}
\newtheorem{regcor}{Corollary}

\theoremstyle{remark}
\newtheorem{regrem}{Remark}

\theoremstyle{definition}
\newtheorem{assump}{Assumption}

\usepackage[theorems, breakable]{tcolorbox}

\newtcbtheorem[number within=section]{boxthm}{Theorem}{
    colback=red!5!white,
    colframe = red!50!black, 
    fonttitle=\boldmath\bfseries, 
    breakable
}{thm}

\newtcbtheorem[number within=section]{boxlem}{Lemma}{
    colback=yellow!5!white, 
    colframe = yellow!75!black,
    fonttitle=\boldmath\bfseries, 
    breakable
}{lem}

\newtcbtheorem[number within=section]{boxprop}{Proposition}{
    colback=red!5!white, 
    colframe = red!75!black,
    fonttitle=\boldmath\bfseries, 
    breakable
}{prop}

\newtcbtheorem[number within=section]{boxcor}{Corollary}{
    colback=red!5!white, 
    colframe = red!100!black,
    fonttitle=\boldmath\bfseries, 
    breakable
}{cor}

\newtcbtheorem[number within=section]{boxrem}{Remark}{
    colback=magenta!10, 
    colframe = magenta!65!black,
    fonttitle=\boldmath\bfseries, 
    breakable
}{rem}

\newif\ifusetcolorbox
% Uncomment the line below to use tcolorbox for theorems
\usetcolorboxtrue

\newenvironment{thm}[2]{
  \ifusetcolorbox
    \begin{boxthm}{#1}{#2}
  \else
    \begin{regthm}[#1]
    \label{thm:#2}
  \fi
}{
  \ifusetcolorbox
    \end{boxthm}
  \else
    \end{regthm}
  \fi
}

\newenvironment{lem}[2]{
  \ifusetcolorbox
    \begin{boxlem}{#1}{#2}
  \else
    \begin{reglem}[#1]
    \label{lem:#2}
  \fi
}{
  \ifusetcolorbox
    \end{boxlem}
  \else
    \end{reglem}
  \fi
}

\newenvironment{prop}[2]{
  \ifusetcolorbox
    \begin{boxprop}{#1}{#2}
  \else
    \begin{regprop}[#1]
    \label{prop:#2}
  \fi
}{
  \ifusetcolorbox
    \end{boxprop}
  \else
    \end{regprop}
  \fi
}

\newenvironment{cor}[2]{
  \ifusetcolorbox
    \begin{boxcor}{#1}{#2}
  \else
    \begin{regcor}[#1]
    \label{cor:#2}
  \fi
}{
  \ifusetcolorbox
    \end{boxcor}
  \else
    \end{regcor}
  \fi
}

\newenvironment{rem}[2]{
  \ifusetcolorbox
    \begin{boxrem}{#1}{#2}
  \else
    \begin{regrem}[#1]
    \label{rem:#2}
  \fi
}{
  \ifusetcolorbox
    \end{boxrem}
  \else
    \end{regrem}
  \fi
}

%%%%%% mathbb %%%%%%

\newcommand{\Eb}{\mathbb{E}}

\newcommand{\Ib}{\mathbb{I}}
\newcommand{\Nb}{\mathbb{N}}
\newcommand{\Pb}{\mathbb{P}}

\newcommand{\Rb}{\mathbb{R}}

%%%%%% mathcal %%%%%%
\newcommand{\Ac}{\mathcal{A}}

\newcommand{\Pc}{\mathcal{P}}

\newcommand{\Uc}{\mathcal{U}}

\newcommand{\Xc}{\mathcal{X}}

\newcommand{\Zc}{\mathcal{Z}}

%%%%%% mathbf %%%%%%

\newcommand{\xbf}{\mathbf{x}}
\newcommand{\ybf}{\mathbf{y}}

%%%%%% mathfrak %%%%%%

%%%%%% Bold math %%%%%%

%%%%%% Functions %%%%%%

%%%%%% Calculus %%%%%%

%%%%%% Linear Algebra %%%%%%
\newcommand{\norm}[1]{\left\lVert#1\right\rVert}

%%%%%% Convex Analysis %%%%%%

%%%%%% Distributions %%%%%%
\DeclareMathOperator{\Ber}{Ber}                     		% Bernoulli
                     		% Binomial
                   		% Geometric
                     		% Dirichlet
                   		% Beta
			% Normal
		% Normal (information form)

%%%%%% Measure Theory %%%%%%
           % Lebesgue integral

					% Almost surely
				% #1-almost surely
				% Radon-Nikodym derivative

%%%%%% Probability %%%%%%
\newcommand{\iid}{\overset{iid}{\sim}}              % iid
\newcommand{\EE}[2][]{\mathbb{E}_{#1}\brac{#2}}     % Expectation
\newcommand{\ind}{\perp\!\!\!\!\perp} 				% Independence
\DeclareMathOperator{\Var}{Var}

%%%%%% Information Theory %%%%%%

%%%%%% Parentheses/brackets %%%%%%
\newcommand{\paren}[1]{\left(#1\right)}
\newcommand{\brac}[1]{\left[#1\right]}
\newcommand{\cbr}[1]{\left\{#1\right\}}
\newcommand{\abs}[1]{\left|#1\right|}
\newcommand{\inner}[1]{\left\langle#1\right\rangle}

%%%%%% Text %%%%%%

\newcommand{\txtand}{\quad\text{and}\quad}

\DeclareMathOperator*{\argmax}{argmax}
\DeclareMathOperator*{\argmin}{argmin}

%%%%%% Other %%%%%%

\newcommand{\rcbr}[3]{\cbr{#1}_{#2}^{#3}}
\newcommand{\rsum}[2]{\sum_{#1}^{#2}}
\newcommand{\rprod}[2]{\prod_{#1}^{#2}}

\newcommand{\rbcap}[2]{\bigcap_{#1}^{#2}}

%%%%%% Note specific %%%%%%
\newcommand{\Aout}{A^\text{o}}
\newcommand{\muout}{\mu^\text{o}}
\DeclareMathOperator{\muDR}{\mu_{DR}}
\DeclareMathOperator{\muDRstr}{\mu_{DR}^*}

\DeclareMathOperator{\LCB}{LCB}
\DeclareMathOperator{\DeltaDR}{\Delta_{DR}}
\DeclareMathOperator{\DeltaDRmin}{\Delta_{DR,min}}
\newcommand{\Deltaamin}{\Delta_{a,\min}}

\DeclareMathOperator{\Rad}{\mathfrak{R}}        % Rademacher complexity
\DeclareMathOperator{\VC}{VC}

\newcommand{\Xarm}[2]{X_{#1}^{\paren{#2}}}

\newcommand{\Yarm}[2]{Y_{#1}^{\paren{#2}}}
\newcommand{\Warm}[2]{W_{#1}^{\paren{#2}}}

\title{Distribution-Dependent Rates for Multi-Distribution Learning}

\author{%
  Rafael Hanashiro \footnote{MIT, \texttt{rafah@mit.edu}}%
  \and Patrick Jaillet \footnote{MIT, \texttt{jaillet@mit.edu}}
}

\begin{document}

\maketitle

\begin{abstract}
To address the needs of modeling uncertainty in sensitive machine learning applications, the setup of distributionally robust optimization (DRO) seeks good performance uniformly across a variety of tasks. The recent multi-distribution learning (MDL) framework \cite{pmlr-v195-awasthi23a-open-prob} tackles this objective in a dynamic interaction with the environment, where the learner has sampling access to each target distribution. Drawing inspiration from the field of pure-exploration multi-armed bandits, we provide \textit{distribution-dependent} guarantees in the MDL regime, that scale with suboptimality gaps and result in superior dependence on the sample size when compared to the existing distribution-independent analyses. We investigate two non-adaptive strategies, uniform and non-uniform exploration, and present non-asymptotic regret bounds using novel tools from empirical process theory. Furthermore, we devise an adaptive optimistic algorithm, LCB-DR, that showcases enhanced dependence on the gaps, mirroring the contrast between uniform and optimistic allocation in the multi-armed bandit literature. We also conduct a small synthetic experiment illustrating the comparative strengths of each strategy.
\end{abstract}

\tableofcontents

\section{Introduction}
Classical statistical learning operates under the assumption that data comes from a single source \cite{elem-stats-learn}. However, the growing use of machine learning in safety-critical applications has brought forth the demand for more robust models that address stochastic heterogeneity. One well-established paradigm is \textit{distributionally robust optimization (DRO)} \cite{Rahimian2022-dro-review}, which seeks good performance uniformly across a collection of distributions. Concretely, let $\Ac$ and $\Xc$ be decision and data spaces, respectively, and suppose that data is sampled from a distribution within some \textit{uncertainty set} $\Uc\subset\Pc\paren{\Xc}$. Under a target reward function $r:\Ac\times\Xc\to\Rb$ and distribution $Q\in\Uc$, an action $a\in\Ac$ yields expected reward $\mu\paren{a;Q} \coloneqq \EE[X_Q\sim Q]{r\paren{a,X_Q}}$. DRO then focuses on the problem
\begin{align}
\tag{DR}
\label{eq:dr-obj}
    \max_{a\in\Ac} \cbr{ \muDR\paren{a} \coloneqq \min_{Q\in\Uc} \mu\paren{a;Q} }
\end{align}
Recent works \cite{Blum_Haghtalab_Procaccia_Qiao_2017-collab-pac, Sagawa*2020-group-dro, Haghtalab_Jordan_Zhao_2022-on-demand} have studied the setting of finite $\Uc$ and tackle it via interactive dynamics with the environment. More precisely, the emergent \textit{multi-distribution learning (MDL)} framework \cite{pmlr-v195-awasthi23a-open-prob} assumes sampling access to $\Uc$, where a learning agent sequentially selects which distributions to sample from given a fixed sampling budget.

The current literature (e.g., see \cite{pmlr-v195-awasthi23a-open-prob}) is populated with \textit{distribution-independent} rates; i.e., bounds that are independent of problem parameters. While broad in its applicability, this approach falls short in capturing the nuances of the underlying environment. Oftentimes, it is more intuitive to analyze the learner's performance in a fixed setting, as opposed to considering a worst-case instance for each sample size. When domain knowledge is available, a ``one-size-fits-all'' rate does not provide any insight on how to take advantage of this information.

To address these drawbacks, in this work, we study \textit{distribution-dependent} guarantees for the MDL problem. Motivated by its close ties to the well-studied \textit{pure exploration multi-armed bandits (PE-MAB)} \cite{Bubeck2011-pure} paradigm, we analyze the simple strategies of uniform and non-uniform exploration, as well as their optimistic counterpart, ensuring regret guarantees that scale with suboptimality gaps and decay much faster with the sampling budget.

\subsection{Main results}
We place MDL algorithms into one of two categories: non-adaptive and adaptive. In the former, data is collected without any interaction with the environment and, in the latter, the learner sequentially selects distributions based on previously acquired samples. We introduce two strategies of the non-adaptive type: uniform (UE) and non-uniform (NUE) exploration (Section~\ref{sec:non-adapt}). As the names suggest, UE gathers the same number of samples from each distribution, while NUE can benefit from varied sample sizes. Using tools from empirical process theory, we provide non-asymptotic regret guarantees that scale with the suboptimality gaps of the problem and decay exponentially with the sampling budget $T$ (Section~\ref{sec:ue}). This stands in contrast to the distribution-independent rates found in the recent literature, which hold under a worst-case environment and, thus, only scale with $O\paren{1/\sqrt{T}}$. From a Bernstein-type concentration inequality, we then show how NUE can exploit distributional variability to allocate samples more effectively (Section~\ref{sec:nue}).

While the non-adaptive methods already display exponentially decreasing regret, adaptivity can further improve the dependence on instance-specific variables. Motivated by the enhancements of UCB-E \cite{Audibert2010-BestArm} over uniform exploration in the PE-MAB literature, we introduce the analogous LCB-DR algorithm (Section~\ref{sec:optimism}) and showcase how optimism can result in superior dependence on the suboptimality gaps when compared to UE (Section~\ref{sec:ucb-dr-vs-ue}). 

Let $\DeltaDR\paren{a} \coloneqq \max_{a^*\in\Ac} \muDR\paren{a^*} - \muDR\paren{a}$ be the suboptimality gap of an action from a finite set $\Ac$, and suppose that rewards are bounded in $\brac{0,M}$. Given an algorithm, we denote its output after $T$ sampling rounds by $\Aout_T$. In short, we make the following contributions:
\begin{outline}[enumerate]
    \1[(i)] With $n\in\Nb$ samples from each distribution, we show in Section~\ref{sec:ue} that UE has a simple regret decay of order
    \begin{align*}
        \EE{\DeltaDR\paren{\Aout_T}} \leq \sum_{a\in\Ac:\DeltaDR\paren{a}>0} \DeltaDR\paren{a} \exp\paren{ -\frac{n \Delta_\text{DR}^2\paren{a}}{M^2} }
    \end{align*}
    Moreover, we present the distribution-independent rate $\EE{\DeltaDR\paren{\Aout_T}} \lesssim \sqrt{\frac{\abs{\Uc}\log\paren{\abs{\Uc}\abs{\Ac}}}{T}}$.

    \1[(ii)] With $n_Q\in\Nb$ samples from distribution $Q\in\Uc$ over real-valued data and bounded reward $r\in\brac{0,M}$, we show in Section~\ref{sec:nue} that NUE attains the rate
    \begin{align*}
        \EE{\DeltaDR\paren{\Aout_T}} \leq \sum_{a\in\Ac:\DeltaDR\paren{a}>0} \DeltaDR\paren{a} \exp\paren{ - \frac{ \Delta_\text{DR}^2\paren{a} }{ \sigma_T^2+\Sigma_T^2+V_T + \frac{M\DeltaDR\paren{a}}{ \min_{Q\in\Uc} n_Q } } }
    \end{align*}
    where $\sigma_T^2,\Sigma_T^2$ and $V_T$ are empirical process variance quantities that scale with the variances of each $Q\in\Uc$ and decrease with the $n_Q$.

    \1[(iii)] Appealing to the principle of optimism, we devise the LCB-DR algorithm that, in a pre-specified permutation of the arms $\paren{a_1,a_2,\dots,a_{\abs{\Ac}}}$, for $j=1,\dots,\abs{\Ac}$, sequentially performs a modified version of UCB-E, for $T_j$ rounds, on ``losses'' $\cbr{\mu\paren{a_j,Q}}_{Q\in\Uc}$ as a means of identifying the worst-case distribution for $a_j$. In Section~\ref{sec:optimism}, we show that, under rewards bounded in $\brac{0,1}$, this guarantees an error probability of
    \begin{align*}
        \Pb\paren{ \DeltaDR\paren{\Aout_T} > 0 } &\leq \rsum{j=1}{\abs{\Ac}} \exp\paren{ - \frac{ \paren{ C_{a_j}^2\wedge 1 } \paren{T_j + \tilde T_j - \abs{\Uc_j}} }{ H_j } }
    \end{align*}
    This bound, which may be of independent interest, results from an analysis of UCB-E under a learner with previously acquired data (see Appendix~\ref{app:mod-ucb-e-proof}). Since the learner has already accumulated samples from previous iterations in each UCB-E batch, some ``arms'' can be identified as suboptimal a priori. We show that the algorithm essentially operates on a subset $\Uc_j\subset\Uc$ of the arms, whose total number $\tilde T_j$ of pre-collected samples contributes to the regret decay. Furthermore, while the standard analysis scales with the sum of the reciprocals of \textit{all} suboptimality gaps, in this case, the quantity $H_j$ sum only over the smaller set $\Uc_j$. The quantity $C_a$ is a newly introduced complexity measure that captures the difference in difficulty between the two tasks we face: identifying $a$ as suboptimal and finding its worst-case distribution. Drawing parallels with the MAB literature, we compare this bound to that of UE, showing that the contrast is characterized by $C_{a}$.

    \1[(iv)] In Section~\ref{sec:ext-inf-dec-set}, we briefly discuss how the results can be extended to infinite decision sets.
\end{outline}
For ease of exposition, we removed constants and terms decreasing with $T$ inside the exponential, as well as any quantities outside of it. The formal statements are deferred to the corresponding sections.

\subsection{Related work}
The predominance of machine learning in society has highlighted the need for robust models that maintain high-quality performance in a multitude of scenarios. Given the inherent uncertainty in identifying the environment, much attention has been given to the problem of learning under distribution shifts \cite{BenDavid2009-theory, mansour-dom-adapt}, where training data may not necessarily be sampled from the target distribution. To tackle this, several works \cite{volpi-2018-gen-data-aug, zhang2021coping, sutter-2021-rob-gen-mdi} have applied the framework of DRO \cite{scarf-min-max, Delage2010-dro-moment, BenTal2013-robust-sol} by assuming that the shift occurs within a neighborhood $\Uc$ of some nominal distribution, typically generating data, and solving~\eqref{eq:dr-obj}. There are many ways to construct $\Uc$ and optimize the objective, and we refer to \cite{Shapiro2021-lectures-sp, Rahimian2022-dro-review} for a thorough review.

A more recent line of work has specialized to finite and unstructured $\Uc = \cbr{Q_1,\dots,Q_k}$, under sampling access to each distribution. Agnostic federated learning \cite{pmlr-v97-mohri19a-agnostic-fl} solves~\eqref{eq:dr-obj} under mixtures of $\Uc$, providing high-probability bounds on the generalization gap of non-uniform exploration and an algorithm with empirical optimization guarantees. Collaborative PAC learning \cite{Blum_Haghtalab_Procaccia_Qiao_2017-collab-pac} focuses on binary classification, with the aim of guaranteeing $\Pb\paren{ \DeltaDR\paren{\Aout_T}\leq \epsilon } \geq 1-\delta$ under a minimal number of samples $T$. The original work of \cite{Blum_Haghtalab_Procaccia_Qiao_2017-collab-pac} assumes realizability and subsequent studies \cite{Chen_Zhang_Zhou_2018-tight-pac, Nguyen_Zakynthinou_2018-improved-pac, Carmon_Hausler_2022-ball-oracle, Haghtalab_Jordan_Zhao_2022-on-demand} extended results to the agnostic case and gave improved rates, along with sample-complexity lower bounds. \cite{pmlr-v195-awasthi23a-open-prob} later solidified the theory and posed several open problems, some of which were recently addressed in \cite{pmlr-v247-peng24b, pmlr-v247-zhang24b} via optimal algorithms.

In this work, we turn our attention to the simple regret $\EE{ \DeltaDR\paren{\Aout_T} }$. For finite decision sets $\Ac$, an integration of the tails reveals that the regret achieved by \cite{Haghtalab_Jordan_Zhao_2022-on-demand} is $O\paren{ \sqrt{ \frac{ \log \abs{\Ac} + k\log k }{T} } }$. When $\Ac\subset\Rb^d$ has Euclidean diameter at most $B>0$ and, for each $x\in\Xc$, the function $r\paren{\cdot,x}$ is both convex and Lipschitz, several studies have proposed comparable rates using game dynamics. Group DRO \cite{Sagawa*2020-group-dro} ensures a rate of $O\paren{ k \sqrt{ \frac{ B^2 + \log k }{T} } }$ and, in the fairness context, \cite{abernethy22a-minmax-fairness} obtains $O\paren{\frac{B}{\sqrt{T}}}$ plus a term that uniformly bounds the generalization gap with high-probability. Subsequently, \cite{zhang2023stochastic} devised strategies with $O\paren{ \sqrt{ \frac{B^2 + k\log k}{T} } }$ regret, matching the lower bound of \cite{soma2022optimal} up to log factors, and additionally studied the setting with distribution-specific sampling budget constraints.

Since the learner does not incur any costs when gathering data, MDL closely resembles PE-MAB \cite{Bubeck2011-pure} under the \textit{fixed budget} regime, where distributions represent the arms. It is standard in the MAB literature to distinguish between distribution-dependent and independent rates. The former typically depends on the suboptimality gaps and scales much faster with $T$. In contrast, the latter holds for worst-case environments for each $T$, resulting in slower regret decay. See \cite[Ch. 33]{Lattimore2020-bandit} for an in-depth discussion. In PE-MAB, \cite{Audibert2010-BestArm} introduced the UCB-E strategy, which improves performance relative to the gaps when compared to uniform exploration. Motivated by these results, we demonstrate analogous faster distribution-dependent rates in the MDL setting and explore a similar contrast between UE and LCB-DR.

\section{Preliminaries}
\label{sec:prelim}

\begin{named}{Notation}
    We frequently use the notation $\brac{k} \coloneqq \cbr{1,\dots,k}$, where $k\in\Nb$. For a measurable space $\Xc$ (we will omit the $\sigma$-algebras), we let $\Pc\paren{\Xc}$ denote the set of all distributions over it. For two real-valued functions $f$ and $g$, we let $f\lesssim g$ and $f\gtrsim g$ denote inequalities up to universal constants. Given values $a,b\in\Rb$, we define $a\vee b \coloneqq \max\cbr{a,b}$ and $a\wedge b \coloneqq \min\cbr{a,b}$.
\end{named}

\subsection{Multi-distribution learning}
Let $\Xc$ be the space where our data lives in and $\Ac$ the space where we make decisions. Given data $X_Q\sim Q\in\Pc\paren{\Xc}$, statistical learning aims to maximize the stochastic objective $\mu\paren{a;Q} = \EE{ r\paren{a,X_Q} }$ with respect to $a\in\Ac$, where $r:\Ac\times\Xc\to\Rb$ is an underlying reward function. In the MDL paradigm, we capture distributional uncertainty by assuming that the distributions come from some uncertainty set $\Uc\subset\Pc\paren{\Xc}$ and instead aim to solve the distributionally robust problem~\eqref{eq:dr-obj}, where our goal is to maximize $\muDR\paren{a} = \min_{Q\in\Uc} \mu\paren{a;Q}$. We measure the performance of a decision $a\in\Ac$ via its \textit{suboptimality gap} $\DeltaDR\paren{a} \coloneqq \muDRstr - \muDR\paren{a}$, where $\muDRstr \coloneqq \max_{a\in\Ac} \muDR\paren{a}$ is the optimal objective value. Throughout this work, we operate under the following assumptions.

\begin{assump}[Finite decision/uncertainty sets]
    $\abs{\Ac}=l$ and $\abs{\Uc}=k$, where $2\leq l,k<\infty$. We will relax this in Section~\ref{sec:ext-inf-dec-set}.
\end{assump}

\begin{assump}[Bounded rewards]
    The reward function $r$ is bounded in $\brac{0,M}$, for some $M>0$.
\end{assump}

To solve~\eqref{eq:dr-obj}, we interact with the environment for a total of $T\in\Nb$ rounds. In each round $t\in\brac{T}$, we (i) select a distribution $Q_t\in\Uc$ and (ii) receive independent data point $X_t\sim Q_t$. After the $T$ rounds, we output a decision $\Aout_T\in\Ac$ with the goal of minimizing the \textit{simple regret} $\EE{\DeltaDR\paren{\Aout_T}}$ or \textit{error probability} $\Pb\paren{ \DeltaDR\paren{\Aout_T}>0 }$. The strategies described in this work are of the form $\Aout_T = \argmax_{a\in\Ac} \muout_T\paren{a}$ for an appropriately constructed proxy $\muout_T:\Ac\to\Rb$. 

\begin{rem}{Simple regret v.s. error probability}{}
    Note that both performance measures are closely related: since $r\in\brac{0,M}$, we have that $\DeltaDR\in\brac{0,M}$ and, thus,
    \begin{align*}
        \DeltaDRmin \Pb\paren{ \DeltaDR\paren{\Aout_T}>0 } \leq \EE{\DeltaDR\paren{\Aout_T}} \leq M\Pb\paren{ \DeltaDR\paren{\Aout_T}>0 }
    \end{align*}
    where $\DeltaDRmin$ is the minimal positive gap (see Section~\ref{sec:comp-meas}).
\end{rem}

\subsection{Complexity measures}
\label{sec:comp-meas}
For each decision $a\in\Ac$, we define its worst performing distribution $Q_a^* \coloneqq \argmin_{Q\in\Uc} \mu\paren{a;Q}$ and the suboptimality gaps $\Delta_a\paren{Q} \coloneqq \mu\paren{a;Q} - \muDR\paren{a}$. Much of the analysis that follows is characterized by the minimal positive gaps
\begin{align*}
    \DeltaDRmin \coloneqq \min\cbr{ \DeltaDR\paren{a}>0 : a\in\Ac } \txtand \Deltaamin \coloneqq \min\cbr{ \Delta_a\paren{Q}>0: Q\in\Uc }
\end{align*}
These quantities are additionally used to define complexity measures
\begin{align*}
    H_a \coloneqq \sum_{ \substack{ Q\in\Uc: \Delta_a\paren{Q}>0 } } \Delta_{a}^{-2}\paren{Q} \txtand C_a \coloneqq \begin{dcases}
        \frac{ \DeltaDR\paren{a} }{ \Deltaamin }, & a\notin \argmax_{a\in\Ac}\muDR\paren{a} \\
        \frac{ \DeltaDRmin }{ \Deltaamin }, & \text{otherwise}
    \end{dcases}
\end{align*}
for each $a\in\Ac$. In pure exploration bandits, $H_a$ is commonly used to characterize the complexity of identifying the optimal arm (e.g., \cite{Audibert2010-BestArm}), which in our setting translates to identifying $Q_a^*$. The intuition behind $C_a$ is that it compares the difficulty of the two tasks we face: when $C_a\leq 1$ for some $a\notin \argmax_{a\in\Ac}\muDR\paren{a}$, or $\DeltaDR\paren{a} \leq \Deltaamin$, it is more challenging to rule out $a$ as suboptimal than it is to identify $Q_a^*$.

\subsection{Algorithmic tools}
For each distribution $Q\in\Uc$, let $X_Q,\rcbr{ \Xarm{Q}{i} }{i=1}{\infty}\iid Q$ be a sequence of independent data points. For each $\paren{t,a,Q} \in \Nb\times\Ac\times\Uc$, we define the empirical mean
\begin{align*}
    \hat\mu_t\paren{a;Q} \coloneqq \frac{1}{t} \rsum{i=1}{t} r\paren{ a,\Xarm{Q}{i} }
\end{align*}
Under a fixed sampling algorithm, let $n_t\paren{Q} \coloneqq \rsum{s=1}{t} \Ib\cbr{Q_s=Q}$ denote the number of times that $Q$ is played up to time $t$. The data received is then given by $X_t = \Xarm{Q_t}{n_t\paren{Q_t}}$.

\section{Non-adaptive strategies}
\label{sec:non-adapt}
We begin by describing two simple non-adaptive strategies. In essence, both sample a fixed number of times from each distribution in $\Uc$ and construct a proxy $\muout_T$ that is the natural empirical version of $\muDR$. Proofs of the results are deferred to Appendix~\ref{app:non-adapt-proofs}.

\subsection{Uniform exploration (UE)}
\label{sec:ue}
The most straight-forward strategy is the idea of \textit{uniform exploration (UE)} (Algorithm~\ref{alg:ue}). As the name suggests, we sample the same number $n\in\Nb$ of times from each distribution, for a total of $T = nk$ samples, and form the empirical proxy
\begin{align*}
    \muout_T\paren{a} = \min_{Q\in\Uc} \hat\mu_n\paren{a;Q}
\end{align*}

\begin{algorithm}
\caption{Uniform exploration (UE)}
\label{alg:ue}
\begin{algorithmic}[1]
    \Require Number of samples $n\in\Nb$.
    \State Sample $n$ times from each distribution $Q\in\Uc$.
    \State Construct $\muout_T\paren{a} = \min_{Q\in\Uc} \hat\mu_n\paren{a;Q}$.
    \Ensure $\Aout_T = \argmax_{a\in\Ac} \muout_T\paren{a}$.
\end{algorithmic}
\end{algorithm}

\begin{thm}{UE regret}{ue-reg}
    Suppose that $n \geq \paren{\frac{8M}{\DeltaDRmin}}^2 \log k$. Then, the UE algorithm attains the following simple regret bound:
    \begin{align*}
        \EE{ \DeltaDR\paren{ \Aout_T } } \leq \sum_{a\in\Ac: \DeltaDR\paren{a}>0} \DeltaDR\paren{a} \exp\paren{ -\frac{n}{2M^2} \brac{ \DeltaDR\paren{a}-8M\sqrt{\frac{\log k}{n}} }^2 }
    \end{align*}
\end{thm}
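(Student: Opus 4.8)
\emph{Proof sketch.} The plan is to bound, for each suboptimal $a\in\Ac$, the output probability $\Pb\paren{\Aout_T=a}$ by reducing the event $\cbr{\Aout_T=a}$ to a comparison involving only the deterministic pair $\paren{a,Q_a^*}$ and a fixed optimal action $a^*$, taking a union bound over $\Uc$ only at that stage, and then absorbing the resulting factor $k$ into the exponent via the hypothesis $n\geq\paren{8/\DeltaDRmin}^2\log k$.

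Concretely, since $\DeltaDR\paren{\Aout_T}=\sum_{a\in\Ac}\DeltaDR\paren{a}\Ib\cbr{\Aout_T=a}$, taking expectations reduces the claim to a bound on each $\Pb\paren{\Aout_T=a}$ with $\DeltaDR\paren{a}>0$. Fix such an $a$ and an optimal $a^*\in\argmax_{a'\in\Ac}\muDR\paren{a'}$. Because $\Aout_T=\argmax_{a'}\muout_T\paren{a'}$, the event $\cbr{\Aout_T=a}$ implies $\muout_T\paren{a}\geq\muout_T\paren{a^*}$. Evaluating the empirical minimum defining $\muout_T\paren{a}$ at the deterministic worst distribution gives $\muout_T\paren{a}=\min_{Q\in\Uc}\hat\mu_n\paren{a;Q}\leq\hat\mu_n\paren{a;Q_a^*}$, while $\muout_T\paren{a^*}=\hat\mu_n\paren{a^*;\widehat Q}$ for $\widehat Q\coloneqq\argmin_{Q\in\Uc}\hat\mu_n\paren{a^*;Q}$. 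Hence
\begin{align*}
  \cbr{\Aout_T=a}\ \subseteq\ \cbr{\hat\mu_n\paren{a;Q_a^*}\geq\hat\mu_n\paren{a^*;\widehat Q}}\ \subseteq\ \bigcup_{Q\in\Uc}\cbr{\hat\mu_n\paren{a;Q_a^*}\geq\hat\mu_n\paren{a^*;Q}},
\end{align*}
and a union bound gives $\Pb\paren{\Aout_T=a}\leq\sum_{Q\in\Uc}\Pb\paren{\hat\mu_n\paren{a;Q_a^*}\geq\hat\mu_n\paren{a^*;Q}}$; note that only a union over $\Uc$, not over $\Ac$, has appeared.

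For each $Q\in\Uc$, the difference $\hat\mu_n\paren{a;Q_a^*}-\hat\mu_n\paren{a^*;Q}$ is an average of $n$ i.i.d.\ increments, each a difference of two $\brac{0,1}$-valued rewards and hence lying in $\brac{-1,1}$ (this holds whether $Q\neq Q_a^*$, where the rewards use independent samples, or $Q=Q_a^*$, where they use the same sample). Its mean equals $\muDR\paren{a}-\mu\paren{a^*;Q}$, which is at most $\muDR\paren{a}-\muDRstr=-\DeltaDR\paren{a}$ since $\mu\paren{a^*;Q}\geq\min_{Q'}\mu\paren{a^*;Q'}=\muDRstr$. Hoeffding's inequality for bounded averages therefore yields
\begin{align*}
  \Pb\paren{\hat\mu_n\paren{a;Q_a^*}\geq\hat\mu_n\paren{a^*;Q}}\ \leq\ \exp\paren{-\frac{n\,\DeltaDR\paren{a}^2}{2}},
\end{align*}
and summing over $Q\in\Uc$ gives $\Pb\paren{\Aout_T=a}\leq k\exp\paren{-n\DeltaDR\paren{a}^2/2}$.

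It remains to convert this into the stated form. Writing $k\exp\paren{-n\DeltaDR\paren{a}^2/2}=\exp\paren{\log k-n\DeltaDR\paren{a}^2/2}$ and expanding, the inequality $\log k-\frac{n}{2}\DeltaDR\paren{a}^2\leq-\frac{n}{2}\paren{\DeltaDR\paren{a}-8\sqrt{\log k/n}}^2$ is, after cancelling the $\frac{n}{2}\DeltaDR\paren{a}^2$ terms, equivalent to $n\geq\frac{1089}{64}\cdot\frac{\log k}{\DeltaDR\paren{a}^2}$. Since $a$ is suboptimal, $\DeltaDR\paren{a}\geq\DeltaDRmin$, so the hypothesis $n\geq\paren{8/\DeltaDRmin}^2\log k=64\log k/\DeltaDRmin^2$ gives $n\geq 64\log k/\DeltaDR\paren{a}^2\geq\frac{1089}{64}\cdot\frac{\log k}{\DeltaDR\paren{a}^2}$ because $64^2>1089$; plugging back into the regret decomposition finishes the proof. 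The one delicate point — and where the constant $8$ enters — is exactly this final trade-off: the union bound must be confined to $\Uc$ (so that $\log k$, not $\log\paren{kl}$, appears) and only afterwards is the leftover factor $k$ paid for \emph{inside} the exponent using the sample-size assumption. Everything else is bookkeeping and one application of Hoeffding's inequality.
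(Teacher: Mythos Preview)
Your proof is correct and takes a genuinely different, more elementary route than the paper. The paper treats the full random variable $\muout_T(a)-\muout_T(a^*)$ directly: it establishes a bounded-differences property for $\Phi_f-\Phi_g$ (Corollary~\ref{cor:bnd-diff-phi-subtr}) and applies McDiarmid's inequality (Corollary~\ref{cor:mcd-conc-Z}), then centers the variable by bounding $\abs{\EE{D_a}}$ via the symmetrization/Rademacher machinery of Appendix~\ref{app:exp-emp-proc-max} (Theorem~\ref{thm:mdl-exp-emp-proc-bnd}), obtaining $\EE{U_a}\leq 4\sqrt{\log k/n}$; the term $8\sqrt{\log k/n}$ in the final bound is exactly $2B$ from this centering step. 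By contrast, you dissolve the minimum in $\muout_T(a)$ at the deterministic $Q_a^*$ and the minimum in $\muout_T(a^*)$ by a union bound over $\Uc$, which reduces everything to $k$ pairwise Hoeffding bounds and yields the tighter intermediate inequality $\Pb\paren{\Aout_T=a}\leq k\exp\paren{-n\Delta_\text{DR}^2(a)/2}$, only afterwards relaxed to the paper's form. Your argument is shorter and avoids both McDiarmid and symmetrization; on the other hand, the paper's heavier route is what generalizes to the NUE setting (Theorem~\ref{thm:nue-reg}), where the Bernstein-type concentration of Appendix~\ref{app:emp-proc-conc-ineq} is needed and a simple pairwise Hoeffding no longer captures the variance structure.
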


Since our empirical proxy $\muout_T$ is not an unbiased estimate of $\muDR$, we end up with an approximation error bounded by $M\sqrt{\frac{\log k}{n}}$. The lower bound on $n$ is then required to apply tail bounds by ensuring that $\DeltaDR\paren{a}-8M\sqrt{\frac{\log k}{n}}\geq0$ for all $a\in\Ac$ (see Appendix~\ref{app:proof-thm-ue-reg}).

\begin{rem}{Small gaps}{}
    When $\DeltaDRmin$ is really small, the lower bound condition on $n$ may be difficult to attain. This may be counterintuitive, for example, when all gaps are small, as we expect the problem to be easy. In such situations, an alternative guarantee is
    \begin{align*}
        \EE{ \DeltaDR\paren{ \Aout_T } } \leq \Delta + \sum_{a\in\Ac: \DeltaDR\paren{a}>\Delta} \DeltaDR\paren{a} \exp\paren{ -\frac{n}{2M^2} \brac{ \DeltaDR\paren{a}-8M\sqrt{\frac{\log k}{n}} }^2 }
    \end{align*}
    for any $\Delta>0$, provided that $n \geq \paren{\frac{8M}{\Delta}}^2 \log k$.
\end{rem}

With some further manipulation, we can additionally obtain a distribution-independent regret bound. 

\begin{cor}{UE distribution-independent regret}{ue-dist-ind-reg}
    Suppose that $n \geq \paren{\frac{8}{\DeltaDRmin}}^2 \log k$ and $M\geq 1$. Then, the UE algorithm attains the following distribution-independent simple regret bound:
    \begin{align*}
        \EE{ \DeltaDR\paren{\Aout_T} } \lesssim M\sqrt{\frac{k\log\paren{kl}}{T}}
    \end{align*}
\end{cor}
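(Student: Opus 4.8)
The plan is to start from the distribution-dependent bound in Theorem~\ref{thm:ue-reg} and turn it into a worst-case bound by optimizing over the gaps. Write $\gamma \coloneqq 8\sqrt{\log k / n}$, so that the summand for each $a$ with $\DeltaDR\paren{a}>0$ is $\DeltaDR\paren{a}\exp\paren{-\tfrac{n}{2}\brac{\DeltaDR\paren{a}-\gamma}^2}$. The hypothesis $n\geq\paren{8/\DeltaDRmin}^2\log k$ is exactly $\gamma\leq\DeltaDRmin$, which guarantees $\DeltaDR\paren{a}-\gamma\geq 0$ for every $a$ appearing in the sum, so the bracket is nonnegative and the elementary bound $u\,e^{-c u^2}\leq \tfrac{1}{\sqrt{2ec}}$ (maximized at $u=1/\sqrt{2c}$) is applicable to $u=\DeltaDR\paren{a}-\gamma$ — but I must be careful, because the prefactor is $\DeltaDR\paren{a}=u+\gamma$, not $u$. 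So first I would split $\DeltaDR\paren{a}\exp\paren{-\tfrac n2 u^2} \leq u\exp\paren{-\tfrac n2 u^2} + \gamma\exp\paren{-\tfrac n2 u^2} \leq \tfrac{1}{\sqrt{ne}} + \gamma$, using $e^{-nu^2/2}\leq 1$ on the second term.

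Next I would sum over $a$: there are at most $l$ terms, giving $\EE{\DeltaDR\paren{\Aout_T}} \lesssim l\paren{\tfrac{1}{\sqrt{n}} + \gamma} = l\paren{\tfrac{1}{\sqrt n} + 8\sqrt{\tfrac{\log k}{n}}} \lesssim \tfrac{l\sqrt{\log k}}{\sqrt n}$. Substituting $n = T/k$ yields $\EE{\DeltaDR\paren{\Aout_T}} \lesssim l\sqrt{\tfrac{k\log k}{T}}$. This is already distribution-independent, but it has an extra factor of $\sqrt l$ compared to the claimed $\sqrt{k\log(kl)/T}$, so the crude "bound each term, multiply by $l$" approach is too lossy and the main obstacle is shaving this factor.

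To fix this I would not bound every term by its maximum. Instead, note that the true bottleneck is only terms with $\DeltaDR\paren{a}$ comparable to $\gamma$ or to $1/\sqrt n$; terms with large gap are exponentially negligible. More precisely, I would go back one step in the proof of Theorem~\ref{thm:ue-reg} (or re-derive it) to the union bound over the event that some suboptimal $a$ beats $a^*$ in the proxy: $\Pb\paren{\DeltaDR\paren{\Aout_T}>0} \leq \sum_{a:\DeltaDR\paren{a}>0}\exp\paren{-\tfrac n2\brac{\DeltaDR\paren{a}-\gamma}^2}$, which when $\gamma\leq\DeltaDRmin$ is at most $\sum_{a}\exp\paren{-\tfrac n8\DeltaDR^2\paren{a}}$ after absorbing $\gamma$ (using $\DeltaDR\paren{a}-\gamma\geq\tfrac12\DeltaDR\paren{a}$ when $\DeltaDR\paren{a}\geq 2\gamma$, and handling $\DeltaDR\paren{a}<2\gamma$ separately since those contribute $\leq l e^{0}$ is again too weak...). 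The cleaner route: use $\EE{\DeltaDR\paren{\Aout_T}} = \int_0^1 \Pb\paren{\DeltaDR\paren{\Aout_T}>s}\,ds \leq \int_0^1 \sum_{a}\exp\paren{-\tfrac n8(s\vee\gamma)^2\cdot\mathbb{1}[\cdots]}\,ds$, bound the number of summands by $l$ but integrate: $l\int_0^\infty e^{-ns^2/8}ds \lesssim l/\sqrt n$ — still $\sqrt l$.

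The genuine source of the $\log(kl)$ (rather than $\sqrt l$) must be that the union bound over $a\in\Ac$ AND $Q\in\Uc$ inside the proof of Theorem~\ref{thm:ue-reg} produces a $\log(kl)$ under the square root before any gap appears, i.e. one should write, for the worst case, $\Pb\paren{\DeltaDR\paren{\Aout_T}>s}\leq 2kl\exp\paren{-2ns^2}$ directly from Hoeffding plus a double union bound (uniform deviation of all $\hat\mu_n\paren{a;Q}$ by $s/2$ forces correct ranking when all gaps exceed $s$), and then $\EE{\DeltaDR\paren{\Aout_T}}\leq s_0 + \int_{s_0}^\infty 2kl\,e^{-2nt^2}dt$; choosing $s_0\asymp\sqrt{\log(kl)/n}$ makes the integral $O(1/(n s_0))$, giving $\EE{\DeltaDR\paren{\Aout_T}}\lesssim\sqrt{\tfrac{\log(kl)}{n}} = \sqrt{\tfrac{k\log(kl)}{T}}$. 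So the plan is: (i) from the uniform-deviation event get $\Pb\paren{\DeltaDR\paren{\Aout_T}>s}\lesssim kl\,e^{-n s^2/2}$; (ii) integrate the tail with cutoff $s_0 = \sqrt{2\log(kl)/n}$; (iii) substitute $n=T/k$. The hypothesis on $n$ is used only to ensure this regime is the relevant one (so the bound from Theorem~\ref{thm:ue-reg} and this one are consistent). The main obstacle, as sketched, is organizing the union bound so that the $l$ enters logarithmically rather than linearly — which is why one works from the uniform-convergence event rather than term-by-term.
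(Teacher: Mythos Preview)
Your final plan is correct, but it takes a different route from the paper. The paper's argument stays entirely within the distribution-dependent bound of Theorem~\ref{thm:ue-reg}: it splits the expectation at a threshold $\Delta$ as
\[
    \EE{\DeltaDR\paren{\Aout_T}} \leq \Delta + \sum_{a:\DeltaDR\paren{a}>\Delta} \DeltaDR\paren{a}\exp\paren{-\tfrac{n}{2}\brac{\DeltaDR\paren{a}-\gamma}^2},
\]
then shows via a monotonicity lemma that the function $x\mapsto x\exp\paren{-\tfrac{n}{2}\paren{x-\gamma}^2}$ is decreasing past a point slightly larger than $\gamma$. Choosing $\Delta=\paren{8\sqrt{\log k}+\sqrt{2\log l}}/\sqrt{n}$ makes each summand at most $\Delta/l$, so the large-gap sum is at most $\Delta$ as well, and the total is $2\Delta\lesssim\sqrt{\log\paren{kl}/n}$. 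The $\log l$ enters precisely through the extra $\sqrt{2\log l}/\sqrt{n}$ in the threshold, which is what pushes each large-gap term below $\Delta/l$.

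Your approach instead abandons Theorem~\ref{thm:ue-reg} and goes back to a uniform Hoeffding bound over all $kl$ pairs $\paren{a,Q}$, yielding the tail estimate $\Pb\paren{\DeltaDR\paren{\Aout_T}>s}\lesssim kl\,e^{-ns^2/2}$, and then integrates with cutoff $s_0\asymp\sqrt{\log\paren{kl}/n}$. This is the textbook high-probability-to-expectation conversion and is arguably more elementary; it also makes clear that the hypothesis $n\geq\paren{8/\DeltaDRmin}^2\log k$ is not actually needed for the distribution-independent rate (it is inherited from Theorem~\ref{thm:ue-reg} in the paper's route but plays no role in yours). The paper's route has the advantage of deriving the worst-case bound as a direct corollary of the gap-dependent one, without re-running any concentration argument.
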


\subsection{Non-uniform exploration (NUE)}
\label{sec:nue}
A natural extension of the UE strategy is to sample a different number of times from each distribution. To address this, \textit{non-uniform exploration (NUE)} (Algorithm~\ref{alg:nue}) samples $n_Q\in\Nb$ times from each distribution $Q\in\Uc$, for a total of $T = \sum_{Q\in\Uc} n_Q$ samples. Similarly, we define the proxy
\begin{align*}
    \muout_T\paren{a} = \min_{Q\in\Uc} \hat\mu_{n_Q}\paren{a;Q}
\end{align*}
Since our goal is to showcase the interplay between sample size and variance, here we focus on real-valued data in $\Xc\subset\Rb$ and Lipschitz reward functions $r$. Let us define mean $\mu_Q \coloneqq \EE{X_Q}$ and variance $\sigma_Q^2 \coloneqq \Var\paren{X_Q}$ for each $Q\in\Uc$. Additionally, let us sort the sample sizes in increasing order as follows: $0 \eqqcolon n_{(0)} \leq n_{(1)} \leq \dots\leq n_{(k)}$ and let $Q_{(j)}$ denote the corresponding distribution in the $j$th position: $n_{Q_{(j)}} = n_{(j)}$. The regret bound presented will rely on the following variance quantities:
\begin{align*}
    V_T &\coloneqq \rsum{j=1}{k} \paren{ n_{(j)}-n_{(j-1)} } \EE{ \max_{ r\in\cbr{j,\dots,k} } \frac{1}{n_{(r)}^2} \brac{ X_{Q_{(r)}} - \mu_{Q_{(r)}} }^2 } \\
    \Sigma^2_T &\coloneqq \EE{ \max_{Q\in\Uc} \frac{1}{n_Q^2} \rsum{i=1}{n_Q} \paren{ \Xarm{Q}{i} - \mu_Q }^2 } \\
    \sigma^2_T &\coloneqq \max_{Q\in\Uc} \frac{\sigma_Q^2}{n_Q}
\end{align*}
Lastly, we make use of the quantity $G_T \coloneqq \frac{ 32M\log k }{ \min_{Q\in\Uc} n_Q } + 8L\sigma_T\sqrt{ 2\log k }$, which we note decreases with the $\cbr{n_Q}$. This is the analogue of $8M\sqrt{\frac{\log k}{n}}$ found in the UE analysis.

\begin{algorithm}
\caption{Non-uniform exploration (NUE)}
\label{alg:nue}
\begin{algorithmic}[1]
    \Require Number of samples $\cbr{n_Q}_{Q\in\Uc}\subset\Nb$ allocated to each distribution.
    \State Sample $n_Q$ times from each distribution $Q\in\Uc$.
    \State Construct $\muout_T\paren{a} = \min_{Q\in\Uc} \hat\mu_{n_Q}\paren{a;Q}$.
    \Ensure $\Aout_T = \argmax_{a\in\Ac} \muout_T\paren{a}$.
\end{algorithmic}
\end{algorithm}

\begin{thm}{NUE regret}{nue-reg}
    Suppose that $r\paren{a,\cdot}$ is $L$-Lipschitz for each $a\in\Ac$, and that $\DeltaDRmin \geq G_T$. Then, the NUE algorithm attains the following simple regret bound:
    \begin{align*}
        &\EE{ \DeltaDR\paren{\Aout_T} } \\
        &\hspace{.5cm} \leq \sum_{a\in\Ac: \DeltaDR\paren{a}>0} \DeltaDR\paren{a} \exp\paren{ - \frac{ \brac{ \DeltaDR\paren{a} - G_T }^2 }{ 16 L^2 \paren{ 2 \sigma_T^2 + \Sigma_T^2 + 6 V_T } + \frac{2\sqrt{6}M}{\min_{Q\in\Uc} n_Q} \brac{ \DeltaDR\paren{a} - G_T } } }
    \end{align*}
\end{thm}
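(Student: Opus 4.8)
The overall plan mirrors the UE proof: bound the simple regret by a union over suboptimal actions of the probability that $\muout_T(a) \geq \muout_T(a^*)$, and for each such event use concentration of $\muout_T(a) = \min_{Q} \hat\mu_{n_Q}(a;Q)$ around $\muDR(a)$. The new ingredient is that, since each distribution is sampled a different number of times $n_Q$, a plain Hoeffding bound on each arm would only give a rate governed by $\min_Q n_Q$; instead we want a Bernstein-type bound whose exponent involves the variance-scaled quantities $\sigma_T^2, \Sigma_T^2, V_T$ rather than just the worst sample count. So the first step is to write, for $a$ with $\DeltaDR(a) > 0$,
\begin{align*}
    \EE{\DeltaDR(\Aout_T)} \leq \sum_{a: \DeltaDR(a)>0} \DeltaDR(a)\, \Pb\paren{ \muout_T(a) \geq \muout_T(a^*) },
\end{align*}
and then reduce $\{\muout_T(a) \geq \muout_T(a^*)\}$ to a statement about the deviations $\muout_T(a) - \muDR(a)$ and $\muDR(a^*) - \muout_T(a^*)$, using $\muDRstr = \muDR(a^*)$ and $\DeltaDR(a) = \muDRstr - \muDR(a)$. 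Because $\muout_T(a^*) = \min_Q \hat\mu_{n_Q}(a^*;Q) \leq \hat\mu_{n_{Q_{a^*}^*}}(a^*; Q_{a^*}^*)$ and similarly the $\min$ for $a$ is attained, the key event is contained in a union (over the $k$ distributions) of one-sided deviations of single empirical means, and the hypothesis $\DeltaDRmin \geq G_T$ is what guarantees the relevant deviation threshold is positive after subtracting the bias term $G_T$ (which I expect comes from controlling $\EE{\sup_Q |\hat\mu_{n_Q}(a;Q) - \muDR(a)|}$ or an analogous supremum, via a symmetrization/contraction argument using the $L$-Lipschitzness of $r(a,\cdot)$ to pass from $r(a,\Xarm{Q}{i})$ deviations to the real-valued $\Xarm{Q}{i} - \mu_Q$ deviations that feed the variance quantities).

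The heart of the argument is the promised Bernstein-type concentration inequality. I would state and prove a lemma of the following shape: for a collection of independent averages $\hat\mu_{n_Q}(a;Q)$ with $r(a,\cdot)$ being $L$-Lipschitz and $r \in [0,1]$, for any $\lambda > 0$,
\begin{align*}
    \Pb\paren{ \min_{Q\in\Uc} \hat\mu_{n_Q}(a;Q) \leq \muDR(a) - \lambda - G_T } \leq \exp\paren{ -\frac{\lambda^2}{16 L^2\paren{2\sigma_T^2 + \Sigma_T^2 + 6 V_T} + \frac{2\sqrt6}{\min_Q n_Q}\lambda} }.
\end{align*}
To get this I would work with the centered empirical process $Z_Q \coloneqq \mu(a;Q) - \hat\mu_{n_Q}(a;Q)$, split the $\min$ over $Q$ into a bias part (expectation of the sup, bounded by $G_T$ using symmetrization + the Lipschitz contraction to reduce to Rademacher averages of $(\Xarm{Q}{i}-\mu_Q)/n_Q$, then a maximal-inequality bound in terms of $\sigma_T$ and $1/\min_Q n_Q$) and a fluctuation part handled by a bounded-differences/Efron–Stein style martingale or by McDiarmid with variance-aware increments. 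The reason all three of $\sigma_T^2$, $\Sigma_T^2$, $V_T$ appear is, I expect, that one decomposes the variance proxy in the exponential moment bound: $\sigma_T^2 = \max_Q \sigma_Q^2/n_Q$ is the "population" variance term, $\Sigma_T^2$ its empirical counterpart arising from a self-bounded/Bernstein step where the conditional variance is itself replaced by an empirical average, and the staircase quantity $V_T$ (the telescoping sum over the sorted $n_{(j)} - n_{(j-1)}$ of maxima of $(\Xarm{Q}{i}-\mu_Q)^2/n_{(r)}^2$) is exactly the bound one gets for the variance of the $\min$ over distributions when sample sizes are heterogeneous — sorting makes the $\min$ a sum of increments each of which can only be controlled by the distributions with at least that many samples. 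The linear-in-$\lambda$ term $\frac{2\sqrt6}{\min_Q n_Q}\lambda$ is the standard Bernstein "range" term, with the range being $O(1/\min_Q n_Q)$ since each increment $r(a,\Xarm{Q}{i})/n_Q$ is at most $1/n_Q$.

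The final step is bookkeeping: apply the lemma with $\lambda = \DeltaDR(a) - G_T$ (positive by the hypothesis $\DeltaDRmin \geq G_T$, applied to both $a$ and to $a^*$ after noting the gap from $a^*$'s side is at least $\DeltaDRmin$ as well, or absorbing constants), combine the two one-sided bounds — one for $\muout_T(a)$ deviating up and one for $\muout_T(a^*)$ deviating down — into the single exponential (possibly losing a factor of $2$ inside, which I'd absorb into the constants $16$ and $2\sqrt6$), and sum over $a$ with $\DeltaDR(a) > 0$. The main obstacle I anticipate is getting the variance proxy in the exponential to come out as precisely $16L^2(2\sigma_T^2 + \Sigma_T^2 + 6V_T)$ rather than some looser aggregate: this requires carefully choosing whether to bound the conditional variance of the $\min$-of-averages process by its population value, its empirical value, or the sorted-staircase value at each stage of the Bernstein/Efron–Stein argument, and tracking how the Lipschitz constant $L$ and the numeric constants propagate through symmetrization and the contraction inequality. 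A secondary subtlety is ensuring the reduction from $\{\muout_T(a) \geq \muout_T(a^*)\}$ to one-sided deviations does not secretly require a union bound over all $Q \in \Uc$ that would reintroduce a $\log k$ outside the exponential — I expect this is handled because $\muout_T$ is a $\min$ (so deviating \emph{upward} for $\muout_T(a)$ needs \emph{all} distributions to deviate, which only helps) while the downward deviation of $\muout_T(a^*)$ at its own minimizer $Q_{a^*}^*$ is a single-arm event, so no extra $k$ factor is incurred beyond what is already inside $G_T$, $\sigma_T$, etc. via their own $\log k$'s.
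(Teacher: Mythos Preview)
Your overall architecture is right---decompose the regret as a sum over suboptimal $a$, center using the bias $G_T$ (which is indeed $2\EE{U_a}$, where $U_a = \max_Q |\hat\mu_{n_Q}(a;Q) - \mu(a;Q)|$ is bounded via symmetrization and Lipschitz contraction), and then apply a Bernstein-type inequality built from exponential Efron--Stein plus self-bounding, with the staircase $V_T$ emerging exactly from sorting the $n_Q$'s when specializing back to heterogeneous sample sizes. That matches the paper.

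Where you diverge is in the splitting step. The paper does \emph{not} break $\{\muout_T(a)\geq \muout_T(a^*)\}$ into two one-sided deviations. Instead it centers the difference directly: writing $Z_f = \min_Q \hat\mu_{n_Q}(a;Q)$ and $Z_g = \min_Q \hat\mu_{n_Q}(a^*;Q)$, it bounds $\EE{Z_g - Z_f} \geq \DeltaDR(a) - G_T$ using $|\EE{D_a}|,|\EE{D_{a^*}}|\leq G_T/2$, and then applies its Bernstein inequality to the single random variable $Z_{f,g} \coloneqq Z_f - Z_g$. The concentration result (the paper's Theorem~B.2) is stated and proved specifically for such a difference of two minima over the \emph{same} data; in the Efron--Stein step the bound on $V^+$ naturally produces $\sigma_g^2,\Sigma_g^2$ from the $g$-side and $V_f$ from the $f$-side, and the self-bounding is applied to a convex combination of the two pieces. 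This joint treatment is exactly what yields the constants $16L^2(2\sigma_T^2+\Sigma_T^2+6V_T)$ and $2\sqrt{6}/\min_Q n_Q$ with no extra factor of $2$ outside. Your splitting route can be made to work, but you would have to invoke the same empirical-process Bernstein machinery separately on each piece and then union-bound, which loses constants relative to the stated theorem.

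There is also a genuine error in your final paragraph: the downward deviation of $\muout_T(a^*)$ is \emph{not} a single-arm event. If $\min_{Q}\hat\mu_{n_Q}(a^*;Q) \leq \muDRstr - \lambda$, the culprit can be any $Q\in\Uc$, not just $Q_{a^*}^*$ (you had the direction right for the upward deviation of $\muout_T(a)$, but reversed it here). A naive arm-by-arm argument therefore \emph{would} incur a factor $k$ outside the exponential. The reason no such factor appears is not a single-arm reduction but the Talagrand/Efron--Stein concentration itself, which controls the fluctuation of the minimum as a whole; the $\log k$ dependence is already absorbed into $G_T$ on the bias side and does not reappear in the fluctuation bound. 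Once you apply the Bernstein inequality to $Z_{f,g}$ directly, as the paper does, this issue never arises.
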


As intuition suggests, the definitions imply that sampling more from distributions with higher variance yields better rates. On the other hand, due to the presence of $\min_{Q\in\Uc} n_Q$ in the bound, it may also be favorable to balance this principle with ensuring that no distribution is significantly undersampled.

\subsection{Uniform v.s. non-uniform exploration}
\label{sec:ue-vs-nue}

We briefly note that the NUE bound recovers the UE one. Suppose that the sample sizes are uniform and equal to $n$, and let us make the mild assumption that $n\geq\log k$. Instead of exploiting the Lipschitzness of $r$ in Section~\ref{sec:berns-ineq-og-sett}, we can exploit its boundedness to readily conclude that $V_T\lesssim M^2/n$, which also serves as an upper bound on $\sigma_T^2$ and $\Sigma_T^2$. Since $\DeltaDR\paren{a}\leq M$, the denominator in the exponential term of the NUE regret is then upper bounded by $\sim M^2/n$. By additionally bounding $G_T\lesssim M\sqrt{\frac{\log k}{n}}$, we recover the UE bound.

The benefit of using non-uniform sampling, however, is revealed by examining how variance plays a role in the NUE bound. Let us more generally express the probability of selecting a suboptimal arm $a\in\Ac$ for UE and NUE as follows (see Appendix~\ref{app:non-adapt-proofs}):
\begin{align*}
    \underbrace{ \exp\paren{ - \frac{n}{M^2} \brac{ \Delta_\text{DR}\paren{a} - B_T }^2 } }_{\text{UE}} \quad\text{v.s.}\quad \underbrace{ \exp\paren{ - \frac{ \brac{ \Delta_\text{DR}\paren{a} - B_T }^2 }{ \sigma_T^2 + \Sigma_T^2 + V_T + \frac{M}{\min_Q n_Q} \brac{ \Delta_\text{DR}\paren{a} - B_T } } } }_{\text{NUE}}
\end{align*}
where we have omitted constants. Here, $B_T$ is a quantity that decreases with the sample size and is the same in both rates. In Theorems~\ref{thm:ue-reg} and \ref{thm:nue-reg}, we set it to $M\sqrt{\frac{\log k}{n}}$ and $G_T$, respectively, but either choice applies. To mirror the standard Hoeffding v.s. Bernstein discussion, consider a small-sample regime where $\Delta_\text{DR}\paren{a}-B_T$ is suitably small. The comparison then reduces to $\frac{M^2}{n}$ (for UE) v.s. $\sigma_T^2 + \Sigma_T^2 + V_T$ (for NUE), where the smaller term is better. Note that $M$ captures the range of the reward function $r$, while $\sigma_T^2,\Sigma_T^2$ and $V_T$ capture the variance of the distributions in $\Uc$. The latter is more nuanced and can be favorable when the reward takes large values but the data concentrates in a small region. This shows that NUE can be better when the learner allocates more samples to distributions with higher variance.

\subsection{Bounds on variance quantities}
\label{sec:bnd-var-quant}
While the variance quantities introduced seem hard to control and lack interpretability, here we highlight some strategies and examples to mitigate this issue. Proofs of all results below are deferred to Appendix~\ref{app:ue-vs-nue}.

\subsubsection{Crude bound}
Note the variance hierarchy $\sigma_T^2 \leq \Sigma_T^2 \leq V_T$. To unify them, we can bound the max with a sum to get $V_T \leq \sum_{Q\in\Uc} \frac{\sigma_Q^2}{n_Q}$, which we can then substitute all three terms with. However, this results in a linear dependence on $k$ that we aim to avoid.

\subsubsection{Bounding \texorpdfstring{$\Sigma_T^2$}{SigmaT2}}
\label{sec:bound-VT}
Suppose that our data is bounded: $X_Q\in\brac{0,1}$ for each $Q\in\Uc$. Then we can establish the following upper bound:
\begin{align*}
    \Sigma_T^2 \lesssim \sqrt{\frac{\log k}{\min_{Q\in\Uc} n_Q^3}} + \sigma_T^2
\end{align*}
Since the first term on the right-hand side decays faster than $O\paren{\frac{1}{\min_{Q\in\Uc}n_Q}}$, we can focus our attention on $\sigma_T^2$, which is a more interpretable quantity.

\subsubsection{Bounding \texorpdfstring{$V_T$}{VT}}
\label{sec:bnd-V_T}
The most formidable quantity is $V_T$, but we can readily relate it to $\Sigma_T^2$:
\begin{align*}
    V_T \leq \min\cbr{\max_{Q\in\Uc} n_Q, k} \Sigma_T^2
\end{align*}
In a setting where $k$ is not too large, this result shows that control over $\Sigma_T^2$ also ensures control over $V_T$.

For a more concrete example, suppose that $\Uc = \cbr{Q_1,\dots,Q_k}$, where $Q_1,\dots,Q_{k-1}$ share a common small variance $\sigma^2$ and $Q_k$ has a much larger variance $\nu^2 \gg \sigma^2$. In addition, suppose that $Q_1,\dots,Q_{k-1}$ are supported in $\brac{0,1}$. Consider the NUE procedure with $n$ samples from each $Q_1,\dots,Q_{k-1}$ and $m = T - n\paren{k-1} \geq n$ samples (where $T\geq nk$ is the total number of samples) from $Q_k$. Intuitively, we would like for $m\gg n$ since $Q_k$ is harder to learn (i.e., has more variability). This can be reflected in the strong variance:
\begin{align*}
    V_T \lesssim \frac{\sqrt{ \log k } + \sigma^2}{n} + \frac{\nu^2}{T-nk}
\end{align*}
Comparing with the UE counterpart (and ignoring $\sigma_T^2$ and $\Sigma_T^2$ since $V_T$ is the dominating term) $M^2k/T$, we note that NUE can decay much faster when $\nu^2,M^2,k$ and $T$ are large relative to $\sigma^2$ and $n$.

For example, consider $\sigma^2=1/4$ and $\nu^2=M=k=C>1$. Suppose that $n$ sample are allocated to $Q_1,\dots,Q_{k-1}$ and $n\paren{C+1}$ samples to $Q_k$. Let us use the first bound of Theorem~\ref{thm:mdl-exp-emp-proc-bnd}: $B_T \asymp M\sqrt{\frac{\log k }{\min_{Q\in\Uc}n_Q}} \asymp C\sqrt{\frac{\log C}{n}}$. Note that this also holds for UE since it allocates $2n$ samples to each distribution. Using the fact that $V_T\lesssim \frac{\sqrt{\log C}}{n}$, the exact rates for both strategies are then
\begin{align*}
    \underbrace{ \exp\paren{ - \frac{n}{C^2} \brac{ \Delta_\text{DR}\paren{a} - C\sqrt{\frac{\log C}{n}} }^2 } }_{\text{UE}} \quad\text{v.s.}\quad \underbrace{ \exp\paren{ - \frac{ n\brac{ \Delta_\text{DR}\paren{a} - C\sqrt{\frac{\log C}{n}} }^2 }{ \sqrt{\log C} + C \brac{ \Delta_\text{DR}\paren{a} - C\sqrt{\frac{\log C}{n}} } } } }_{\text{NUE}}
\end{align*}
Introduce $\epsilon_a \coloneqq \DeltaDR\paren{a}-C\sqrt{\frac{\log C}{n}}$. Then the comparison becomes $C^2$ (for UE) v.s. $\sqrt{\log C} + C\epsilon_a$ (for NUE), where the smaller term is better. Since $\epsilon_a\leq C$, we see that the latter is always smaller and, when $\epsilon_a\ll C$, NUE provides a significantly sharper bound for this action. We highlight that, in this example, the quantity $\min_{Q\in\Uc}n_Q$ in the NUE rate did not pose an issue, as it is equal, up to constants, to the UE allocations (i.e., $n$ v.s. $2n$). In addition, applying the Bernstein bound to UE does not help since the variance quantities are still polynomial in $C$.

\section{Optimism}
\label{sec:optimism}
As opposed to the non-adaptive strategies covered thus far, the next algorithm we present makes sampling decisions as it interacts with the environment. For this analysis, we additionally operate under the following uniqueness assumption.

\begin{assump}[Reward bound and unique optima]
\label{assump:unique}
    We assume that $r\in\brac{0,1}$; i.e., we specialize to $M=1$. Moreover, assume that $a^* = \argmax_{a\in\Ac}\muDR\paren{a}$ and $Q_a^* = \argmin_{Q\in\Uc}\mu\paren{a;Q}$ are the \textit{unique} optimal decision and the \textit{unique} worst-case distribution for $a$, respectively.
\end{assump}

As is standard in UCB-style algorithms, for some choice of parameter $\epsilon>0$, we define \textit{index}
\begin{align*}
    \LCB_t\paren{Q;a,\epsilon} \coloneqq \hat\mu_{ n_{ t }\paren{Q} }\paren{ a;Q } - \sqrt{ \frac{\epsilon}{ n_t\paren{Q} } } \quad\forall \paren{t,a,Q} \in \Nb\times\Ac\times\Uc
\end{align*}
which represents a \textit{lower confidence bound (LCB)} on the true mean $\mu\paren{a;Q}$. At a high-level, the \textit{LCB-DR} strategy (Algorithm~\ref{alg:ucb-dr}) iterates through each decision $a\in\Ac$ and performs a modified version of UCB-E \cite{Audibert2010-BestArm} to identify $Q_a^*$. The modification takes advantage of the fact that data sampled in a previous round can be reused for the current one. In essence, we analyze UCB-E when each distribution starts the game with a certain number of pulls. Intuitively, if some distribution has already been played sufficiently many times, it will not be played again in this round, yielding an improved sample complexity.

For completeness, we initiate the procedure by sampling from each distribution once; that is, $n_k\paren{Q} \coloneqq 1$ for each $Q\in\Uc$. As a result, we define $T_0 \coloneqq \bar T_0 \coloneqq k$ to be the total number of samples gathered before the game starts. The inputs to the algorithm are a permutation $\paren{a_1,\dots,a_l}$ of $\Ac$, dictating the order in which decisions are iterated through, and non-negative index parameters $\paren{\epsilon_1,\dots,\epsilon_l}$.
% satisfying
% \begin{align}
%     \epsilon_j &\geq \frac{25}{36} \Delta_{ a_j,\text{min} }^{2} \paren{ u_{j-1}-1 } \label{eq:ucb-dr-eps-lb}
% \end{align}
The procedure then works as follows: at each round $j\in\brac{l}$,
\begin{outline}[enumerate]
\1 Since we reuse samples from previous rounds, some distributions may already have enough samples by the start of the current round and, thus, may not be sampled from at all. We define $\Uc_j$ as a proxy for the arms that will be played in this round. Using the convention $\Delta_{a_j}\paren{Q_{a_j}^*} \coloneqq \Delta_{a_j,\text{min}}$, we set
\begin{align*}
    \Uc_j \coloneqq \cbr{ Q\in\Uc: n_{ \bar T_{j-1} }\paren{Q} < \frac{36}{25} \epsilon_j \Delta^{-2}_{a_j}\paren{Q} }
\end{align*}
and
\begin{gather*}
    k_j \coloneqq \abs{\Uc_j}\Ib\cbr{Q_{a_j}^*\in\Uc_j},\quad \tilde T_j \coloneqq \sum_{ Q\in\Uc_j } n_{ \bar T_{j-1} }\paren{Q}, \quad H_j \coloneqq \sum_{ Q\in \Uc_j } \Delta_{a_j}^{-2}\paren{Q}
\end{gather*}

\1 Let $\bar T_{j-1} \coloneqq \rsum{r=0}{j-1} T_r$ denote the total number of samples obtained up to and including round $j-1$. Allocate
\begin{align}
    T_j \geq \frac{36}{25} \epsilon_j H_j - \tilde T_j + k_j \label{eq:lcb-Tj-lb}
\end{align}
samples to this round such that $\bar T_{j-1}+T_j \leq u_j$ for some deterministic quantity $u_j$. This simply ensures that we don't choose $T_j$ arbitrarily large, but the specific value of $u_j$ is less important as the regret decays logarithmically (relative to sample size) with it.

\1 For each $t=\bar T_{j-1}+1,\dots,\bar T_{j}$, sample
\begin{align*}
    X_t \sim Q_t \coloneqq \argmin_{Q\in\Uc} \LCB_{t-1}\paren{Q;a_j,\epsilon_j}
\end{align*}
In essence, we play the modified UCB-E for $T_j$ rounds on expected rewards $\cbr{\mu\paren{a_j;Q}}_{Q\in\Uc}$. We emphasize that the learner \textit{does not} need to know $\Uc_j$, since the minimization is over all of $\Uc$.

\1 Define 
\begin{align*}
    \hat Q_j \coloneqq \argmin_{Q\in\Uc} \hat\mu_{ n_{\bar T_j}\paren{Q} }\paren{a_j;Q} \txtand \muout_T\paren{a_j} \coloneqq \hat\mu_{ n_{\bar T_j}\paren{ \hat Q_j } }\paren{a_j;\hat Q_j}
\end{align*}
Intuitively, $\hat Q_j$ and $\muout_T$ are proxies for $Q_{a_j}^*$ and $\muDR$, respectively.
\end{outline}
Finally, after gathering $T \coloneqq \rsum{j=0}{l} T_j$ total samples, we maximize the proxy objective: $\Aout_T \coloneqq \argmax_{a\in\Ac} \muout_T\paren{a}$. By analyzing the optimality of the modified UCB-E algorithm (see Appendix~\ref{app:mod-ucb-e-proof}), we can then reach the following conclusion.

\begin{algorithm}
\caption{LCB-DR}
\label{alg:ucb-dr}
\begin{algorithmic}[1]
    \Require Initial number of samples $T_0 = \bar T_0 = k$, permutation $\paren{a_1,\dots,a_l}$ of $\Ac$ and index parameters $\paren{\epsilon_1,\dots,\epsilon_l}$.
    \For{$j=1,\dots,l$}
        \State Define proxy set $\Uc_j$ and quantities $k_j$, $\tilde T_j$ and $H_j$.
        \State Allocate $T_j$ samples to this round.
        \For{$t=\bar T_{j-1}+1,\dots,\bar T_{j}$}
            \State Sample data point $X_t \sim Q_t$.
        \EndFor
        \State Define proxies $\hat Q_j$ and $\muout_T\paren{a_j}$.
    \EndFor
    \Ensure $\Aout_T = \argmax_{a\in\Ac} \muout_T\paren{a}$.
\end{algorithmic}
\end{algorithm}

\begin{thm}{LCB-DR error probability}{ucb-dr-err}
Under Assumption~\ref{assump:unique}, the LCB-DR algorithm attains the following error probability:
\begin{align*}
    \Pb\paren{ \Aout_T \neq a^* } &\leq 2k \rsum{j=1}{l} u_j \exp\paren{ - \frac{ 2\paren{ C_{a_j}^2\wedge 1 } \epsilon_j }{ 25 } }
\end{align*}
% \begin{align*}
%     \Pb\paren{ \Aout_T \neq a^* } &\leq 2k \rsum{j=1}{l} u_j \exp\paren{ - \frac{ \paren{ C_{a_j}^2\wedge 1 } \paren{T_j + \tilde T_j - k_j} }{ 18 H_j } }
% \end{align*}
\end{thm}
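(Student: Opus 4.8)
The plan is to (i) reduce the failure event $\cbr{\Aout_T\neq a^*}$ to a disjunction of per-round ``proxy misestimation'' events, (ii) contain each such event in the failure event of the modified UCB-E run performed during that round, and (iii) invoke the analysis of modified UCB-E (Appendix~\ref{app:mod-ucb-e-proof}) to bound each round's failure probability, then union bound over the $l$ rounds.

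\emph{Step 1 (error decomposition).} Since $\Aout_T=\argmax_{a\in\Ac}\muout_T\paren{a}$, on $\cbr{\Aout_T\neq a^*}$ there is a suboptimal $a$ with $\muout_T\paren{a}\geq\muout_T\paren{a^*}$. Set per-round tolerances $\delta_j\coloneqq\tfrac12\DeltaDR\paren{a_j}$ when $a_j\neq a^*$ and $\delta_j\coloneqq\tfrac12\DeltaDRmin$ when $a_j=a^*$. If $\abs{\muout_T\paren{a_j}-\muDR\paren{a_j}}<\delta_j$ held for every $j$, then for any suboptimal $a$ (processed in some round, with $a^*$ processed in another) we would get $\muout_T\paren{a^*}>\muDRstr-\tfrac12\DeltaDRmin\geq\muDR\paren{a}+\tfrac12\DeltaDR\paren{a}>\muout_T\paren{a}$, contradicting the error event. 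Hence
\begin{align*}
    \cbr{\Aout_T\neq a^*}\subseteq\rbcup{j=1}{l}\cbr{\abs{\muout_T\paren{a_j}-\muDR\paren{a_j}}\geq\delta_j}.
\end{align*}

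\emph{Step 2 (from proxy error to a UCB-E failure).} Fix round $j$ and recall $\muout_T\paren{a_j}=\hat\mu_{n_{\bar T_j}(\hat Q_j)}\paren{a_j;\hat Q_j}=\min_{Q\in\Uc}\hat\mu_{n_{\bar T_j}(Q)}\paren{a_j;Q}$, attained at $\hat Q_j$. Let $\Gc_j$ be the clean event for round $j$ on which the modified UCB-E (a) correctly identifies the worst-case distribution, $\hat Q_j=Q_{a_j}^*$, and (b) estimates its mean to accuracy $\abs{\hat\mu_{n_{\bar T_j}(Q_{a_j}^*)}\paren{a_j;Q_{a_j}^*}-\mu\paren{a_j;Q_{a_j}^*}}<\delta_j$. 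Since $\mu\paren{a_j;Q_{a_j}^*}=\muDR\paren{a_j}$, on $\Gc_j$ we get $\abs{\muout_T\paren{a_j}-\muDR\paren{a_j}}<\delta_j$; thus $\cbr{\abs{\muout_T\paren{a_j}-\muDR\paren{a_j}}\geq\delta_j}\subseteq\Gc_j^c$ and it suffices to bound $\Pb(\Gc_j^c)$.

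\emph{Step 3 (the modified UCB-E bound and the role of $C_{a_j}$).} The set $\Uc_j$ and the parameter condition~\eqref{eq:ucb-dr-eps-lb} are arranged so that, during round $j$, every distribution outside $\Uc_j$ (other than $Q_{a_j}^*$) already carries at least $\tfrac{36}{25}\epsilon_j\Delta_{a_j}^{-2}\paren{Q}$ samples and, on the concentration event underlying UCB-E, is never selected again; round $j$ then behaves like a fresh run of UCB-E on the $k_j$-armed instance $\cbr{\mu\paren{a_j;Q}}_{Q\in\Uc_j}$ (worst arm $Q_{a_j}^*$, minimal positive gap $\Delta_{a_j,\min}$, complexity $H_j+\Delta_{a_j,\min}^{-2}$), warm-started with $\tilde T_j$ reusable pulls plus $T_j-k_j$ new ones, where by definition $T_j+\tilde T_j-k_j=\tfrac{36}{25}\epsilon_j\paren{H_j+\Delta_{a_j,\min}^{-2}}$, i.e. $\epsilon_j$ is exactly the Audibert--Bubeck exploration parameter for this budget. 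Appendix~\ref{app:mod-ucb-e-proof} then yields, after a union bound over the $\leq u_j$ time steps touched by round $j$ and the $k_j\leq k$ arms,
\begin{align*}
    \Pb(\Gc_j^c)\leq 2ku_j\exp\paren{-\frac{\paren{C_{a_j}^2\wedge1}\paren{T_j+\tilde T_j-k_j}}{18\paren{H_j+\Delta_{a_j,\min}^{-2}}}}.
\end{align*}
The factor $C_{a_j}^2\wedge1$ records the extra cost of requirement (b): identifying $Q_{a_j}^*$ needs only accuracy $\asymp\Delta_{a_j,\min}$ on the leading arm, whereas separating $a_j$ from $a^*$ needs accuracy $\delta_j$, and $C_{a_j}=2\delta_j/\Delta_{a_j,\min}$ in both cases $a_j\neq a^*$ and $a_j=a^*$; so when $C_{a_j}\geq1$ the sharper estimate is automatic (factor $1$), and when $C_{a_j}<1$ the leading arm must be pulled a factor $\asymp C_{a_j}^{-2}$ more, rescaling the effective budget in the exponent by $C_{a_j}^2$. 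Summing this bound over $j\in[l]$ and combining with Step 1 gives the claim. The crux is Step 3: pushing the UCB-E identification analysis through in the presence of arbitrary pre-collected data (so the warm start $\tilde T_j$ genuinely offsets the needed new budget) while tracking constants sharply enough to extract the refined $C_{a_j}^2\wedge1$ dependence — precisely the content of Appendix~\ref{app:mod-ucb-e-proof}.
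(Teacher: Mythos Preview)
Your proposal is correct and follows essentially the same route as the paper: reduce $\cbr{\Aout_T\neq a^*}$ to per-round proxy-accuracy failures via the tolerances $\delta_j$, contain each in the complement of a Hoeffding-type concentration event over $k$ arms and $u_j$ time steps, invoke the deterministic modified UCB-E analysis (Theorem~\ref{thm:mod-ucb-e-opt}) under that event to get $\hat Q_j=Q_{a_j}^*$ and $\abs{\muout_T(a_j)-\muDR(a_j)}<\delta_j$, then union bound. The one cosmetic discrepancy is that the paper defines the clean event $A_j$ directly as the Hoeffding event with the sharpened radius $\tfrac{C_{a_j}\wedge 1}{5}\sqrt{\epsilon_j/t}$ (which both feeds the $\tfrac{1}{5}\sqrt{\epsilon_j/t}$ hypothesis of Theorem~\ref{thm:mod-ucb-e-opt} and, combined with that theorem's bound on $n_T(Q^*)$, delivers the $\delta_j$ accuracy), whereas you phrase $\Gc_j$ in terms of the conclusions; since $A_j\subset\Gc_j$ this is immaterial.
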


Note that regret decays with $\epsilon_j$, so to ensure good dependence on the sample sizes, our goal is to make the lower bound~\eqref{eq:lcb-Tj-lb} as tight as possible. When it holds with equality, we can set $u_0=k$ and $u_j=k\paren{j+1} + \frac{72}{25} \rsum{r=1}{j} \epsilon_r H_{a_r}$ (see Appendix~\ref{app:mod-ucb-e-an}). In addition, under equality, we have that $\epsilon_j = \frac{25}{36} \frac{ T_j + \tilde T_j - k_j }{ H_j }$, so that the decay of each term scales with $O\paren{\frac{ \paren{ C_{a_j}^2\wedge 1 } \paren{T_j + \tilde T_j - k_j} }{ H_j }}$. Intuitively, at each round $j\in\brac{l}$, the sample complexity depends on the difficulty of identifying the worst-case distribution $Q_{a_j}^*$, which, as in PE-MAB, is controlled by the suboptimality gaps $\cbr{\Delta_{a_j}\paren{Q}}_{Q\in\Uc}$.

\begin{rem}{Improvement over UCB-E}{}
We highlight the importance of using samples obtained in previous rounds: as opposed to the standard UCB-E analysis, we have the additional $\tilde T_j$ contribution, we only offset by $k_j\leq k$, and the complexity measure $H_j$ improves upon $H_{a_j}$ by only summing over a subset of $\Uc$.
\end{rem}

In practice, to tighten the lower bound~\eqref{eq:lcb-Tj-lb}, we must deal with two sources of uncertainty: (i) the proxy set $\Uc_j$ and (ii) the suboptimality gaps $\Delta_{a_j}\paren{Q}$. Let us address these separately:
\begin{enumerate}
    \item[(i)] Recall that $\Uc_j$ is a proxy for the set of arms $\Uc_j'$ that will be played in round $j$. In Appendix~\ref{app:mod-ucb-e-proof}, we show that with high-probability, $\Uc_j'\subset\Uc_j$. In fact, they will be equal except for arms satisfying $\frac{4}{25} \epsilon_j \Delta^{-2}_{a_j}\paren{Q} < n_{ \bar T_{j-1} }\paren{Q} < \frac{36}{25} \epsilon_j \Delta^{-2}_{a_j}\paren{Q}$. As a result, $\tilde T_j$ is approximately the number of previously collected samples from arms that \textit{will be played} in this round. We can thus keep track of an evolving lower bound as we collect data. Note that we should choose $\epsilon_j$ large enough so that we uncover $\Uc_j'$ before \eqref{eq:lcb-Tj-lb} becomes loose. In addition, we can trivially bound $k_j\leq k$, since it will typically be negligible relative to $\tilde T_j$ and $H_j$.

    \item[(ii)] To handle $H_j$, we can estimate the $\Delta_{a_j}\paren{Q}$ online as data is collected in round $j$, which has been empirically shown in \cite{Audibert2010-BestArm} to perform well. Note that the sum being over the unknown $\Uc_j$ is not much of an issue since, as discussed above, it is approximately the same as the true set of arms played.
\end{enumerate}
In Section~\ref{sec:experiments}, we provide an experiment on synthetic data that concretely implements these ideas.

% \begin{rem}{Random $T$}{}
%     Note that the bound of Theorem~\ref{thm:ucb-dr-err} is deterministic, as $\frac{ T_j + \tilde T_j - k_j }{ H_j } = \frac{36}{25}\epsilon_j$, but the total number of samples $T = k + \rsum{j=1}{l} \paren{ \frac{36}{25} \epsilon_jH_j - \tilde T_j + k_j }$ is random.
% \end{rem}

\subsection{Adaptive v.s. non-adaptive}
\label{sec:ucb-dr-vs-ue}
We saw in Section~\ref{sec:ue-vs-nue} that NUE can boost performance by leveraging the interplay between sample size and variance. In a similar spirit, LCB-DR provides a complementary advantage over UE by adaptively allocating samples.

Focusing on the dominating terms, the probability of selecting a suboptimal arm $a_j\in\Ac$, that is in the $j$th permutation position for LCB-DR, is $\approx\exp\paren{ -\frac{T \Delta_\text{DR}^2\paren{a_j}}{k} }$ for UE and $\approx \exp\paren{ - \frac{ \paren{ C_{a_j}^2\wedge 1 } \paren{ T_j + \tilde T_j } }{ H_{j} } }$ for LCB-DR when the lower bound~\eqref{eq:lcb-Tj-lb} is tight. Extracting the quantity inside the exponential, we break it down into two cases:
\begin{outline}
    \1 $\DeltaDR\paren{a_j} \leq \Delta_{a_j,\text{min}}$ (or $C_{a_j}\leq 1)$: intuitively, this means that it is more difficult to rule out $a_j$ as suboptimal than to identify $Q_{a_j}^*$. Then, the comparison reduces to $\frac{T}{k \Delta_{a_j,\text{min}}^{-2}}$ (for UE) v.s. $\frac{T_j+\tilde T_j}{H_j}$ (for LCB-DR).

    \1 $\Delta_{a_j,\text{min}} \leq \DeltaDR\paren{a_j}$ (or $C_{a_j}\geq 1$): intuitively, this means that it is more difficult to identify $Q_{a_j}^*$ than to rule out $a_j$ as suboptimal. Then, the comparison is between $\frac{T}{k \Delta_\text{DR}^{-2}\paren{a_j}}$ (for UE) v.s. $\frac{T_j+\tilde T_j}{H_j}$ (for LCB-DR).
\end{outline}
Putting these together results in
\begin{align*}
    \underbrace{ \frac{T}{ k \min\cbr{ \Delta_{a_j,\text{min}}^{-2}, \Delta_\text{DR}^{-2}\paren{a_j} } } }_{\text{UE}} \quad\text{v.s.}\quad \underbrace{ \frac{T_j+\tilde T_j}{H_j} }_{\text{LCB-DR}}
\end{align*}
% \begin{align*}
%     \frac{T}{ k \min\cbr{ \Delta_{a_j,\text{min}}^{-2}, \Delta_\text{DR}^{-2}\paren{a_j} } } \;\text{(for UE)} \quad\text{v.s.}\quad \frac{T_j+\tilde T_j}{H_j} \;\text{(for LCB-DR)}
% \end{align*}
where the larger term yields the better rate. When sample sizes are large relative to $l$, so that $T\approx T_j+\tilde T_j$, optimism is favorable when $H_j \leq k \min\cbr{ \Delta_{a_j,\text{min}}^{-2}, \Delta_\text{DR}^{-2}\paren{a_j} }$. As in MAB, this is always the case when $\Delta_{a_j,\text{min}}^{-2}$ is the smaller term; otherwise, it depends on the problem instance. Note that $H_j$ can be much smaller when $\abs{\Uc_j}\ll k$.

\section{Extending to infinite decision sets}
\label{sec:ext-inf-dec-set}
While the results discussed thus far only apply to finite decision sets $\Ac$, it is possible to extend to larger (possibly infinite) sets via standard covering arguments. Suppose that we have access to a finite $\epsilon$-cover $\Ac_\epsilon$ of $\cbr{r\paren{a,\cdot}}_{a\in\Ac}$ in the following sense: for all $a\in\Ac$, there exists a $\phi_a \in \Ac_\epsilon$ such that
\begin{align*}
    \max_{Q\in\Uc} \EE[X\sim Q]{ \abs{ r\paren{a,X} - r\paren{\phi_a,X} } } \leq \epsilon
\end{align*}
The idea is that the regret under the finite set $\Ac_\epsilon$ is close to the regret under $\Ac$, so that a learner can play the game dynamics on the former.

\begin{lem}{Controlling regret using a cover}{}
Let $\DeltaDR\paren{a;\Ac_\epsilon}\coloneqq\max_{a^*\in\Ac_\epsilon}\muDR\paren{a^*}-\muDR\paren{a}$ denote the suboptimality gap with respect to $\Ac_\epsilon$. Then, $\DeltaDR\paren{a} \leq \DeltaDR\paren{a;\Ac_\epsilon} + \epsilon$ for all $a\in\Ac$.
\end{lem}

This result admits a straightforward proof, which we defer to Appendix~\ref{app:infinite-dec-sets}. In addition, in Appendix~\ref{app:infinite-dec-bin-class}, we specialize this result to the binary classification setting and show how to construct a cover for VC-classes.

\section{Experiments}
\label{sec:experiments}
We conduct a small synthetic experiment to illustrate the power of our different techniques and demonstrate important practical considerations. Suppose that our distributions are Bernoulli's $P_i = \Ber\paren{p_i}$ and our decisions live in the set $\Ac\subset\brac{0,1}$. Instead of a reward function $r$, here we will work the quadratic loss function $\ell\paren{a,x}\coloneqq\paren{a-x}^2$. That is, our objective is now of the form
\begin{align*}
    \min_{a\in\Ac}\max_{i\in\brac{k}} \EE[X\sim\Ber\paren{p_i}]{ \paren{a-X}^2 }
\end{align*}

\subsection{Non-adaptive comparisons}
We begin by comparing the non-adaptive algorithms. Consider the action set $\cbr{0,1/15,\dots,14/15}$ and the biases $\cbr{0.4,0.1,0.11,0.12,0.13}$. The optimal action is $6/15=0.4$. Notably, one distribution has much higher variance than the rest. This adheres to the example given in Section~\ref{sec:bound-VT}. We are given a budget of $T=1000$ samples and test the following strategies:
\begin{itemize}
    \item UE: sample $200$ times from each distribution. This yielded an error probability of $0.36$.

    \item NUE: compute the \textit{exact} variance of each distribution and allocate proportional to it. This yielded an error of $0.22$.

    \item NUE with estimation: use the first $200$ samples to estimate the variances and allocate proportional to these estimates. The resulting error was $0.24$
\end{itemize}
The third approach is key when we do not have a priori knowledge of the distributions, which is often the case. We plot the performances obtained in Figure~\ref{fig:non-adapt}. As expected, sampling more from high-variance distributions yielded a smaller error. In particular, we see that even with a ``burn-in'' estimation phase, NUE outperformed UE.

\subsection{Comparing all methods}
Next, we run all strategies in the following setup: we define action set $\Ac = \cbr{0.37,0.61}$ and biases $\cbr{0.2, 0.5,0.5,0.5,0.5,0.5,0.75}$. These values were chosen so that the distributionally robust loss of both actions are very similar; $0.61$ is slightly better than $0.37$. In addition, the middle distributions are less informative for the robust performance. We test the following strategies:
\begin{itemize}
    \item LCB-DR: with full knowledge of the distributions, and setting $\epsilon_j$'s to the fixed constant $5$, we sampled a total of approximately $11500$ times (averaged over 500 runs) and obtained an error probability of $0.045$.

    \item UE: we ran it on the same sample size obtained by LCB-DR (i.e., $11500$ samples) and obtained an error of $0.16$.

    \item NUE with estimation: using $1/8$ of the $11500$ samples for variance estimation, NUE yielded an error of $0.19$. Notably, allocating more samples to high-variance distributions hurt performance in this setup.

    \item LCB-DR with estimation: we ran a variant of LCB-DR that estimates the suboptimality gaps online; we describe it in detail below. This yielded an average sample size of approximately $10000$ and error of $0.1$.
\end{itemize}
A plot of the performances is given in Figure~\ref{fig:all}. Evidently, LCB-DR significantly outperformed its non-adaptive counterparts, even when we estimate the gaps online.

\subsection{LCB-DR with estimation}
On round $j$, recall that there are two sources of uncertainty in the LCB-DR algorithm: our guess $\Uc_j$ of the distributions that will be played and the suboptimality gaps $\Delta_{a_j}\paren{Q}$. As discussed in Section~\ref{sec:optimism}, the former is approximately the set of distributions that are indeed played, which we can learn along the way.

To handle the latter, we follow the recipe of \cite[Figure 4]{Audibert2010-BestArm}: consider UCB-E on $k$ arms over $T$ rounds. To tune the exploration parameter $\epsilon$ optimally, one needs to know $H=\sum_i\Delta_i^{-2}$. The idea is to split $T$ into $k$ phases of equal size. In phase $m$, we can estimate the gaps empirically, and it turns out that the $m$ largest ones, call them $\hat\Delta_{\paren{k-m}},\dots, \hat\Delta_{\paren{k}}$, are well-approximated (see \cite[Theorem 3]{Audibert2010-BestArm}). We can then use them to compute an approximation of the complexity measure of interest, $H = \tilde\Theta\paren{ \max_i i\Delta_{\paren{i}}^{-2} } \approx \max_{k-m\leq i\leq k} i\hat\Delta_{\paren{i}}^{-2}$, and tune $\epsilon_j$ accordingly. The first equality is a well-known fact, and we refer to \cite[Section 6.1]{Audibert2010-BestArm} for a proof. As we progress through the phases and collect more data, this approximation improves.

A key difference in LCB-DR is that we use $H$ to tune $T$ instead of $\epsilon$. For this reason, we implement it as follows:
\begin{itemize}
    \item Start with a large value of $T_j$ and initialize $\Uc_j$ to be the singleton set of the first distribution played.

    \item Run UCB-E normally and update $\Uc_j$ with the sampled distributions. However, do not yet update $T_j$ when doing so.

    \item Once we get past some fixed fraction of $T_j$, say $10\%$, check if we need to update it as follows:
    \begin{itemize}
        \item Estimate all of the gaps empirically.

        \item Take the largest gap, call it $\hat\Delta_{\paren{k_j}}$, for which the distribution is in $\Uc_j$ (since $H_j$ only sums over $\Uc_j$). We start in phase $m=0$ and construct the estimate $\hat H_j^{\paren{m}} = k_j\hat\Delta_{\paren{k_j}}^{-2} = \max_{k_j-m \leq i\leq k_j} i\hat\Delta_{\paren{i}}^{-2}$. We then define a temporary $\hat T_j^{\paren{m}}$ based on it (using~\eqref{eq:lcb-Tj-lb}) and check whether the current iteration is past $\paren{m+1}\hat T_j^{\paren{m}}/k_j$. If it is, then the $\hat H_j^{\paren{m}}$ estimate should be sound, due to the reasoning above, in which case we set $T_j$ to $\hat T_j^{\paren{m}}$ and continue the process: take the two largest gaps from $\Uc_j$ (now $m=1$), and so forth. When we reach a point where the current iteration is not large enough, then we go back to running UCB-E.
    \end{itemize}
\end{itemize}

\begin{figure}
  \centering
  \begin{subfigure}[b]{0.48\textwidth}
    \centering
    \includegraphics[width=\linewidth]{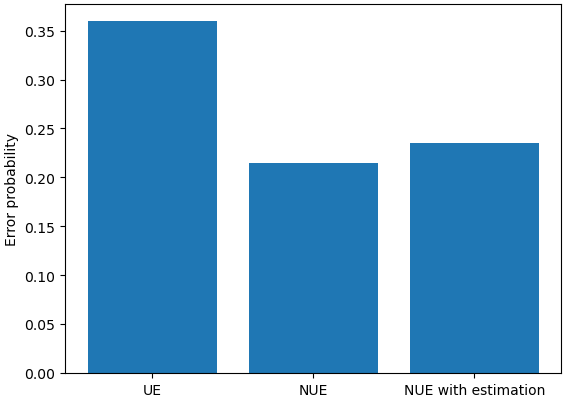}
    \caption{Non-adaptive methods}
    \label{fig:non-adapt}
  \end{subfigure}\hfill
  \begin{subfigure}[b]{0.48\textwidth}
    \centering
    \includegraphics[width=\linewidth]{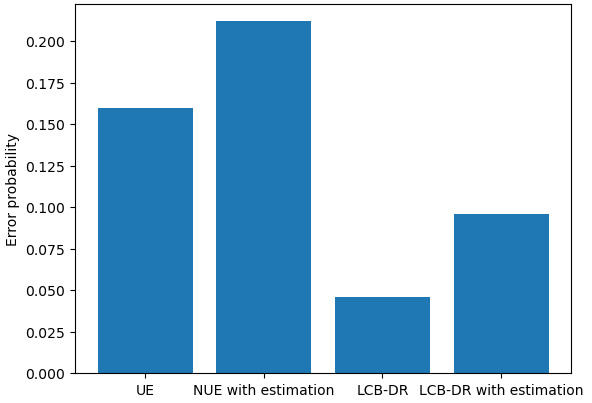}
    \caption{All methods}
    \label{fig:all}
  \end{subfigure}
  \caption{Error probabilities achieved by the different methods proposed.}
  \label{fig:pair}
\end{figure}

\section{Discussion}
In this work, we delve into the problem of DRO within the MDL framework, an area of growing popularity in high-stakes machine learning applications. Rooted in empirical process theory and inspired by the PE-MAB literature, we offer novel insight into the key strategies of uniform and non-uniform exploration via distribution-dependent bounds. By scaling with instance-specific quantities, our proposed bounds decay much faster, with respect to sample sizes, than existing ones. We additionally devise an optimistic method, LCB-DR, that shows improvements over its non-adaptive counterparts, paralleling classical findings in the MAB setting.

While LCB-DR exhibits favorable rates, we reiterate that tuning certain parameters involves estimating unknown quantities. This raises the question of whether there exists a more astute way to select such quantities with minimal prior information. Moreover, the procedure requires specifying the order to play the actions in. Although the absence of any problem knowledge might preclude exploiting this sequence effectively, perhaps some preliminary understanding of the distributions allows potential advantages (e.g., start with actions that explore as much as possible, so that $\Uc_j$ is small in future iterations).

\bibliographystyle{alpha}
\bibliography{main}

\newpage
\appendix

% \counterwithin{regthm}{section}
% \counterwithin{reglem}{section}
% \counterwithin{regprop}{section}
% \counterwithin{regcor}{section}
% \counterwithin{regrem}{section}
\counterwithin{equation}{section}

\section{Expectation of empirical process maximum}
\label{app:exp-emp-proc-max}
Let $\Uc\subset\Pc\paren{\Xc}$ be a finite set of distributions over a data space $\Xc$, with $2\leq k \coloneqq \abs{\Uc} <\infty$. For each distribution $Q\in\Uc$, we have an associated sample size $n_Q\in\Nb$ and define $T \coloneqq \sum_{Q\in\Uc} n_Q$. When $\Xc\subset\Rb$, we additionally denote the variance of each distribution by $\sigma_Q^2 \coloneqq \Var\paren{Q}$ and define $\sigma_T \coloneqq \max_{Q\in\Uc} \frac{\sigma_Q}{\sqrt{n_Q}}$.

In the development that follows, we will work with independent $\Xc$-valued random variables $\paren{X_Q}_{Q\in\Uc}, X\coloneqq \paren{ \Xarm{Q}{i} }_{Q\in\Uc, i\in\brac{n_Q}}$, where $X_Q, \paren{ \Xarm{Q}{i} }_{i\in\brac{n_Q}} \iid Q$ for each $Q\in\Uc$. For a collection of functions $\cbr{f_Q: \Xc\to\brac{-1,1}}_{Q\in\Uc}$, such that each $f_Q\paren{X_Q}$ is centered, our primary goal will be to bound the following quantity:
\begin{align*}
    \EE{ \max_{Q\in\Uc} \abs{ \frac{1}{n_Q}\rsum{i=1}{n_Q} f_Q\paren{\Xarm{Q}{i}} } }
\end{align*}
In particular, we will show the following bounds.

\begin{thm}{}{exp-emp-bnds}
    Let $\cbr{f_Q: \Xc\to\brac{-M,M}}_{Q\in\Uc}$ be a collection of functions such that $\EE{f_Q\paren{X_Q}}=0$ for each $Q\in\Uc$. Then, 
    \begin{align*}
        \EE{ \max_{Q\in\Uc} \abs{ \frac{1}{n_Q}\rsum{i=1}{n_Q} f_Q\paren{\Xarm{Q}{i}} } } \leq 4 M \sqrt{ \frac{ \log k }{ \min_{Q\in\Uc} n_Q } }
    \end{align*}
    Moreover, if $\Xc\subset\Rb$ and each function $f_Q$ is $L$-Lipschitz, then
    \begin{align*}
        \EE{ \max_{Q\in\Uc} \abs{ \frac{1}{n_Q}\rsum{i=1}{n_Q} f_Q\paren{\Xarm{Q}{i}} } } \leq \frac{16M\log k}{ \min_{Q\in\Uc} n_Q } + 4L\sigma_T \sqrt{ 2\log k }
    \end{align*}
\end{thm}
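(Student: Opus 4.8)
The plan is to bound the maximum of $k$ centered empirical averages by a combination of a union bound with Hoeffding's inequality (for the universal bound) and a Bernstein-type refinement (for the Lipschitz/variance-aware bound), using the standard device that for nonnegative random variables, $\EE{\max_i Z_i}$ can be controlled by integrating a uniform tail bound. Concretely, I would first write $Z \coloneqq \max_{Q\in\Uc} \abs{ \frac{1}{n_Q}\sum_{i=1}^{n_Q} f_Q(\Xarm{Q}{i}) }$ and, for each $Q$, apply a concentration inequality to the sum $S_Q \coloneqq \frac{1}{n_Q}\sum_{i=1}^{n_Q} f_Q(\Xarm{Q}{i})$, which is an average of $n_Q$ independent, centered, $[-1,1]$-bounded terms.

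For the first bound, Hoeffding gives $\Pb(\abs{S_Q} > t) \leq 2\exp(-n_Q t^2/2)$, hence by a union bound $\Pb(Z > t) \leq 2k\exp(-(\min_Q n_Q) t^2/2)$. Then I would use the standard integral bound: for any $u \geq 0$, $\EE{Z} \leq u + \int_u^\infty \Pb(Z > t)\,dt$, optimize $u$ of order $\sqrt{\log k / \min_Q n_Q}$, and carry out the Gaussian-tail integral to obtain the clean constant $4\sqrt{\log k / \min_Q n_Q}$ — the constant should fall out after choosing $u = \sqrt{2\log(2k)/\min_Q n_Q}$ or similar and bounding the residual integral.

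For the second bound, the key observation is that when $f_Q$ is $L$-Lipschitz on $\Xc\subset\Rb$ and $f_Q(X_Q)$ is centered with $X_Q$ having variance $\sigma_Q^2$, we get $\Var(f_Q(X_Q)) \leq L^2 \sigma_Q^2$ (since $f_Q(X_Q) - f_Q(\mu_Q)$ has absolute value at most $L\abs{X_Q - \mu_Q}$, and $f_Q$ centered means $\Var(f_Q(X_Q)) = \EE{f_Q(X_Q)^2} \leq L^2\EE{(X_Q-\mu_Q)^2}$ — actually one should be slightly careful here and bound $\EE{f_Q(X_Q)^2}$ via $\EE{(f_Q(X_Q)-f_Q(\mu_Q))^2}$ only after noting $f_Q(\mu_Q)$ may be nonzero, but Jensen gives $\EE{f_Q(X_Q)^2} \le \EE{(f_Q(X_Q)-c)^2}$ minimized at $c = \EE{f_Q(X_Q)} = 0$, so in fact $\EE{f_Q(X_Q)^2} \le \EE{(f_Q(X_Q)-f_Q(\mu_Q))^2} \le L^2\sigma_Q^2$). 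Then Bernstein's inequality yields $\Pb(\abs{S_Q} > t) \leq 2\exp(-n_Q t^2 / (2L^2\sigma_Q^2 + 2t/3))$, so a union bound gives a tail of the form $2k\exp(-t^2/(2\sigma_T^2 L^2 \cdot (\text{something}) + \dots))$; inverting this two-regime tail (sub-Gaussian for small $t$, sub-exponential for large $t$) and plugging into the same integral bound produces the advertised $\frac{16\log k}{\min_Q n_Q} + 4L\sigma_T\sqrt{2\log k}$, where the first term comes from the heavy-tail (exponential) part and the second from the light-tail (Gaussian) part.

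The main obstacle I anticipate is bookkeeping the constants through the two-regime Bernstein tail inversion so that they land exactly at $16$ and $4\sqrt{2}$: one must split $\Pb(Z>t)$ at the crossover between the $t^2$ and $t$ terms in the exponent, bound each regime's contribution to $\int \Pb(Z>t)\,dt$ separately, and choose the truncation level $u$ to absorb the $\log k$ factors cleanly. There is also a minor subtlety that $\sigma_T = \max_Q \sigma_Q/\sqrt{n_Q}$ mixes the variance and sample-size normalizations, so when union-bounding one should track $n_Q t^2/(L^2\sigma_Q^2)$ and verify this is lower-bounded by $t^2/(L^2\sigma_T^2)$ uniformly in $Q$ — which holds by definition of $\sigma_T$ — and similarly handle the $n_Q$ in the linear term via $\min_Q n_Q$. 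None of this is conceptually hard, but it is where the stated constants are pinned down.
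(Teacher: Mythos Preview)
Your approach is correct and takes a genuinely different route from the paper. The paper proceeds via symmetrization: it bounds the target by twice a Rademacher-type complexity $\Rad_T(\{f_Q\})$, controls that through the sub-Gaussian maximal inequality on the finite set of embedded vectors, and --- for the Lipschitz bound --- runs a bootstrap argument in which $C \coloneqq \sqrt{\log k}\,\EE{D(\{f_Q\})}$ is shown to satisfy a quadratic inequality $C^2 \leq aC + b$ (obtained by symmetrizing once more the squared functions $f_Q^2/n_Q$) and then solved. You instead apply Hoeffding (resp.\ Bernstein) directly to each $S_Q$, union-bound over $Q$, and invoke the sub-Gaussian (resp.\ sub-gamma) maximal inequality for the $2k$ variables $\{\pm S_Q\}$. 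Your route is simpler and in fact yields sharper constants: the sub-Gaussian maximal inequality gives $\EE{Z} \leq \sqrt{2\log(2k)/\min_Q n_Q} \leq 2\sqrt{\log k/\min_Q n_Q}$, beating the paper's $4$; and Bernstein with your (correct) bound $\Var(f_Q(X_Q)) \leq L^2\sigma_Q^2$ together with the sub-gamma maximal inequality yields roughly $L\sigma_T\sqrt{2\log(2k)} + \log(2k)/(3\min_Q n_Q)$, again below the stated $16$ and $4\sqrt{2}$ --- so your anticipated constant-bookkeeping obstacle is not a real one; you will undershoot the targets. What the paper's symmetrization machinery buys is a template that extends to infinite function classes where union bounds are unavailable, but for finite $k$ your direct argument is the cleaner one; indeed the paper itself remarks, immediately after stating the theorem, that the first bound can be obtained exactly as you propose.
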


We note that the first bound can be directly obtained by a high-probability bound via Hoeffding's inequality, along with a union bound, and a subsequent integration of the tails. The second bound (Theorem~\ref{thm:exp-emp-proc-bnd-2}) requires a more careful analysis and, in the process of deriving it, we additionally show the first result (Corollary~\ref{cor:exp-emp-proc-bnd-1}).

The proof will follow in two parts: first, in Section~\ref{sec:symm}, we use symmetrization to bound the quantity of interest with a notion of Rademacher complexity, and subsequently derive bounds on this complexity in Section~\ref{sec:bnds-rad-comp}.

\subsection{Symmetrization}
\label{sec:symm}
A standard approach to bound empirical process maxima is via symmetrization. We begin by defining the Rademacher complexity variant of a class of functions $\cbr{h_Q: \Xc\to\Rb}_{Q\in\Uc}$:
\begin{align*}
    \Rad_T\paren{\cbr{h_Q}_{Q\in\Uc}} \coloneqq \EE{ \max_{Q\in\Uc} \abs{ \frac{1}{n_Q} \rsum{i=1}{n_Q} \epsilon_i h_Q\paren{\Xarm{Q}{i}} } }
\end{align*}
where $\epsilon_1,\dots, \epsilon_{\max_{Q\in\Uc} n_Q} \iid \text{Rad}$ (i.e., they are each uniform on $\cbr{-1,1}$) are independent from $X$. Note that we place no assumptions on $h_Q\paren{X_Q}$ being centered. The following result employs the standard symmetrization trick (see, e.g., \cite[Theorem 4.10]{wainwright-high-dim}) with different sample sizes, and we prove it here for completeness.

% We begin by stating an auxiliary lemma.
% \begin{lem}{}{symm-aux}
%     For random variable $Z\in\Zc$ and function class $\Fc\subset \Rb^\Zc$, we have that
%     \begin{align*}
%         \sup_{f\in\Fc} \abs{ \EE{f\paren{Z}} } \leq \EE{ \sup_{f\in\Fc} \abs{ f\paren{Z} } }
%     \end{align*}
% \end{lem}

% \begin{proof}
%     For any $f\in\Fc$,
%     \begin{align*}
%         \abs{\EE{ f\paren{Z} }} \leq \EE{ \abs{f\paren{Z}} } \leq \EE{ \sup_{g\in\Fc} \abs{g\paren{Z}} }
%     \end{align*}
%     The claim then follows by taking the supremum over $f\in\Fc$ on the left-hand side.
% \end{proof}

\begin{thm}{Symmetrization}{symm}
    For any collection of functions $\cbr{h_Q: \Xc\to\Rb}_{Q\in\Uc}$, we have that
    \begin{align*}
        \EE{ \max_{Q\in\Uc} \abs{ \frac{1}{n_Q}\rsum{i=1}{n_Q} \cbr{ h_Q\paren{\Xarm{Q}{i}} - \EE{ h_Q\paren{X_Q} } } } } \leq 2\Rad_T\paren{\cbr{h_Q}_{Q\in\Uc}}
    \end{align*}
\end{thm}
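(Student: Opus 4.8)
\textbf{Proof proposal for Theorem~\ref{thm:symm} (Symmetrization).}

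The plan is to follow the classical symmetrization argument (as in \cite[Theorem 4.10]{wainwright-high-dim}) adapted to the per-coordinate normalization by $n_Q$. First I would introduce an independent ``ghost'' sample $X' \coloneqq \paren{ \Xarmp{Q}{i} }_{Q\in\Uc, i\in\brac{n_Q}}$, with $\Xarmp{Q}{i} \iid Q$ independent of $X$, and observe that for each fixed $Q$ we have $\EE{ h_Q\paren{X_Q} } = \EE{ \frac{1}{n_Q} \rsum{i=1}{n_Q} h_Q\paren{\Xarmp{Q}{i}} \mid X }$. Substituting this inside the maximum and using convexity of $\cbr{Q\in\Uc}\mapsto\max_Q\abs{\cdot}$ together with Jensen's inequality (conditioning on $X$, then taking expectations), I would get
\begin{align*}
    \EE{ \max_{Q\in\Uc} \abs{ \frac{1}{n_Q}\rsum{i=1}{n_Q} \cbr{ h_Q\paren{\Xarm{Q}{i}} - \EE{ h_Q\paren{X_Q} } } } } \leq \EE{ \max_{Q\in\Uc} \abs{ \frac{1}{n_Q}\rsum{i=1}{n_Q} \paren{ h_Q\paren{\Xarm{Q}{i}} - h_Q\paren{\Xarmp{Q}{i}} } } }.
\end{align*}

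Next I would symmetrize by Rademacher signs: since $\Xarm{Q}{i}$ and $\Xarmp{Q}{i}$ are i.i.d.\ for each $(Q,i)$, the difference $h_Q\paren{\Xarm{Q}{i}} - h_Q\paren{\Xarmp{Q}{i}}$ is symmetric, so multiplying the $i$th term (within block $Q$) by an independent sign $\epsilon_i\sim\text{Rad}$ does not change the joint distribution --- here I would be slightly careful that the same $\epsilon_i$ is reused across blocks $Q$ for a fixed index $i$, which is fine because the $(Q,i)$-indexed variables are all mutually independent, so the vector of differences is coordinatewise symmetric and hence symmetric under any fixed sign pattern; averaging over $\epsilon$ preserves this. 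Then apply the triangle inequality $\abs{\sum \epsilon_i(h_Q(\Xarm{Q}{i}) - h_Q(\Xarmp{Q}{i}))} \leq \abs{\sum\epsilon_i h_Q(\Xarm{Q}{i})} + \abs{\sum\epsilon_i h_Q(\Xarmp{Q}{i})}$ inside the max, split the max of a sum into a sum of maxes, and note both resulting terms equal $\Rad_T\paren{\cbr{h_Q}_{Q\in\Uc}}$ by identical distribution of $X$ and $X'$. This yields the factor $2$.

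The only mild subtlety --- and the step I would be most careful with --- is the reuse of a shared sign sequence $\epsilon_1,\dots,\epsilon_{\max_Q n_Q}$ indexed by $i$ alone (not by the pair $(Q,i)$), since that is how $\Rad_T$ is defined in the excerpt. The symmetrization identity still goes through because the family $\cbr{h_Q\paren{\Xarm{Q}{i}} - h_Q\paren{\Xarmp{Q}{i}}}_{Q,i}$ is, jointly over all $(Q,i)$, symmetric under flipping all terms sharing a common index $i$ simultaneously: flipping sign of index $i$ corresponds to the measure-preserving swap $\Xarm{Q}{i}\leftrightarrow\Xarmp{Q}{i}$ performed in \emph{every} block $Q$ at once, which is a valid symmetry of the joint law. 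Hence conditioning on $\epsilon$ and averaging is legitimate. Lemma~\ref{lem:symm-aux} is not strictly needed for this particular chain but could be invoked to justify pulling the $\abs{\cdot}$ outside the conditional expectation of the ghost average; I would cite it there for cleanliness. No step here is a genuine obstacle --- it is a routine adaptation --- so the write-up is mainly bookkeeping to make the index conventions match the stated definition of $\Rad_T$.
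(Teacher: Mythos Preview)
Your proposal is correct and matches the paper's proof essentially step for step: introduce an independent ghost sample, pull the conditional expectation inside the max via Lemma~\ref{lem:symm-aux} (equivalently your Jensen argument), symmetrize with the shared signs $\epsilon_1,\dots,\epsilon_{\max_Q n_Q}$ justified exactly as you describe (flipping index $i$ corresponds to swapping $\Xarm{Q}{i}\leftrightarrow\Xarmp{Q}{i}$ simultaneously for all $Q$), and finish with the triangle inequality. The shared-sign subtlety you flag is handled in the paper in precisely the same way.
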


\begin{proof}
Let $Y \coloneqq \paren{\Yarm{Q}{i}}_{Q\in\Uc,i\in\brac{n_Q}}$ be an independent copy of $X$ and let $P$ denote their common distribution. In addition, define Rademacher variables $\epsilon^n \iid \text{Rad}$ that are independent from $X$ and $Y$. Then,
\begin{align*}
    \EE{ \max_{Q\in\Uc} \abs{ \frac{1}{n_Q}\rsum{i=1}{n_Q} \cbr{ h_Q\paren{\Xarm{Q}{i}} - \EE{ h_Q\paren{X_Q} } } } } &= \EE{ \max_{Q\in\Uc} \abs{ \frac{1}{n_Q} \rsum{i=1}{n_Q} \cbr{ h_Q\paren{\Xarm{Q}{i}} - \EE{ h_Q\paren{\Yarm{Q}{i}} } } } } \\
    &= \EE{ \max_{Q\in\Uc} \abs{ \EE{ \frac{1}{n_Q} \rsum{i=1}{n_Q} \cbr{ h_Q\paren{\Xarm{Q}{i}} - h_Q\paren{\Yarm{Q}{i}} } \middle| X } } } \\
    &\overset{(1)}{\leq} \EE{ \max_{Q\in\Uc} \abs{ \frac{1}{n_Q} \rsum{i=1}{n_Q} \brac{ h_Q\paren{\Xarm{Q}{i}} - h_Q\paren{\Yarm{Q}{i}} } } } \\
    &= \EE{ \max_{Q\in\Uc} \abs{ \frac{1}{n_Q} \rsum{i=1}{n_Q} \epsilon_i \brac{ h_Q\paren{\Xarm{Q}{i}} - h_Q\paren{\Yarm{Q}{i}} } } } \\
    &\leq \EE{ \max_{Q\in\Uc} \cbr{ \abs{ \frac{1}{n_Q} \rsum{i=1}{n_Q} \epsilon_i h_Q\paren{\Xarm{Q}{i}} } + \abs{ \frac{1}{n_Q} \rsum{i=1}{n_Q} \epsilon_i h_Q\paren{\Yarm{Q}{i}} } } } \\
    &\leq \EE{ \max_{Q\in\Uc} \cbr{ \abs{ \frac{1}{n_Q} \rsum{i=1}{n_Q} \epsilon_i h_Q\paren{\Xarm{Q}{i}} } } + \max_{Q\in\Uc} \cbr{ \abs{ \frac{1}{n_Q} \rsum{i=1}{n_Q} \epsilon_i h_Q\paren{\Yarm{Q}{i}} } } } \\
    &= 2\EE{ \max_{Q\in\Uc} \abs{ \frac{1}{n_Q} \rsum{i=1}{n_Q} \epsilon_i h_Q\paren{\Xarm{Q}{i}} } } \\
    &= 2\Rad_T\paren{\cbr{h_Q}_{Q\in\Uc}}
\end{align*}
where inequality (1) follows from Jensen.
\end{proof}

\subsection{Bounds on the Rademacher complexity}
\label{sec:bnds-rad-comp}
For the symmetrization trick to be useful, we need to bound $\Rad_T\paren{\cbr{h_Q}_{Q\in\Uc}}$. To this end, we begin by defining the Rademacher complexity of a set $\Theta\subset\Rb^n$:
\begin{align*}
    \hat\Rad\paren{\Theta} \coloneqq \EE{ \sup_{\theta\in\Theta} \abs{ \inner{ \epsilon^n,\theta } } }
\end{align*}
where $\epsilon^n = \paren{\epsilon_1,\dots,\epsilon_n} \iid \text{Rad}$. The process $\cbr{\inner{\epsilon^n,\theta}}_{\theta\in\Theta}$ is sub-Gaussian and, for finite $\Theta$, the Rademacher complexity admits a particularly simple bound, shown next. For a deeper dive into the field, see, e.g., \cite[Chapter 5]{wainwright-high-dim}.

\begin{lem}{Bounding the Rademacher complexity of a finite set}{bnd-rad-comp-fin}
    Let $\Theta\subset\Rb^n$ satisfy $2\leq \abs{\Theta}< \infty$. Then,
    \begin{align*}
        \hat\Rad\paren{\Theta} \leq 2D_\Theta \sqrt{ \log \abs{\Theta} }
    \end{align*}
    where $D_\Theta \coloneqq \max_{\theta\in\Theta} \norm{\theta}_2$.
\end{lem}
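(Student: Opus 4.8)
The plan is to run the standard moment-generating-function (Chernoff) argument for the maximum of a finite family of sub-Gaussian random variables. First I would record the sub-Gaussianity of each coordinate process: for a fixed $\theta\in\Theta$, since $\inner{\epsilon^n,\theta}=\rsum{i=1}{n}\epsilon_i\theta_i$ is a sum of independent mean-zero terms with $\EE{\exp\paren{\lambda\epsilon_i\theta_i}}=\cosh\paren{\lambda\theta_i}\leq\exp\paren{\lambda^2\theta_i^2/2}$, we get for all $\lambda\in\Rb$
\begin{align*}
    \EE{\exp\paren{\lambda\inner{\epsilon^n,\theta}}}=\rprod{i=1}{n}\EE{\exp\paren{\lambda\epsilon_i\theta_i}}\leq\exp\paren{\frac{\lambda^2\norm{\theta}_2^2}{2}}\leq\exp\paren{\frac{\lambda^2 D_\Theta^2}{2}}.
\end{align*}
To absorb the absolute value appearing in $\hat\Rad$, I would use $\exp\paren{\lambda\abs{\inner{\epsilon^n,\theta}}}\leq\exp\paren{\lambda\inner{\epsilon^n,\theta}}+\exp\paren{-\lambda\inner{\epsilon^n,\theta}}$ for $\lambda>0$, which upon taking expectations gives $\EE{\exp\paren{\lambda\abs{\inner{\epsilon^n,\theta}}}}\leq 2\exp\paren{\lambda^2 D_\Theta^2/2}$.

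Next I would combine this with Jensen's inequality (applied to the convex map $x\mapsto\exp\paren{\lambda x}$) and a union bound over the finite index set $\Theta$:
\begin{align*}
    \exp\paren{\lambda\,\hat\Rad\paren{\Theta}}
    &=\exp\paren{\lambda\,\EE{\max_{\theta\in\Theta}\abs{\inner{\epsilon^n,\theta}}}}
    \leq\EE{\max_{\theta\in\Theta}\exp\paren{\lambda\abs{\inner{\epsilon^n,\theta}}}}\\
    &\leq\sum_{\theta\in\Theta}\EE{\exp\paren{\lambda\abs{\inner{\epsilon^n,\theta}}}}
    \leq 2\abs{\Theta}\exp\paren{\frac{\lambda^2 D_\Theta^2}{2}}.
\end{align*}
Taking logarithms and dividing by $\lambda$ yields $\hat\Rad\paren{\Theta}\leq\frac{\log\paren{2\abs{\Theta}}}{\lambda}+\frac{\lambda D_\Theta^2}{2}$ for every $\lambda>0$. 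Assuming $D_\Theta>0$ (the case $D_\Theta=0$ forces $\Theta=\cbr{\zerobf}$ and the bound is trivial), the choice $\lambda=\sqrt{2\log\paren{2\abs{\Theta}}}/D_\Theta$ optimizes the right-hand side and gives $\hat\Rad\paren{\Theta}\leq D_\Theta\sqrt{2\log\paren{2\abs{\Theta}}}$.

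The final step is a short algebraic cleanup using the hypothesis $\abs{\Theta}\geq 2$: this guarantees $2\abs{\Theta}\leq\abs{\Theta}^2$, hence $2\log\paren{2\abs{\Theta}}\leq 4\log\abs{\Theta}$, which upgrades the previous display to the claimed $\hat\Rad\paren{\Theta}\leq 2D_\Theta\sqrt{\log\abs{\Theta}}$. There is no genuine obstacle here — the result is a textbook maximal inequality; the only points that deserve a moment's attention are the symmetrization-free absolute-value bound (which is what produces the harmless factor $2$ inside the logarithm) and the concluding estimate, where the assumption $\abs{\Theta}\geq 2$ is exactly what is needed to fold that factor of $2$ into the stated constant, and is also why the degenerate singleton case is excluded.
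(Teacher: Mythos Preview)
Your proof is correct and follows essentially the same approach as the paper: both arguments first establish that each $\inner{\epsilon^n,\theta}$ is centered $D_\Theta$-sub-Gaussian and then invoke the maximal inequality for finitely many sub-Gaussian variables, the only difference being that you write out the Chernoff/Jensen optimization explicitly (and handle the absolute value by doubling) whereas the paper simply cites the standard result.
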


\begin{proof}
    Note that since each $\epsilon_i$ is 1-sub-Gaussian, 
    \begin{align*}
        \EE{ e^{\lambda\inner{\epsilon^n,\theta}} } = \rprod{i=1}{n} \EE{ e^{\lambda \epsilon_i\theta_i} } \leq \rprod{i=1}{n} e^{ \frac{ \lambda^2\theta_i^2 }{2} } = e^{ \frac{ \lambda^2\norm{\theta}_2^2 }{2} } \leq e^{ \frac{ \lambda^2D_\Theta^2 }{2} }
    \end{align*}
    for any $\theta\in\Theta$ and $\lambda\in\Rb$. That is, $\inner{\epsilon^n,\theta}$ is a centered $D_\Theta$-sub-Gaussian variable and we can, thus, apply the standard maximal inequality (e.g., \cite[Theorem 2.5]{boucheron-concentration}) to obtain the claim.
\end{proof}

We can relate both notions of Rademacher complexity introduced thus far to conclude the following result.

\begin{cor}{}{rel-rad-comps}
    For a collection of functions $\cbr{h_Q: \Xc\to\Rb}_{Q\in\Uc}$, define the random variable
    \begin{align*}
        D\paren{ \cbr{h_Q}_{Q\in\Uc} } \coloneqq \max_{Q\in\Uc} \sqrt{ \rsum{i=1}{n_Q} \paren{ \frac{ h_Q\paren{ \Xarm{Q}{i} } }{n_Q} }^2 }
    \end{align*}
    Then, we have that
    \begin{align*}
        \Rad_T\paren{\cbr{h_Q}_{Q\in\Uc}} \leq 2 \sqrt{\log k} \EE{ D\paren{ \cbr{h_Q}_{Q\in\Uc} } }
    \end{align*}
\end{cor}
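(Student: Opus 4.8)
The plan is to condition on the data and reduce to the finite-set Rademacher bound of Lemma~\ref{lem:bnd-rad-comp-fin}. First I would fix a realization of $X = \paren{\Xarm{Q}{i}}_{Q\in\Uc,i\in\brac{n_Q}}$ and set $n \coloneqq \max_{Q\in\Uc} n_Q$. For each $Q\in\Uc$, I would encode the normalized function values as a vector $\theta_Q \in \Rb^n$ whose first $n_Q$ coordinates are $h_Q\paren{\Xarm{Q}{i}}/n_Q$, $i\in\brac{n_Q}$, and whose remaining $n-n_Q$ coordinates are zero. Then $\inner{\epsilon^n,\theta_Q} = \frac{1}{n_Q}\rsum{i=1}{n_Q}\epsilon_i h_Q\paren{\Xarm{Q}{i}}$, so that with $\Theta_X \coloneqq \cbr{\theta_Q : Q\in\Uc} \subset\Rb^n$, and using that $\epsilon^n$ is independent of $X$,
\begin{align*}
    \EE{ \max_{Q\in\Uc} \abs{ \frac{1}{n_Q}\rsum{i=1}{n_Q}\epsilon_i h_Q\paren{\Xarm{Q}{i}} } \,\middle|\, X } = \hat\Rad\paren{\Theta_X}.
\end{align*}

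Next I would record the two elementary facts $\abs{\Theta_X} \leq k$ and $D_{\Theta_X} = \max_{Q\in\Uc}\norm{\theta_Q}_2 = D\paren{\cbr{h_Q}_{Q\in\Uc}}$ (both evaluated at the fixed realization of $X$). Applying Lemma~\ref{lem:bnd-rad-comp-fin} to $\Theta_X$ then gives $\hat\Rad\paren{\Theta_X} \leq 2 D\paren{\cbr{h_Q}_{Q\in\Uc}} \sqrt{\log\abs{\Theta_X}} \leq 2 D\paren{\cbr{h_Q}_{Q\in\Uc}} \sqrt{\log k}$, the last inequality by monotonicity of $\log$. Taking expectation over $X$ and invoking the tower property yields $\Rad_T\paren{\cbr{h_Q}_{Q\in\Uc}} \leq 2\sqrt{\log k}\, \EE{ D\paren{\cbr{h_Q}_{Q\in\Uc}} }$, which is the claim.

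There is no substantial obstacle here; the only point needing a sentence of care is the degenerate case $\abs{\Theta_X} = 1$ (all $\theta_Q$ coincide), which is excluded by the hypothesis of Lemma~\ref{lem:bnd-rad-comp-fin}. In that event I would bound $\hat\Rad\paren{\Theta_X} = \EE{\abs{\inner{\epsilon^n,\theta}}} \leq \norm{\theta}_2$ directly via Jensen (equivalently, adjoin a duplicate of $\theta$ to $\Theta_X$ so the lemma applies verbatim), and then use $1 \leq 2\sqrt{\log k}$, which holds since $k\geq 2$. All of the analytic content lives in Lemma~\ref{lem:bnd-rad-comp-fin} and the preceding symmetrization step; this corollary merely transfers the bound across the two notions of Rademacher complexity.
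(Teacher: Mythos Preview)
Your proposal is correct and follows essentially the same argument as the paper: fix the data, zero-pad each normalized sample vector into $\Rb^n$ with $n=\max_Q n_Q$, identify the conditional expectation with $\hat\Rad(\Theta_X)$, and apply Lemma~\ref{lem:bnd-rad-comp-fin}. You are in fact slightly more careful than the paper, which simply asserts $\abs{\Theta^\xbf}=k$, whereas you note $\abs{\Theta_X}\leq k$ and dispatch the degenerate case $\abs{\Theta_X}=1$ separately.
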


\begin{proof}
    Fix $\xbf \coloneqq \paren{x_Q^i}_{Q\in\Uc,i\in \brac{n_Q}} \in\Xc^T$. Let $n \coloneqq \max_{Q\in\Uc} n_Q$ and define vectors $\theta_Q^\xbf\in\Rb^n$ by
    \begin{align*}
        \brac{\theta_Q^\xbf}_i \coloneqq \begin{cases}
            \frac{h_Q\paren{x_{Q}^{i}}}{n_Q} & i\leq n_Q \\
            0 & \text{otherwise}
        \end{cases} \quad\forall i\in\brac{n}, Q\in\Uc
    \end{align*}
    and define the set of all such vectors $\Theta^\xbf \coloneqq \cbr{ \theta_Q^\xbf: Q\in\Uc }$, so that $\abs{\Theta^\xbf} = k \geq 2$. Then, note that
    \begin{align*}
        \hat\Rad\paren{ \Theta^\xbf } &= \EE{ \max_{\theta\in\Theta^\xbf} \abs{ \inner{ \epsilon^n,\theta } } } = \EE{ \max_{Q\in\Uc} \abs{ \frac{1}{n_Q} \rsum{i=1}{n_Q} \epsilon_i h_Q\paren{x_Q^i} } }
    \end{align*}
    Moreover, since $D_{\Theta^\xbf} = \max_{Q\in\Uc} \sqrt{ \rsum{i=1}{n_Q} \paren{ \frac{ h_Q\paren{ x_Q^i } }{n_Q} }^2 }$, Lemma~\ref{lem:bnd-rad-comp-fin} yields
    \begin{align*}
        \Rad_T\paren{\cbr{h_Q}_{Q\in\Uc}} = \EE{ \hat\Rad\paren{ \Theta^{X} } } \leq \EE{ 2 D_{\Theta^X} \sqrt{\log \abs{\Theta^X} } } = 2\sqrt{\log k} \EE{ D\paren{ \cbr{h_Q}_{Q\in\Uc} } }
    \end{align*}
\end{proof}

We can then readily obtain the first inequality of interest after scaling both sides of the bound below by $M$.

\begin{cor}{}{exp-emp-proc-bnd-1}
    Let $\cbr{f_Q: \Xc\to\brac{-1,1}}_{Q\in\Uc}$ be a collection of functions such that $\EE{f_Q\paren{X_Q}}=0$ for each $Q\in\Uc$. Then, 
    \begin{align*}
        \EE{ \max_{Q\in\Uc} \abs{ \frac{1}{n_Q}\rsum{i=1}{n_Q} f_Q\paren{\Xarm{Q}{i}} } } \leq 4 \sqrt{ \frac{ \log k }{ \min_{Q\in\Uc} n_Q } }
    \end{align*}
\end{cor}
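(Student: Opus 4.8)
The plan is to simply chain the two results already established in this appendix. Since each $f_Q$ satisfies $\EE{f_Q\paren{X_Q}}=0$, the left-hand side of the claimed inequality is \emph{exactly} the quantity appearing on the left in Theorem~\ref{thm:symm} with the choice $h_Q = f_Q$ (the centering corrections $\EE{h_Q\paren{X_Q}}$ all vanish). Applying symmetrization therefore bounds it by $2\,\Rad_T\paren{\cbr{f_Q}_{Q\in\Uc}}$, and then Corollary~\ref{cor:rel-rad-comps} bounds this in turn by $4\sqrt{\log k}\,\EE{ D\paren{\cbr{f_Q}_{Q\in\Uc}} }$, where $D\paren{\cbr{f_Q}_{Q\in\Uc}} = \max_{Q\in\Uc} \sqrt{ \rsum{i=1}{n_Q} \paren{ f_Q\paren{\Xarm{Q}{i}}/n_Q }^2 }$.

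It then remains only to control this random ``diameter''. Here I would use the crude deterministic bound that comes directly from $\abs{f_Q\paren{\Xarm{Q}{i}}}\le 1$: for each fixed $Q$, $\rsum{i=1}{n_Q} \paren{ f_Q\paren{\Xarm{Q}{i}}/n_Q }^2 \le n_Q \cdot n_Q^{-2} = 1/n_Q$, so $D\paren{\cbr{f_Q}_{Q\in\Uc}} \le \max_{Q\in\Uc} n_Q^{-1/2} = \bparen{\min_{Q\in\Uc} n_Q}^{-1/2}$ almost surely. Taking expectations and substituting into the bound from the previous paragraph gives $\EE{ \max_{Q\in\Uc} \abs{ \tfrac{1}{n_Q}\rsum{i=1}{n_Q} f_Q\paren{\Xarm{Q}{i}} } } \le 4\sqrt{\log k}\cdot\bparen{\min_{Q\in\Uc} n_Q}^{-1/2}$, which is exactly the claimed rate.

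There is essentially no genuine obstacle in this corollary: the only modelling choice is to bound the $\ell_2$-norm of the normalized increments by its worst-case value rather than keeping track of its dependence on the realized values of the $f_Q\paren{\Xarm{Q}{i}}$. Retaining that dependence is precisely what the sharper, variance-sensitive Theorem~\ref{thm:exp-emp-proc-bnd-2} does, where the extra Lipschitz structure lets one replace this uniform $1/n_Q$ bound by a quantity that scales with $\sigma_T$. So for the present statement the proof is just: symmetrize via Theorem~\ref{thm:symm}, pass to the finite-set Rademacher bound via Corollary~\ref{cor:rel-rad-comps}, and plug in the trivial norm bound on $D$.
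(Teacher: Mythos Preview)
Your proposal is correct and matches the paper's proof essentially step for step: symmetrize via Theorem~\ref{thm:symm}, apply Corollary~\ref{cor:rel-rad-comps}, and bound $D\paren{\cbr{f_Q}_{Q\in\Uc}}$ deterministically by $\paren{\min_{Q\in\Uc} n_Q}^{-1/2}$ using $\abs{f_Q}\le 1$. Your closing commentary contrasting this with the variance-sensitive Theorem~\ref{thm:exp-emp-proc-bnd-2} is also apt.
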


\begin{proof}
    Since each $f_Q\in\brac{-1,1}$, we have that
    \begin{align*}
        D\paren{ \cbr{f_Q}_{Q\in\Uc} } \leq \sqrt{ \max_{Q\in\Uc} \frac{1}{n_Q} } = \sqrt{ \frac{1}{\min_{Q\in\Uc} n_Q} }
    \end{align*}
    Hence, combining Theorem~\ref{thm:symm} and Corollary~\ref{cor:rel-rad-comps} yields
    \begin{align*}
        \EE{ \max_{Q\in\Uc} \abs{ \frac{1}{n_Q}\rsum{i=1}{n_Q} f_Q\paren{\Xarm{Q}{i}} } } \leq 2\Rad_T\paren{\cbr{f_Q}_{Q\in\Uc}} \leq 4 \sqrt{\log k} \EE{D\paren{ \cbr{f_Q}_{Q\in\Uc} }} \leq 4 \sqrt{ \frac{ \log k }{ \min_{Q\in\Uc} n_Q } }
    \end{align*}
\end{proof}

To obtain the second bound, we require a more refined analysis. We begin by introducing two simple auxiliary lemmas.

\begin{lem}{}{quadr-non-neg}
    Let $b,c>0$ and suppose that $x^2\leq bx+c$. Then, $x\leq b+\sqrt{c}$.
\end{lem}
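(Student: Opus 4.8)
The plan is to argue by elementary casework on the size of $x$ relative to $b$, using only that $b,c>0$.

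If $x \le b$, then since $\sqrt{c}\ge 0$ we immediately get $x \le b \le b+\sqrt{c}$, and there is nothing more to do. So I may assume $x > b$, which in particular gives $x-b>0$ and, because $b>0$, also $0 < x-b < x$. Rearranging the hypothesis $x^2 \le bx + c$ as $x\paren{x-b}\le c$ and using $x-b < x$ (both positive) yields $\paren{x-b}^2 < x\paren{x-b} \le c$. Taking square roots gives $x-b < \sqrt{c}$, i.e.\ $x < b + \sqrt{c}$, which is the desired bound.

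An alternative, equally short route is to observe that $x^2 - bx - c \le 0$ forces $x$ to lie below the larger root of the quadratic, namely $x \le \tfrac{1}{2}\paren{b + \sqrt{b^2+4c}}$, and then to check $\sqrt{b^2+4c} \le b + 2\sqrt{c}$ by squaring, which reduces to the trivially true inequality $0 \le 4b\sqrt{c}$. Either way the computation is completely routine; the only point requiring a little care is invoking the strict positivity of $b$ (and nonnegativity of $\sqrt{c}$) at the right places — to dispose of the trivial case and to replace the factor $x-b$ by the strictly larger $x$ — so there is no real obstacle here.
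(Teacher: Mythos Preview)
Your proof is correct. Your primary argument --- casework on whether $x\le b$, and in the nontrivial case bounding $(x-b)^2 < x(x-b)\le c$ --- is a genuinely different route from the paper's. The paper's proof is precisely your ``alternative route'': it observes that $p(z)=z^2-bz-c$ satisfies $p(x)\le 0$, locates $x$ below the larger root $r_2=\tfrac{1}{2}\bigl(b+\sqrt{b^2+4c}\bigr)$, and then bounds $r_2\le b+\sqrt{c}$ (the paper asserts this last step without spelling out the squaring you perform). Your casework argument avoids the quadratic formula entirely and even yields a strict inequality $x<b+\sqrt{c}$ in the case $x>b$; the quadratic-root approach is more mechanical and generalizes more readily. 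Either way the lemma is immediate.
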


\begin{proof}
    Define quadratic $p\paren{z} \coloneqq z^2-bz-c$, so that $p\paren{x}\leq 0$. Since $p\paren{0}=-c<0$, consider its roots $r_1<0<r_2$. Then, $p$ is positive on $\paren{r_2,\infty}$ and, thus, 
    \begin{align*}
        x \leq r_2 = \frac{ b+\sqrt{b^2+4c} }{2} \leq b+\sqrt{c}
    \end{align*}
\end{proof}

\begin{lem}{Variance of Lipschitz functions}{var-lip-fn}
    Let $Z\in\Zc\subset\Rb$ be a random variable, and suppose that $f:\Zc\to\Rb$ is $L$-Lipschitz. Then,
    \begin{align*}
        \Var\paren{f\paren{Z}} \leq 2L^2\Var\paren{Z}
    \end{align*}
\end{lem}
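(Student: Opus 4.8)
The plan is to use the standard i.i.d.-copy representation of the variance, which gives the bound (with a better constant than stated) in two lines. Let $Z'$ be an independent copy of $Z$, so that $Z'$ again takes values in $\Zc$. First I would recall the elementary identity $\Var\paren{W} = \frac{1}{2}\EE{\paren{W-W'}^2}$, valid for any square-integrable random variable $W$ with i.i.d. copy $W'$ (expand the square and use $\EE{W}=\EE{W'}$, $\EE{W^2}=\EE{W'^2}$). Applying it to $W=f\paren{Z}$ gives $\Var\paren{f\paren{Z}} = \frac{1}{2}\EE{\paren{f\paren{Z}-f\paren{Z'}}^2}$.

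Next, since both $Z$ and $Z'$ lie in $\Zc$ and $f$ is $L$-Lipschitz on $\Zc$, we have the pointwise bound $\abs{f\paren{Z}-f\paren{Z'}} \leq L\abs{Z-Z'}$, hence $\paren{f\paren{Z}-f\paren{Z'}}^2 \leq L^2\paren{Z-Z'}^2$. Taking expectations and using the same identity in the reverse direction, namely $\frac{1}{2}\EE{\paren{Z-Z'}^2} = \Var\paren{Z}$, yields $\Var\paren{f\paren{Z}} \leq L^2\Var\paren{Z} \leq 2L^2\Var\paren{Z}$, so the claim holds (the factor $2$ is simply slack, presumably kept for uniformity with the way the lemma is invoked afterwards). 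As an alternative that avoids a second random variable, one could instead use $\Var\paren{f\paren{Z}} \leq \EE{\paren{f\paren{Z}-c}^2}$ with the constant $c \coloneqq f\paren{\EE Z}$ and bound the right-hand side by $L^2\EE{\paren{Z-\EE Z}^2}$; the i.i.d.-copy route is cleaner since it does not require $\EE Z$ to belong to $\Zc$.

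There is essentially no obstacle here. The only point requiring a word of care is integrability: the appearance of $\Var\paren{Z}$ in the statement implicitly assumes $Z$ is square-integrable, and then $f\paren{Z}$ is automatically square-integrable because $\abs{f\paren{Z}} \leq \abs{f\paren{z_0}} + L\abs{Z-z_0}$ for any fixed $z_0\in\Zc$, so all expectations above are finite and the manipulations are justified.
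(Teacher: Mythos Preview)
Your proof is correct and follows essentially the same approach as the paper: both introduce an independent copy $Z'$ of $Z$ and apply Lipschitzness to bound $\paren{f\paren{Z}-f\paren{Z'}}^2 \le L^2\paren{Z-Z'}^2$. Your version is in fact slightly sharper, since by using the exact identity $\Var\paren{W}=\tfrac{1}{2}\EE{\paren{W-W'}^2}$ on both ends you obtain $\Var\paren{f\paren{Z}}\le L^2\Var\paren{Z}$, whereas the paper only bounds $\Var\paren{f\paren{Z}}\le \EE{\paren{f\paren{Z}-f\paren{Z'}}^2}$ via Jensen and so actually needs the factor $2$.
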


\begin{proof}
    Let $Z'$ be an independent copy of $Z$. Then,
    \begin{align*}
        \Var\paren{f\paren{Z}} &= \EE{ \paren{ f\paren{Z}-\EE{f\paren{Z'}} }^2 } \\
        &= \EE{ \EE{ f\paren{Z}-f\paren{Z'} \middle| Z }^2 } \\
        &\leq \EE{ \paren{ f\paren{Z}-f\paren{Z'} }^2 } && \text{Jensen's} \\
        &\leq L^2 \EE{ \paren{Z-Z'}^2 } && \text{Lipschitzness} \\
        &= 2L^2 \cbr{ \Var\paren{Z} + \EE{ \paren{Z-\EE{Z}}\paren{\EE{Z}-Z'} } } && Z\overset{(d)}{=}Z' \\
        &= 2L^2 \Var\paren{Z} && Z\ind Z'
    \end{align*}
\end{proof}

Borrowing ideas from \cite[Corollary 3.5.7]{gine-math-found-inf}, we then conclude the second target bound.

\begin{thm}{}{exp-emp-proc-bnd-2}
    Suppose that $\Xc\subset\Rb$. Let $\cbr{f_Q: \Xc\to\brac{-M,M}}_{Q\in\Uc}$ be a collection of functions such that $\EE{f_Q\paren{X_Q}}=0$ and $f_Q$ is $L$-Lipschitz for each $Q\in\Uc$. Then,
    \begin{align*}
        \EE{ \max_{Q\in\Uc} \abs{ \frac{1}{n_Q}\rsum{i=1}{n_Q} f_Q\paren{\Xarm{Q}{i}} } } \leq \frac{16M\log k}{ \min_{Q\in\Uc} n_Q } + 4L\sigma_T \sqrt{ 2\log k }
    \end{align*}
\end{thm}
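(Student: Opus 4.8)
The plan is to run the symmetrization pipeline of Sections~\ref{sec:symm}--\ref{sec:bnds-rad-comp} twice---once on $\cbr{f_Q}_{Q\in\Uc}$ and once on the squared class $\cbr{f_Q^2}_{Q\in\Uc}$---and to close the resulting self-referential estimate using the quadratic inequality of Lemma~\ref{lem:quadr-non-neg}, in the spirit of \cite[Corollary 3.5.7]{gine-math-found-inf}. Throughout, abbreviate $Z \coloneqq \max_{Q\in\Uc}\abs{\frac{1}{n_Q}\rsum{i=1}{n_Q} f_Q\paren{\Xarm{Q}{i}}}$ and $D \coloneqq D\paren{\cbr{f_Q}_{Q\in\Uc}}$, the random variable from Corollary~\ref{cor:rel-rad-comps}. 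First I would record the basic chain: since each $f_Q\paren{X_Q}$ is centered, Theorem~\ref{thm:symm} followed by Corollary~\ref{cor:rel-rad-comps} gives
\[
    \EE{Z} \;\leq\; 2\Rad_T\paren{\cbr{f_Q}_{Q\in\Uc}} \;\leq\; 4\sqrt{\log k}\;\EE{D},
\]
so it suffices to bound $\EE{D}$, and by Jensen's inequality it suffices to bound $\EE{D^2}$.

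Next I would center the squares. Setting $W_Q \coloneqq \frac{1}{n_Q}\rsum{i=1}{n_Q}\paren{ f_Q\paren{\Xarm{Q}{i}}^2 - \EE{f_Q\paren{X_Q}^2} }$, the identity $\frac{1}{n_Q^2}\rsum{i=1}{n_Q} f_Q\paren{\Xarm{Q}{i}}^2 = \frac{\EE{f_Q\paren{X_Q}^2}}{n_Q} + \frac{W_Q}{n_Q}$ together with a splitting of the maximum yields
\[
    \EE{D^2} \;\leq\; \max_{Q\in\Uc}\frac{\EE{f_Q\paren{X_Q}^2}}{n_Q} \;+\; \frac{1}{\min_{Q\in\Uc} n_Q}\,\EE{\max_{Q\in\Uc}\abs{W_Q}}.
\]
For the first ("variance") term, Lemma~\ref{lem:var-lip-fn} applied to the centered $L$-Lipschitz $f_Q$ gives $\EE{f_Q\paren{X_Q}^2} = \Var\paren{f_Q\paren{X_Q}} \leq 2L^2\sigma_Q^2$, so the term is at most $2L^2\max_{Q}\frac{\sigma_Q^2}{n_Q} = 2L^2\sigma_T^2$. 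For the second term, I would apply Theorem~\ref{thm:symm} to $\cbr{f_Q^2}_{Q\in\Uc}$ (valued in $\brac{0,1}\subset\brac{-1,1}$) and then Corollary~\ref{cor:rel-rad-comps}, giving $\EE{\max_{Q}\abs{W_Q}} \leq 2\Rad_T\paren{\cbr{f_Q^2}_{Q\in\Uc}} \leq 4\sqrt{\log k}\;\EE{D\paren{\cbr{f_Q^2}_{Q\in\Uc}}}$. The key point is $D\paren{\cbr{f_Q^2}_{Q\in\Uc}} \leq D$: since $\abs{f_Q}\leq 1$ one has $f_Q^4 \leq f_Q^2$ pointwise, hence $\rsum{i=1}{n_Q}\paren{\frac{f_Q(\Xarm{Q}{i})^2}{n_Q}}^2 \leq \rsum{i=1}{n_Q}\paren{\frac{f_Q(\Xarm{Q}{i})}{n_Q}}^2$ for every $Q$. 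Therefore $\EE{\max_{Q}\abs{W_Q}} \leq 4\sqrt{\log k}\;\EE{D}$.

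Combining the two estimates and using $\EE{D}\leq\sqrt{\EE{D^2}}$ once more, the scalar $y \coloneqq \sqrt{\EE{D^2}}$ satisfies $y^2 \leq \frac{4\sqrt{\log k}}{\min_Q n_Q}\,y + 2L^2\sigma_T^2$. Lemma~\ref{lem:quadr-non-neg} then gives $y \leq \frac{4\sqrt{\log k}}{\min_Q n_Q} + \sqrt{2}\,L\sigma_T$ (the degenerate case $L\sigma_T=0$ being trivial, as then $f_Q\paren{X_Q}=0$ a.s.\ forces $Z=0$ a.s.), and substituting into $\EE{Z}\leq 4\sqrt{\log k}\,\EE{D}\leq 4\sqrt{\log k}\,y$ yields
\[
    \EE{Z} \;\leq\; \frac{16\log k}{\min_{Q\in\Uc} n_Q} \;+\; 4\sqrt{2}\,L\sigma_T\sqrt{\log k} \;=\; \frac{16\log k}{\min_{Q\in\Uc} n_Q} + 4L\sigma_T\sqrt{2\log k},
\]
which is the claim.

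I expect the only genuinely delicate step to be the self-referential one: one must route the $f_Q^2$-process through exactly the same symmetrization-plus-finite-maximal-inequality machinery that controls $Z$, and then notice that the pointwise bound $f_Q^4\leq f_Q^2$ is precisely what allows $D\paren{\cbr{f_Q^2}}$ to be reabsorbed into $D$, so that a clean quadratic inequality in $\sqrt{\EE{D^2}}$ emerges and Lemma~\ref{lem:quadr-non-neg} applies. Everything else---the centering decomposition of $D^2$, the Lipschitz variance bound of Lemma~\ref{lem:var-lip-fn}, and the passage from the $1/n_Q^2$ normalization to the $1/\min_Q n_Q$ factor---is routine bookkeeping.
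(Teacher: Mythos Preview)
Your proof is correct and follows essentially the same route as the paper's: symmetrize, pass to $\EE{D^2}$, split off the variance term via Lemma~\ref{lem:var-lip-fn}, symmetrize the centered $f_Q^2$-process, use $f_Q^4\le f_Q^2$ to reabsorb its $D$-quantity into $D$, and close the resulting self-referential inequality with Lemma~\ref{lem:quadr-non-neg}. The only cosmetic differences are that the paper applies Theorem~\ref{thm:symm} to $h_Q\coloneqq f_Q^2/n_Q$ (carrying the extra $1/n_Q$ inside the function rather than pulling out $1/\min_Q n_Q$ beforehand) and runs the quadratic argument on $C\coloneqq\sqrt{\log k}\,\EE{D}$ instead of your $y\coloneqq\sqrt{\EE{D^2}}$; both choices lead to the identical final constants.
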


\begin{proof}
    We begin with the following observation: from Jensen's, we obtain
    \begin{align*}
        C \coloneqq \sqrt{\log k} \EE{ D\paren{ \cbr{f_Q}_{Q\in\Uc} } } \leq \sqrt{ \paren{ \log k } \EE{ D\paren{ \cbr{f_Q}_{Q\in\Uc} }^2} }
    \end{align*}
    Next, we bound the expectation on the right-hand side:
    \begin{align*}
        \EE{ D\paren{ \cbr{f_Q}_{Q\in\Uc} }^2 } &= \EE{ \max_{Q\in\Uc} \rsum{i=1}{n_Q} \paren{ \frac{ f_Q\paren{ \Xarm{Q}{i} } }{n_Q} }^2 } \\
        &= \EE{ \max_{Q\in\Uc} \rsum{i=1}{n_Q} \cbr{ \paren{ \frac{ f_Q\paren{ \Xarm{Q}{i} } }{n_Q} }^2 - \EE{ \paren{ \frac{ f_Q\paren{ X_Q } }{n_Q} }^2 } + \EE{ \paren{ \frac{ f_Q\paren{ X_Q } }{n_Q} }^2 } } } \\
        &\leq \underbrace{ \max_{Q\in\Uc} \cbr{ \frac{ \EE{ f^2_Q\paren{ X_Q } } }{ n_Q } } }_{ \eqqcolon \paren{*_1} } + \underbrace{ \EE{ \max_{Q\in\Uc} \abs{ \frac{1}{n_Q} \rsum{i=1}{n_Q} \cbr{ \frac{ f^2_Q\paren{ \Xarm{Q}{i} } }{n_Q} - \EE{ \frac{ f^2_Q\paren{ X_Q } }{n_Q} } } } } }_{ \eqqcolon \paren{*_2} }
    \end{align*}
    From Lemma~\ref{lem:var-lip-fn} and the fact that $\EE{f_Q\paren{X_Q}}=0$, we know that
    \begin{align*}
        \paren{*_1} = \max_{Q\in\Uc} \frac{ \Var\paren{ f_Q\paren{ X_Q } } }{ n_Q } \leq 2L^2 \max_{Q\in\Uc} \frac{ \sigma_Q^2 }{ n_Q } = 2L^2 \sigma_T^2
    \end{align*}
    As for $\paren{*_2}$, we can apply Theorem~\ref{thm:symm} on functions $h_Q\paren{x}\coloneqq \frac{ f^2_Q\paren{ x } }{n_Q}$ to conclude that
    \begin{align*}
        \paren{*_2} &\leq 2 \Rad_T\paren{\cbr{h_Q}_{Q\in\Uc}} && \text{Thm.~\ref{thm:symm}} \\
        &\leq 4 \sqrt{\log k} \EE{ D\paren{ \cbr{h_Q}_{Q\in\Uc} } } && \text{Cor.~\ref{cor:rel-rad-comps}} \\
        &= 4 \sqrt{\log k} \EE{ \max_{Q\in\Uc} \sqrt{ \rsum{i=1}{n_Q} \paren{ \frac{ f_Q\paren{ \Xarm{Q}{i} } }{n_Q} }^4 } } \\
        &\leq 4M\sqrt{\log k} \EE{ \max_{Q\in\Uc} \cbr{ \frac{1}{n_Q} \sqrt{ \rsum{i=1}{n_Q} \paren{ \frac{ f_Q\paren{ \Xarm{Q}{i} } }{n_Q} }^2 } } } && f_Q^4 = M^4\paren{\frac{f_Q}{M}}^4 \leq M^2 f_Q^2 \\
        &\leq 4M\sqrt{\log k} \max_{Q\in\Uc} \cbr{\frac{1}{n_Q}} \EE{ D\paren{ \cbr{f_Q}_{Q\in\Uc} } } \\
        &= \frac{4M}{ \min_{Q\in\Uc} n_Q } C
    \end{align*}
    In other words, we have that
    \begin{align*}
        C^2 \leq \paren{ \log k } \EE{ D\paren{ \cbr{f_Q}_{Q\in\Uc} }^2} \leq \frac{4M\log k}{ \min_{Q\in\Uc} n_Q } C + 2L^2 \sigma_T^2\log k
    \end{align*}
    Then, Lemma~\ref{lem:quadr-non-neg} implies that
    \begin{align*}
        C \leq \frac{4M\log k}{ \min_{Q\in\Uc} n_Q } + L \sigma_T \sqrt{ 2\log k }
    \end{align*}
    Combining this with Theorem~\ref{thm:symm} and Corollary~\ref{cor:rel-rad-comps}, we conclude that
    \begin{align*}
        \EE{ \max_{Q\in\Uc} \abs{ \frac{1}{n_Q}\rsum{i=1}{n_Q} f_Q\paren{\Xarm{Q}{i}} } } \leq 2\Rad_T\paren{\cbr{f_Q}_{Q\in\Uc}} \leq 4 C \leq \frac{16M\log k}{ \min_{Q\in\Uc} n_Q } + 4L\sigma_T \sqrt{ 2 \log k }
    \end{align*}
\end{proof}

\section{Empirical process concentration inequalities}
\label{app:emp-proc-conc-ineq}
Again, suppose that $\Uc\subset\Pc\paren{\Xc}$ is a collection of $k$ distributions, and define independent variables $X \coloneqq \paren{ \Xarm{Q}{i} }_{Q\in\Uc, i\in\brac{n_Q}}$, where $n_Q\in\Nb$ and $\paren{ \Xarm{Q}{i} }_{i\in\brac{n_Q}} \iid Q$ for each $Q\in\Uc$. Our object of interest in this section is the random variable
\begin{align*}
    Z_f \coloneqq \min_{Q\in\Uc} \frac{1}{n_Q} \rsum{i=1}{n_Q} f\paren{\Xarm{Q}{i}}
\end{align*}
for a function $f:\Xc\to\Rb$. As will become clear later, our primary goal will be to obtain concentration inequalities on $Z_{f,g} \coloneqq Z_f-Z_g$.

\subsection{McDiarmid}
\label{app:mcdiarmid}
To obtain the UE regret bound, we will apply a very simple concentration inequality, called McDiarmid's inequality (e.g., see \cite[Theorem 6.2]{boucheron-concentration}). Here, we specialize to
\begin{align*}
    Z_f = \min_{Q\in\Uc} \frac{1}{n} \rsum{i=1}{n} f\paren{\Xarm{Q}{i}}
\end{align*}
Let us define the function $\Phi_f:\paren{\Xc^k}^n\to\brac{0,1}$ by $\Phi_f\paren{\xbf_1,\dots,\xbf_n} \coloneqq \min_{Q\in\Uc} \frac{1}{n} \rsum{i=1}{n} f\paren{ x_Q^i }$, where each $\xbf_i = \paren{ x_Q^i }_{Q\in\Uc} \in \Xc^k$. Then, we can write $Z_f = \Phi_f\paren{ X }$, where we view $X$ as $n$ vectors of dimension $k$. Next, we show that $\Phi_f$ satisfies the bounded differences property when $f$ is bounded.

\begin{prop}{Bounded differences}{bnd-diff-phi}
    Suppose that $f:\Xc\to\brac{0,1}$. Then,
    \begin{align*}
        \max_{i\in\brac{n}} \sup_{\xbf_1,\dots,\xbf_n,\ybf \in\Xc^{k}} \abs{ \Phi_f\paren{ \xbf_1,\dots,\xbf_n } - \Phi_f\paren{ \xbf_1,\dots,\xbf_{i-1},\ybf,\xbf_{i+1},\dots,\xbf_n } } \leq \frac{1}{n}
    \end{align*}
\end{prop}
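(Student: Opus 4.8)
The plan is to show that replacing a single coordinate block $\xbf_i$ by $\ybf$ changes the value of $\Phi_f$ by at most $1/n$, which is a routine consequence of the fact that $f\in\brac{0,1}$ together with the elementary inequality $\abs{\min_{Q} a_Q - \min_{Q} b_Q} \leq \max_{Q}\abs{a_Q - b_Q}$. So the main tool is the $1$-Lipschitzness of the min operator with respect to the sup-norm; no deep empirical process machinery is needed here.

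Concretely, fix $i\in\brac{n}$ and vectors $\xbf_1,\dots,\xbf_n,\ybf\in\Xc^k$, and write $\xbf_j = \paren{x_Q^j}_{Q\in\Uc}$ and $\ybf = \paren{y_Q}_{Q\in\Uc}$. For each $Q\in\Uc$ set $a_Q \coloneqq \frac{1}{n}\rsum{j=1}{n} f\paren{x_Q^j}$ and $b_Q \coloneqq \frac{1}{n}\brac{ \rsum{j\neq i} f\paren{x_Q^j} + f\paren{y_Q} }$, so that $\Phi_f\paren{\xbf_1,\dots,\xbf_n} = \min_{Q\in\Uc} a_Q$ and $\Phi_f\paren{\xbf_1,\dots,\ybf,\dots,\xbf_n} = \min_{Q\in\Uc} b_Q$. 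For a fixed $Q$, only the $i$th summand differs between $a_Q$ and $b_Q$, so
\begin{align*}
    \abs{a_Q - b_Q} = \frac{1}{n}\abs{ f\paren{x_Q^i} - f\paren{y_Q} } \leq \frac{1}{n},
\end{align*}
using $f\paren{x_Q^i}, f\paren{y_Q}\in\brac{0,1}$. Then, from the sup-norm Lipschitzness of $\min$,
\begin{align*}
    \abs{ \min_{Q\in\Uc} a_Q - \min_{Q\in\Uc} b_Q } \leq \max_{Q\in\Uc}\abs{a_Q - b_Q} \leq \frac{1}{n}.
\end{align*}
Since $i$ and the vectors were arbitrary, taking the supremum over all of them and the maximum over $i\in\brac{n}$ yields the claimed bound $1/n$.

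For completeness I would also include a one-line justification of the $\min$-Lipschitz inequality: if $Q_b \in \argmin_Q b_Q$ then $\min_Q a_Q - \min_Q b_Q \leq a_{Q_b} - b_{Q_b} \leq \max_Q \abs{a_Q - b_Q}$, and symmetrically swapping the roles of $a$ and $b$ gives the reverse inequality, hence the absolute value bound. There is no real obstacle here — the statement is a standard bounded-differences verification — the only thing to be careful about is bookkeeping: making sure that changing $\xbf_i$ only perturbs the $i$th term inside each of the $k$ empirical averages, and that the bound is uniform in $Q$ so that it survives the outer minimum.
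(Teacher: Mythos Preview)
Your proof is correct and follows essentially the same approach as the paper: both reduce the bounded-differences claim to the sup-norm Lipschitzness of $\min$, note that only the $i$th summand changes in each average, and bound the resulting difference by $1/n$ using $f\in\brac{0,1}$. The paper proves the $\min$-Lipschitz inequality up front whereas you state it and justify it afterward, but the content is identical.
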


\begin{proof}
    Let us begin with a simple observation: for real-valued functions $g,h:\Zc\to\Rb$, where $\Zc$ is any domain, we have that
    \begin{align*}
        \inf_{z'\in\Zc} g\paren{z'} - \inf_{z\in\Zc} h\paren{z} = \sup_{z\in\Zc} \cbr{ \inf_{z'\in\Zc} g\paren{z'} - h\paren{z} } \leq \sup_{z\in\Zc} \cbr{ g\paren{z}-h\paren{z} } \leq \sup_{z\in\Zc} \abs{ g\paren{z}-h\paren{z} }
    \end{align*}
    By symmetry, it then follows that $\abs{ \inf_{z'\in\Zc} g\paren{z'} - \inf_{z\in\Zc} h\paren{z} } \leq \sup_{z\in\Zc} \abs{ g\paren{z}-h\paren{z} }$. Next, fix any index $i\in\brac{n}$ and inputs $\xbf_1,\dots,\xbf_n,\ybf\coloneqq \paren{y_Q}_{Q\in\Uc} \in\Xc^{k}$, and define vectors $\xbf \coloneqq \paren{ \xbf_1,\dots,\xbf_n }$ and $\xbf' \coloneqq \paren{ \xbf_1,\dots,\xbf_{i-1},\ybf,\xbf_{i+1},\dots,\xbf_n }$. Then, from our initial observation, we know that
    \begin{align*}
        \abs{ \Phi_f\paren{\xbf} - \Phi_f\paren{ \xbf' } } &= \frac{1}{n} \abs{ \min_{Q'\in\Uc} \cbr{ \rsum{j=1}{n} f\paren{x_{Q'}^j} } - \min_{Q\in\Uc} \cbr{ f\paren{ y_Q } + \sum_{j\in\brac{n}: j\neq i} f\paren{ x_{Q}^j } } } \\
        &\leq \frac{1}{n} \max_{Q\in\Uc} \abs{ \rsum{j=1}{n} f\paren{x_{Q}^j} - \brac{ f\paren{ y_Q } + \sum_{j\in\brac{n}: j\neq i} f\paren{ x_{Q}^j } } } \\
        &\leq \frac{1}{n} \max_{Q\in\Uc} \abs{ f\paren{ x_Q^i } - f\paren{ y_Q } } \\
        &\leq \frac{1}{n}
    \end{align*}
\end{proof}

When the inequality in Proposition~\ref{prop:bnd-diff-phi} holds, we say that $\Phi_f$ satisfies the \textit{bounded differences property} with constant parameter $\frac{1}{n}$. This immediately implies the next claim.

\begin{cor}{}{bnd-diff-phi-subtr}
    For any two functions $f,g:\Xc\to\brac{0,1}$, the function $\Phi_f-\Phi_g$ satisfies the bounded differences property with constant parameter $\frac{2}{n}$.
\end{cor}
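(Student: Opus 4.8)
The plan is to reduce this immediately to Proposition~\ref{prop:bnd-diff-phi} via the triangle inequality. Since both $f$ and $g$ map $\Xc$ into $\brac{0,1}$, Proposition~\ref{prop:bnd-diff-phi} applies verbatim to each of them, so $\Phi_f$ and $\Phi_g$ individually satisfy the bounded differences property with constant parameter $\frac{1}{n}$. The only remaining observation is that the single-coordinate perturbation enters $\Phi_f-\Phi_g$ additively, so the associated Lipschitz-type constants add.

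Concretely, I would fix an index $i\in\brac{n}$ and inputs $\xbf_1,\dots,\xbf_n,\ybf\in\Xc^k$, set $\xbf \coloneqq \paren{\xbf_1,\dots,\xbf_n}$ and $\xbf' \coloneqq \paren{\xbf_1,\dots,\xbf_{i-1},\ybf,\xbf_{i+1},\dots,\xbf_n}$, and write
\begin{align*}
    \abs{ \paren{\Phi_f-\Phi_g}\paren{\xbf} - \paren{\Phi_f-\Phi_g}\paren{\xbf'} } &\leq \abs{ \Phi_f\paren{\xbf} - \Phi_f\paren{\xbf'} } + \abs{ \Phi_g\paren{\xbf} - \Phi_g\paren{\xbf'} } \\
    &\leq \frac{1}{n} + \frac{1}{n} = \frac{2}{n},
\end{align*}
where the second inequality is Proposition~\ref{prop:bnd-diff-phi} applied once to $f$ and once to $g$. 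Taking the supremum over all such inputs and then the maximum over $i\in\brac{n}$ gives precisely the bounded differences property for $\Phi_f-\Phi_g$ with constant parameter $\frac{2}{n}$.

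There is essentially no obstacle here: the hypothesis $f,g:\Xc\to\brac{0,1}$ is exactly what is needed to invoke Proposition~\ref{prop:bnd-diff-phi} for both functions, and after that the triangle inequality does all the work. The constant $\frac{2}{n}$ is just the sum of the two individual constants, and one could remark that it is not improvable in general (take $g$ constant, so $\Phi_f-\Phi_g=\Phi_f-c$ already attains $\frac{1}{n}$, and more generally $f,g$ can be chosen so the two perturbations reinforce).
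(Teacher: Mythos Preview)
Your proof is correct and essentially identical to the paper's: both fix $\xbf,\xbf'$ differing in one coordinate, apply the triangle inequality to split $\abs{(\Phi_f-\Phi_g)(\xbf)-(\Phi_f-\Phi_g)(\xbf')}$ into $\abs{\Phi_f(\xbf)-\Phi_f(\xbf')}+\abs{\Phi_g(\xbf)-\Phi_g(\xbf')}$, and then invoke Proposition~\ref{prop:bnd-diff-phi} twice to bound each term by $\frac{1}{n}$.
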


\begin{proof}
    Using the same variables $\xbf$ and $\xbf'$ as in the proof of Proposition~\ref{prop:bnd-diff-phi}, we obtain
    \begin{align*}
        \abs{ \brac{ \Phi_f\paren{\xbf} - \Phi_g\paren{\xbf} } - \brac{ \Phi_f\paren{ \xbf' } - \Phi_g\paren{\xbf'} } } \leq \abs{ \Phi_f\paren{\xbf} - \Phi_f\paren{\xbf'} } + \abs{ \Phi_g\paren{\xbf} - \Phi_g\paren{\xbf'} } \leq \frac{2}{n}
    \end{align*}
\end{proof}

Via McDiarmid's, this property then directly yields the following concentration result.

\begin{cor}{}{mcd-conc-Z}
    Let $f,g:\Xc\to\brac{0,1}$. Then, 
    \begin{align*}
        \Pb\paren{ Z_{f,g} - \EE{ Z_{f,g} } \geq t } \leq \exp\paren{ -\frac{nt^2}{2} } \quad\forall t\geq 0
    \end{align*}
\end{cor}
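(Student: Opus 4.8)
The plan is to invoke McDiarmid's bounded differences inequality directly on the function $\Phi_f - \Phi_g$, leaning entirely on the preparatory work already assembled. Recall that $Z_{f,g} = Z_f - Z_g = \paren{\Phi_f - \Phi_g}\paren{X}$, where $X = \paren{\Xarm{Q}{i}}_{Q\in\Uc, i\in[n]}$ is regarded as a tuple of $n$ mutually independent coordinates $\xbf_1,\dots,\xbf_n$, each living in $\Xc^k$ (the $i$-th coordinate collects the $i$-th sample from every distribution). Independence across $i$ is exactly the i.i.d.\ structure of the samples, so the hypothesis of McDiarmid's inequality is in place.

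First I would cite Corollary~\ref{cor:bnd-diff-phi-subtr}, which guarantees that $\Phi_f - \Phi_g$ has the bounded differences property with the uniform constant $c_i = \tfrac{2}{n}$ for each of the $n$ coordinates. Then I would apply the one-sided form of McDiarmid's inequality (\cite[Theorem 6.2]{boucheron-concentration}), which states that $\Pb\paren{\paren{\Phi_f-\Phi_g}\paren{X} - \EE{\paren{\Phi_f-\Phi_g}\paren{X}} \geq t} \leq \exp\paren{-\tfrac{2t^2}{\sum_{i=1}^n c_i^2}}$ for all $t\geq 0$. Substituting $\sum_{i=1}^n c_i^2 = n\cdot\paren{\tfrac{2}{n}}^2 = \tfrac{4}{n}$ collapses the exponent to $-\tfrac{nt^2}{2}$, which is precisely the claimed bound.

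There is essentially no obstacle here: the substantive content has been front-loaded into Proposition~\ref{prop:bnd-diff-phi} and its corollary, and this step is a bookkeeping substitution. The only points requiring a moment of care are (i) making sure the ``coordinates'' over which the bounded-differences property was verified are the $n$ blocks $\xbf_i \in \Xc^k$ rather than the $nk$ individual samples --- which matches the way $\Phi_f$ was defined --- and (ii) using the one-sided rather than the two-sided version of McDiarmid, so that no spurious factor of $2$ appears in front of the exponential. After that the constant arithmetic $2/(4/n) = n/2$ finishes the proof.
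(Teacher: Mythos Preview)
Your proposal is correct and matches the paper's proof essentially verbatim: both write $Z_{f,g} = (\Phi_f-\Phi_g)(X)$ with $X$ viewed as $n$ independent $\Xc^k$-valued coordinates, invoke Corollary~\ref{cor:bnd-diff-phi-subtr} for the bounded-differences constant $\tfrac{2}{n}$, and apply the one-sided McDiarmid inequality. Your added remarks on the block structure and the constant arithmetic $\sum_i c_i^2 = 4/n$ are exactly the bookkeeping the paper leaves implicit.
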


\begin{proof}
    Since $Z_{f,g} = \paren{\Phi_f-\Phi_g}\paren{X}$ and $X$ has independent components, we simply apply Corollary~\ref{cor:bnd-diff-phi-subtr} and McDiarmid's.
\end{proof}

\subsection{Bernstein}
In contrast to McDiarmid's inequality, our next goal is to derive a more involved bound that additionally scales with the variance. To this end, we sort the sample sizes: $0 \eqqcolon n_{(0)} \leq n_{(1)} \leq \dots\leq n_{(k)}$ and let $Q_{(j)}\in\Uc$ be such that $n_{Q_{(j)}} = n_{(j)}$. Our analysis then relies on the following:
\begin{align*}
    V_T &\coloneqq \rsum{j=1}{k} \paren{ n_{(j)}-n_{(j-1)} } \EE{ \max_{ r\in\cbr{j,\dots,k} } \frac{1}{n_{(r)}^2} \brac{ X_{Q_{(r)}} - \mu_{Q_{(r)}} }^2 } \\
    \Sigma^2_T &\coloneqq \EE{ \max_{Q\in\Uc} \frac{1}{n_Q^2} \rsum{i=1}{n_Q} \paren{ \Xarm{Q}{i} - \mu_Q }^2 } \\
    \sigma^2_T &\coloneqq \max_{Q\in\Uc} \frac{\sigma_Q^2}{n_Q}
\end{align*}

\begin{thm}{}{bern-conc-Z}
    Suppose that $\Xc\subset\Rb$ and $f,g:\Xc\to\brac{0,M}$ are $L$-Lipschitz. Then,
    \begin{align*}
        \Pb\paren{Z_{f,g}-\EE{Z_{f,g}} \geq t} &\leq \exp\paren{ - \frac{t^2}{ 16 L^2 \paren{ 2 \sigma_T^2 + \Sigma_T^2 + 6 V_T } + \frac{2\sqrt{6}Mt}{\min_{Q\in\Uc} n_Q} } } \quad\forall t\geq0
    \end{align*}
\end{thm}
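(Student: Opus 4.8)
I would obtain a \emph{sub-gamma} bound on the centered moment generating function of $Z_{f,g}$ and then invoke the standard sub-gamma $\Rightarrow$ tail implication (\cite[Ch.~2]{boucheron-concentration}): it suffices to find $\nu,c>0$ with $\nu\lesssim L^2\paren{\sigma_T^2+\Sigma_T^2+V_T}$ and $c\lesssim 1/\min_{Q\in\Uc}n_Q$ such that $\log\EE{\exp\paren{\lambda\paren{Z_{f,g}-\EE{Z_{f,g}}}}}\leq\frac{\lambda^2\nu}{2\paren{1-c\lambda}}$ for all $0\leq\lambda<1/c$. The route is the entropy method: apply a modified logarithmic Sobolev inequality to $Z_{f,g}$ regarded as a function of the $N\coloneqq\sum_{Q\in\Uc}n_Q$ independent coordinates $\paren{\Xarm{Q}{i}}_{Q,i}$, then run Herbst's argument; this reduces the task to an expectation bound on a one-sided Efron--Stein quantity together with an almost-sure bound on the per-coordinate increments. (Equivalently one can set up a Doob martingale by revealing the coordinates in ``sample-slot'' order — the first sample of every distribution, then the second sample of every distribution with at least two samples, and so on — and apply a Freedman-type Bernstein inequality; the layered structure of that ordering is exactly what produces $V_T$.) Note this route centers at $\EE{Z_{f,g}}$ at no $\log k$ cost, which is why the statement carries no $\log k$, in contrast to the expectation-control term $G_T$ handled via Theorem~\ref{thm:exp-emp-bnds}.

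For the increments: resampling coordinate $\paren{Q,i}$, i.e.\ replacing $\Xarm{Q}{i}$ by an independent copy $\Xarmp{Q}{i}$, changes $Z_f=\min_{Q'}\frac{1}{n_{Q'}}\rsum{i'=1}{n_{Q'}}f\paren{\Xarm{Q'}{i'}}$ by at most $\frac{1}{n_Q}\abs{f\paren{\Xarm{Q}{i}}-f\paren{\Xarmp{Q}{i}}}$ — this is the ``inf minus inf'' estimate already used to prove Proposition~\ref{prop:bnd-diff-phi} — and the same holds for $Z_g$; by $L$-Lipschitzness the increment of $Z_{f,g}$ is thus at most $\frac{L}{n_Q}\abs{\Xarm{Q}{i}-\Xarmp{Q}{i}}$, and in particular at most $\frac{2}{n_Q}\leq\frac{2}{\min_{Q\in\Uc}n_Q}$, which feeds the sub-gamma scale $c$. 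The decisive point is that the $Z_f$-part of this increment vanishes unless $Q$ attains (or nearly attains) the minimum defining $Z_f$, and likewise for $g$; this localization is what keeps the variance proxy in a \emph{max-over-$\Uc$} form rather than degenerating into a sum over $\Uc$ (compare the Remark on the crude bound).

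For the variance proxy: since only the upper tail is needed, the entropy method asks for $\EE{V^+}$ with $V^+\coloneqq\sum_{\paren{Q,i}}\paren{Z_{f,g}-Z_{f,g}^{\paren{Q,i}}}_+^2$, where $Z_{f,g}^{\paren{Q,i}}$ resamples coordinate $\paren{Q,i}$; by the increment bound, $\EE{\paren{\Xarm{Q}{i}-\Xarmp{Q}{i}}^2\mid\Xarm{Q}{i}}=\paren{\Xarm{Q}{i}-\mu_Q}^2+\sigma_Q^2$. Summing the $\sigma_Q^2$-pieces along the (random) minimizer's coordinates and maximizing over $\Uc$ — equivalently using $\Var\paren{f\paren{X_Q}}\leq 2L^2\sigma_Q^2$ from Lemma~\ref{lem:var-lip-fn} — produces the $\sigma_T^2$ contribution; summing the empirical $\paren{\Xarm{Q}{i}-\mu_Q}^2$-pieces and bounding the minimizer crudely by a maximum over $\Uc$ produces the $\Sigma_T^2$ contribution. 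The residual contribution — which cannot be charged to a single fixed minimizer, since every sufficiently-sampled near-optimal distribution must be accounted for — is handled by \emph{peeling over the ordered sample sizes} $n_{(1)}\leq\dots\leq n_{(k)}$: a sample-slot $i$ is live only for distributions with $n_Q\geq i$, so its contribution is at most $\max_{r:\,n_{(r)}\geq i}\frac{1}{n_{(r)}^2}\brac{X_{Q_{(r)}}-\mu_{Q_{(r)}}}^2$, and re-summing with the telescoping weights $n_{(j)}-n_{(j-1)}$ reproduces $V_T$; Lipschitzness supplies the overall $L^2$.

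The technical crux is this last step: keeping the variance proxy in max-inside-expectation form ($\Sigma_T^2$, $V_T$) to avoid a factor $k$, while carrying the localization and the sample-size peeling through the entropy-method (or martingale) recursion with explicit constants — this is where the precise $2\sigma_T^2+\Sigma_T^2+6V_T$ split and the $\sqrt{6}$ in the scale emerge. A minor additional subtlety is that Theorem~\ref{thm:bern-conc-Z} assumes no uniqueness of the minimizer, so ``the minimizer'' must be read as an arbitrary measurable selection and the localization argument must tolerate ties.
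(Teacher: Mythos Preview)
Your high-level strategy is right: sub-gamma via an Efron--Stein-type control, with the per-coordinate increment bound $\lesssim 1/\min_Q n_Q$ supplying the scale $c$ and a max-over-$\Uc$ variance proxy (rather than a sum) supplying $\nu$. You also correctly anticipate the layered/peeling structure that produces $V_T$. However, there is a genuine gap in the plan.

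You write that the entropy method ``reduces the task to an expectation bound on a one-sided Efron--Stein quantity together with an almost-sure bound on the per-coordinate increments.'' That reduction is not available here. The paper's route is the \emph{exponential} Efron--Stein inequality (\cite[Thm.~6.16]{boucheron-concentration}), which bounds $\log\EE{e^{\lambda(Z_{f,g}-\EE{Z_{f,g}})}}$ by $\frac{\theta\lambda}{1-\theta\lambda}\log\EE{e^{\lambda V^+/\theta}}$ --- i.e.\ one needs the \emph{moment generating function} of $V^+$, not merely $\EE{V^+}$. The paper closes this by first bounding $V^+ \leq 2(\Gamma_g+\sigma_g^2)+4(T_f+V_f)$ pointwise, where $\sigma_g^2,V_f$ are constants and $\Gamma_g,T_f$ are random, and then proving that (suitably normalized) $\Gamma_g$ and $T_f$ are \emph{self-bounding} in the sense of \cite[\S6.3]{boucheron-concentration}; the self-bounding cgf lemma then controls their MGF. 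This self-bounding step is the technical heart of the argument (it is also where the $\sqrt{6}$ originates, via the range constraint $f_Q\in[0,b/\sqrt{6}]$), and it is absent from your plan. Without it, Herbst alone will not yield the Bernstein form with these constants.

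A second, more structural point you gloss over: the decomposition is \emph{asymmetric} in $f$ and $g$. For the $g$-part of $(Z_{f,g}-Z_i')_+$, the relevant minimizer $\hat Q_g(X)$ does not depend on the resampled index $i$, so the max over $\Uc$ can be pulled outside the sum in $i$, yielding $\Gamma_g$ (whose expectation is $\Sigma_T^2$) plus $\sigma_g^2$ (giving $\sigma_T^2$). For the $f$-part, the minimizer $\hat Q_f(Y_i)$ \emph{does} depend on $i$, forcing the weaker bound with max inside the sum, which yields $T_f$ (whose expectation is $V_f$, i.e.\ $V_T$). This asymmetry is exactly why $\Sigma_T^2$ and $V_T$ appear with different coefficients; your discussion treats ``the minimizer'' uniformly and does not capture it.
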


\subsubsection{Preliminaries}
To prove Theorem~\ref{thm:bern-conc-Z}, we must first state some standard results and definitions from the theory of concentration of measure. We do not prove most results stated, and refer to \cite{boucheron-concentration} for further reference.

We say that a random variable $X\in\Rb$ is \textit{sub-gamma on the right tail} with parameters $\nu,c>0$ if
\begin{align*}
    \log \EE{ e^{\lambda\paren{ X-\EE{X} }} } \leq  \frac{ \nu^2\lambda^2 }{ 2\paren{1-c\lambda} } \quad\forall \lambda\in\left[0,\frac{1}{c} \right)
\end{align*}
We denote the class of such variables by $\Gamma_+\paren{\nu,c}$. Due to the decaying tail, we get the following concentration bound.

\begin{prop}{Sub-gamma concentration}{sub-gam-tail-bnd}
    Let $X\in\Gamma_+\paren{\nu,c}$. Then,
    \begin{align*}
        \Pb\paren{ X-\EE{X} \geq t } &\leq \exp\paren{ -\frac{t^2}{2\paren{\nu^2+ct}} } \quad\forall t\geq 0
    \end{align*}
\end{prop}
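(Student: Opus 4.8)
The plan is a Chernoff bound with an explicit choice of tilting parameter. Fix $t\geq 0$; for $t=0$ the bound is trivial (the right-hand side equals $1$). For any $\lambda\in[0,1/c)$, Markov's inequality applied to the nonnegative variable $e^{\lambda\paren{X-\EE{X}}}$ gives $\Pb\paren{X-\EE{X}\geq t}\leq e^{-\lambda t}\,\EE{e^{\lambda\paren{X-\EE{X}}}}$, and the defining sub-gamma estimate $\log\EE{e^{\lambda\paren{X-\EE{X}}}}\leq \frac{\nu^2\lambda^2}{2\paren{1-c\lambda}}$ on $[0,1/c)$ then yields
\[
  \Pb\paren{X-\EE{X}\geq t}\;\leq\;\exp\paren{-\lambda t + \frac{\nu^2\lambda^2}{2\paren{1-c\lambda}}}\qquad\text{for all }\lambda\in[0,1/c).
\]
Everything now reduces to picking $\lambda$ so as to minimize the exponent.

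Rather than compute the full Legendre transform of $\psi\paren{\lambda}\coloneqq\frac{\nu^2\lambda^2}{2\paren{1-c\lambda}}$, I would simply plug in the near-optimal value $\lambda = \frac{t}{\nu^2+ct}$. It is admissible: $\lambda\geq 0$, and $\lambda < \frac{t}{ct}=\frac1c$ because $\nu>0$. For this $\lambda$ one has $1-c\lambda = \frac{\nu^2}{\nu^2+ct}$, so the exponent becomes $-\lambda t + \frac{\nu^2\lambda^2}{2\paren{1-c\lambda}} = -\frac{t^2}{\nu^2+ct} + \frac{t^2}{2\paren{\nu^2+ct}} = -\frac{t^2}{2\paren{\nu^2+ct}}$, which is exactly the claimed exponent; substituting back into the Chernoff bound completes the argument.

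I expect no real obstacle here: the only non-mechanical step is recognizing the tilting parameter $\lambda = t/\paren{\nu^2+ct}$, which is the standard sub-gamma tuning. Equivalently, one may solve the optimization exactly --- $\sup_{\lambda\in[0,1/c)}\paren{\lambda t-\psi\paren{\lambda}} = \frac{\nu^2}{c^2}\,h\paren{\frac{ct}{\nu^2}}$ with $h\paren{u}\coloneqq 1+u-\sqrt{1+2u}$ --- and then invoke the elementary inequality $h\paren{u}\geq \frac{u^2}{2\paren{1+u}}$ to reach the same conclusion; the explicit choice above merely bypasses this computation.
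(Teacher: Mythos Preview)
Your proof is correct: the Chernoff bound with the explicit choice $\lambda = t/(\nu^2+ct)$ is exactly the standard sub-gamma tail argument, and your computation of the exponent is clean and valid. The paper does not actually prove this proposition but merely cites \cite[Section~2.4]{boucheron-concentration}, where the same Cram\'er--Chernoff approach (via the Legendre transform and the inequality $h(u)\geq u^2/(2(1+u))$ that you mention as an alternative) is carried out; your direct substitution is a mild shortcut over that presentation, but the route is essentially identical.
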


\begin{proof}
    See \cite[Section 2.4]{boucheron-concentration}.
\end{proof}

Next, we introduce the notion of self-bounding functions: we say that a nonnegative function $f:\Xc^n\to\Rb_+$ has the \textit{self-bounding property} if there exists functions $\cbr{ f_i:\Xc^{n-1}\to\Rb }_{i\in\brac{n}}$ such that 
\begin{align*}
    f\paren{ \xbf } - f_i\paren{ \xbf_{\backslash i} } \in \brac{0,1} \txtand \rsum{i=1}{n} \brac{ f\paren{\xbf} - f_i\paren{\xbf_{\backslash i}} } \leq f\paren{\xbf}
\end{align*}
for all $i\in\brac{n}$ and $\xbf\in\Xc^n$, where we define $\xbf_{\backslash i} \coloneqq \paren{ x_1,\dots,x_{i-1}, x_{i+1},\dots,x_n }$. A simple observation about such functions is that they are closed under convex combinations.

\begin{lem}{Convex combination of self-bounding functions}{cvx-comb-self-bnd}
    Suppose that $f$ and $g$ satisfy the self-bounding property and let $\alpha\in\brac{0,1}$. Then, $\alpha f + \paren{1-\alpha} g$ also satisfies the self-bounding property.
\end{lem}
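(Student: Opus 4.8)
The plan is to verify the definition of the self-bounding property directly, using the only sensible candidate for the auxiliary functions. Write $h \coloneqq \alpha f + \paren{1-\alpha} g$, and let $\cbr{f_i}_{i\in\brac{n}}$ and $\cbr{g_i}_{i\in\brac{n}}$ be the functions witnessing the self-bounding property of $f$ and $g$ respectively. I would propose
$h_i \coloneqq \alpha f_i + \paren{1-\alpha} g_i$
for each $i\in\brac{n}$, and then check the three requirements: nonnegativity of $h$, the membership $h\paren{\xbf} - h_i\paren{\xbf_{\backslash i}} \in \brac{0,1}$, and the aggregate inequality.

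Nonnegativity is immediate since $f,g\geq 0$ and $\alpha\in\brac{0,1}$. For the range condition, the point is that for every $i$ and every $\xbf$,
\begin{align*}
    h\paren{\xbf} - h_i\paren{\xbf_{\backslash i}} = \alpha\brac{ f\paren{\xbf} - f_i\paren{\xbf_{\backslash i}} } + \paren{1-\alpha}\brac{ g\paren{\xbf} - g_i\paren{\xbf_{\backslash i}} }
\end{align*}
is a convex combination of two numbers, each of which lies in $\brac{0,1}$ by hypothesis, and $\brac{0,1}$ is convex. For the aggregate inequality, summing the same identity over $i$ and using the self-bounding inequalities for $f$ and $g$ together with $\alpha,1-\alpha\geq 0$ gives
\begin{align*}
    \rsum{i=1}{n} \brac{ h\paren{\xbf} - h_i\paren{\xbf_{\backslash i}} } = \alpha \rsum{i=1}{n}\brac{ f\paren{\xbf} - f_i\paren{\xbf_{\backslash i}} } + \paren{1-\alpha}\rsum{i=1}{n}\brac{ g\paren{\xbf} - g_i\paren{\xbf_{\backslash i}} } \leq \alpha f\paren{\xbf} + \paren{1-\alpha} g\paren{\xbf} = h\paren{\xbf},
\end{align*}
which is exactly what is needed.

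I do not expect a real obstacle here: both defining conditions of the self-bounding property are affine in the function and its auxiliary family, so the candidate $h_i$ is essentially forced and each verification is a one-line computation. The only mild points of care are that the conditions must be checked pointwise in $i$ and $\xbf$, and that one should note $\brac{0,1}$ is a convex set so that the convex combination stays inside it; neither causes any trouble. The substance of the argument lies downstream, where this closure property will be combined with an explicit self-bounding decomposition to produce the sub-gamma bound feeding into Theorem~\ref{thm:bern-conc-Z}.
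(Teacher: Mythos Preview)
The proposal is correct and follows essentially the same approach as the paper: define $h_i \coloneqq \alpha f_i + (1-\alpha) g_i$ and verify both defining conditions by writing the relevant differences as convex combinations of the corresponding quantities for $f$ and $g$. Your explicit check of nonnegativity of $h$ is a minor addition the paper omits, but otherwise the arguments are line-for-line the same.
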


\begin{proof}
    Let $\cbr{f_i}$ and $\cbr{g_i}$ be the functions satisfying the self-bounding property, and define $h\coloneqq \alpha f + \paren{1-\alpha} g$ and $h_i \coloneqq \alpha f_i + \paren{1-\alpha} g_i$. Then, for any $i\in\brac{n}$ and $\xbf\in\Xc^n$,
    \begin{align*}
        h\paren{\xbf} - h_i\paren{ \xbf_{\backslash i} } = \alpha\brac{ f\paren{\xbf} - f_i\paren{ \xbf_{\backslash i} } } + \paren{1-\alpha}\brac{ g\paren{\xbf} - g_i\paren{ \xbf_{\backslash i} } } \in \brac{0,1}
    \end{align*}
    and
    \begin{align*}
        \rsum{i=1}{n} \brac{ h\paren{\xbf} - h_i\paren{ \xbf_{\backslash i} } } &= \alpha \rsum{i=1}{n} \brac{ f\paren{\xbf} - f_i\paren{ \xbf_{\backslash i} } } + \paren{1-\alpha} \rsum{i=1}{n} \brac{ g\paren{\xbf} - g_i\paren{ \xbf_{\backslash i} } } \\
        &\leq \alpha f\paren{\xbf} + \paren{1-\alpha} g\paren{\xbf} \\
        &= h\paren{\xbf}
    \end{align*}
\end{proof}

The reason for introducing such functions is that they possess a favorable bound on their cumulant-generating function (cgf).

\begin{prop}{Cgf of self-bounding functions}{cgf-self-bnd}
    Suppose that $f:\Xc^n\to\Rb_+$ has the self-bounding property and let $X^n = \paren{X_1,\dots,X_n}$ be independent random variables. Then,
    \begin{align*}
        \log \EE{ e^{\lambda f\paren{X^n}} } \leq \paren{ e^\lambda-1 } \EE{ f\paren{X^n} } \quad \forall \lambda\in\Rb
    \end{align*}
\end{prop}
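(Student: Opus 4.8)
The plan is to run the entropy method. Set $Z \coloneqq f(X^n)$ and study the entropy functional $\mathrm{Ent}(e^{\lambda Z}) \coloneqq \EE{\lambda Z e^{\lambda Z}} - \EE{e^{\lambda Z}}\log\EE{e^{\lambda Z}}$. Writing $\psi_F(\lambda) \coloneqq \log\EE{e^{\lambda Z}}$ and using $\tfrac{d}{d\lambda}\EE{e^{\lambda Z}} = \EE{Z e^{\lambda Z}}$, one has $\mathrm{Ent}(e^{\lambda Z}) = \EE{e^{\lambda Z}}\paren{\lambda\psi_F'(\lambda) - \psi_F(\lambda)}$ and $\EE{Z e^{\lambda Z}} = \EE{e^{\lambda Z}}\psi_F'(\lambda)$. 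So if I can show $\mathrm{Ent}(e^{\lambda Z}) \le \psi(-\lambda)\,\EE{Z e^{\lambda Z}}$ for the function $\psi(x)\coloneqq e^x - x - 1 \ge 0$, I get a first-order differential inequality for $\psi_F$, whose extremal solution turns out to be exactly the Poisson cgf $(e^\lambda - 1)\EE{Z}$.

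\textbf{Tensorization and the per-coordinate bound.} First I would invoke the sub-additivity (tensorization) of entropy for functions of independent variables (see \cite{boucheron-concentration}): $\mathrm{Ent}(e^{\lambda Z}) \le \rsum{i=1}{n} \EE{\mathrm{Ent}^{(i)}(e^{\lambda Z})}$, where $\mathrm{Ent}^{(i)}$ denotes entropy over $X_i$ with the remaining coordinates frozen. For the per-coordinate term I would use the elementary variational inequality $\mathrm{Ent}(Y) \le \EE{Y\log Y - Y\log u - Y + u}$, valid for any nonnegative $Y$ and any constant $u>0$ (it reduces to $a\log(a/b)\ge a-b$). Applying this conditionally on $X^n_{\backslash i}$ with $Y = e^{\lambda Z}$ and the admissible choice $u = e^{\lambda Z_i}$, where $Z_i \coloneqq f_i(X^n_{\backslash i})$ is measurable with respect to the frozen coordinates, and substituting $e^{\lambda Z_i} = e^{\lambda Z}\,e^{-\lambda(Z-Z_i)}$, the right-hand side collapses to
\[
\mathrm{Ent}^{(i)}(e^{\lambda Z}) \le \EE[X_i]{e^{\lambda Z}\,\psi\paren{-\lambda\paren{Z-Z_i}}}.
\]

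\textbf{Invoking the self-bounding structure.} Here the hypotheses enter. Since $v\mapsto \psi(-\lambda v)$ is convex and vanishes at $v=0$, and since $Z - Z_i \in [0,1]$, convexity gives $\psi(-\lambda(Z-Z_i)) \le (Z-Z_i)\,\psi(-\lambda)$; as $\psi(-\lambda)\ge 0$ this yields $\mathrm{Ent}^{(i)}(e^{\lambda Z}) \le \psi(-\lambda)\,\EE[X_i]{(Z-Z_i)e^{\lambda Z}}$. Summing over $i$, using $\rsum{i=1}{n}(Z-Z_i) \le Z$ together with $\psi(-\lambda)\ge 0$ and $e^{\lambda Z}>0$, and tensorizing, I obtain $\mathrm{Ent}(e^{\lambda Z}) \le \psi(-\lambda)\,\EE{Z e^{\lambda Z}}$, the target inequality.

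\textbf{Solving the differential inequality.} Dividing by $\EE{e^{\lambda Z}}$ and using the identities above, this reads $\lambda\psi_F'(\lambda) - \psi_F(\lambda) \le \psi(-\lambda)\psi_F'(\lambda)$, i.e.\ $(1-e^{-\lambda})\psi_F'(\lambda) \le \psi_F(\lambda)$ since $\lambda - \psi(-\lambda) = 1 - e^{-\lambda}$. The Poisson cgf $g(\lambda)\coloneqq (e^\lambda-1)\EE{Z}$ satisfies this with equality and matches $\psi_F$ in value and first derivative at $\lambda=0$. To conclude, I would substitute $\psi_F(\lambda) = \rho(\lambda)(e^\lambda-1)$, which turns the inequality into $\rho'(\lambda)\paren{e^\lambda + e^{-\lambda} - 2} \le 0$, so $\rho$ is non-increasing on each of $(-\infty,0)$ and $(0,\infty)$; together with $\rho(\lambda)\to \psi_F'(0) = \EE{Z}$ as $\lambda\to0$, this gives $\rho(\lambda)\le \EE{Z}$ for $\lambda>0$ and $\rho(\lambda)\ge \EE{Z}$ for $\lambda<0$, and multiplying back by $e^\lambda-1$ (which changes sign at the origin) yields $\psi_F(\lambda) \le (e^\lambda-1)\EE{Z}$ in both cases, hence everywhere. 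The main obstacle is the per-coordinate modified-log-Sobolev step: making the right choice $u = e^{\lambda Z_i}$ in the variational bound and extracting the clean factor $\psi(-\lambda)$ via convexity of $v\mapsto\psi(-\lambda v)$ on $[0,1]$ — everything afterward is bookkeeping; a secondary subtlety is tracking signs in the final ODE comparison, since $e^\lambda - 1$ is negative for $\lambda<0$.
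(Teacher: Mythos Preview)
The paper does not give its own proof of this proposition; it simply cites \cite[Theorem~6.12]{boucheron-concentration}. Your proposal is correct and is precisely the entropy-method argument used in that reference: tensorize the entropy, apply the variational bound with $u=e^{\lambda Z_i}$ to extract the factor $\psi(-\lambda(Z-Z_i))$, use convexity on $[0,1]$ together with the self-bounding inequalities to arrive at $\mathrm{Ent}(e^{\lambda Z})\le\psi(-\lambda)\,\EE{Ze^{\lambda Z}}$, and then integrate the resulting differential inequality to the Poisson cgf. So there is nothing to contrast; you have supplied the proof the paper outsourced.
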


\begin{proof}
    See \cite[Theorem 6.12]{boucheron-concentration}.
\end{proof}

The last tool we need employs symmetrization once again. For the next result and the development that follows, we omit the parentheses in $a_+^2 \coloneqq \paren{a_+}^2$; that is, we take the positive part before squaring. 

\begin{prop}{Exponential Efron-Stein}{exp-efr-stein}
    Suppose that $X^n = \paren{X_1,\dots,X_n}$ are independent random variables and let $W^n = \paren{W_1,\dots,W_n}$ be independent copies of them. Given a nonnegative function $f:\Xc^n\to\Rb_+$, define variables $Z\coloneqq f\paren{X^n}$ and its symmetrized counterpart
    \begin{align*}
        Z_i' \coloneqq f\paren{ X_1,\dots,X_{i-1}, W_i, X_{i+1},\dots,X_n } \quad\forall i\in\brac{n}
    \end{align*}
    Additionally, let
    \begin{align*}
        V^+ \coloneqq \rsum{i=1}{n} \EE{ \paren{ Z-Z_i' }_+^2 \middle| X^n }
    \end{align*}
    Then, we have that
    \begin{align*}
        \log \EE{ e^{ \lambda\paren{Z-\EE{Z}} } } \leq \frac{\theta\lambda}{1-\theta\lambda} \log\EE{ e^{ \frac{\lambda V^+}{\theta} } }
    \end{align*}
    for any $\theta,\lambda>0$ such that $\theta\lambda<1$.
\end{prop}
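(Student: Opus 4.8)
The plan is to prove a self‑improving ``modified logarithmic Sobolev'' inequality for $e^{\lambda Z}$ and then convert it into a bound on the cumulant generating function $G\paren{\lambda}\coloneqq\log\EE{e^{\lambda\paren{Z-\EE{Z}}}}$ by integrating a differential inequality. For a nonnegative random variable $U$ write $\mathrm{Ent}\brac{U}\coloneqq\EE{U\log U}-\EE{U}\log\EE{U}$, and set $F\paren{\lambda}\coloneqq\log\EE{e^{\lambda Z}}$, so $G=F-\lambda\EE{Z}$ (in the applications $Z$ is bounded, so all these quantities are finite and smooth). \emph{Step 1.} Apply the sub‑additivity of entropy for product measures \cite{boucheron-concentration} to $U=e^{\lambda Z}$ to get $\mathrm{Ent}\brac{e^{\lambda Z}}\le\rsum{i=1}{n}\EE{\mathrm{Ent}^{(i)}\brac{e^{\lambda Z}}}$, where $\mathrm{Ent}^{(i)}$ is entropy in the coordinate $X_i$ with the others frozen. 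With all coordinates but the $i$‑th frozen, $\paren{Z,Z_i'}$ is a pair of i.i.d.\ functions of $\paren{X_i,W_i}$, so combining the one‑variable symmetrization bound $\mathrm{Ent}\brac{e^{\lambda Y}}\le\tfrac{\lambda}{2}\EE{\paren{e^{\lambda Y}-e^{\lambda Y'}}\paren{Y-Y'}}$ (valid for i.i.d.\ $Y,Y'$; it follows from Jensen's inequality and symmetrization) with the pointwise estimate $\paren{e^a-e^b}\paren{a-b}\le\paren{a-b}_+^2 e^a+\paren{b-a}_+^2 e^b$ gives $\mathrm{Ent}^{(i)}\brac{e^{\lambda Z}}\le\lambda^2\,\mathbb{E}^{(i)}\brac{\paren{Z-Z_i'}_+^2 e^{\lambda Z}}$, where $\mathbb{E}^{(i)}$ averages over $\paren{X_i,W_i}$. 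Summing over $i$, using that $e^{\lambda Z}$ does not depend on the $W_i$, and recognizing $\rsum{i=1}{n}\EE{\paren{Z-Z_i'}_+^2\,\middle|\,X^n}=V^+$ yields the key estimate $\mathrm{Ent}\brac{e^{\lambda Z}}\le\lambda^2\,\EE{V^+e^{\lambda Z}}$.

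\emph{Step 2.} Next I would decouple $V^+$ from $e^{\lambda Z}$. Introduce the free parameter $\theta>0$, write $\lambda^2 V^+=\lambda\theta\cdot\paren{\lambda V^+/\theta}$, and apply the Gibbs variational principle for entropy with tilting density $e^{\lambda Z}/\EE{e^{\lambda Z}}$ and the nonnegative variable $\rho\coloneqq\lambda V^+/\theta$: this gives $\EE{\rho e^{\lambda Z}}\le\EE{e^{\lambda Z}}\log\EE{e^{\rho}}+\mathrm{Ent}\brac{e^{\lambda Z}}$. Plugging this into Step 1 and rearranging (which is where the hypothesis $\lambda\theta<1$ enters) gives
\[ \frac{\mathrm{Ent}\brac{e^{\lambda Z}}}{\EE{e^{\lambda Z}}}\le\frac{\lambda\theta}{1-\lambda\theta}\,L\paren{\lambda},\qquad L\paren{\lambda}\coloneqq\log\EE{e^{\lambda V^+/\theta}}. \]

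\emph{Step 3.} Finally, a direct computation gives $\mathrm{Ent}\brac{e^{\lambda Z}}/\EE{e^{\lambda Z}}=\lambda F'\paren{\lambda}-F\paren{\lambda}=\lambda G'\paren{\lambda}-G\paren{\lambda}=\lambda^2\,\tfrac{d}{d\lambda}\paren{G\paren{\lambda}/\lambda}$, so Step 2 becomes $\paren{G\paren{\lambda}/\lambda}'\le\tfrac{\theta}{\lambda\paren{1-\lambda\theta}}L\paren{\lambda}$. Since $G\paren{\lambda}/\lambda\to G'\paren{0}=0$ as $\lambda\downarrow 0$, integrating gives $G\paren{\lambda}\le\lambda\int_0^\lambda\tfrac{\theta L\paren{s}}{s\paren{1-s\theta}}\,ds$. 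Now $L$ is the cumulant generating function of the nonnegative variable $V^+/\theta$ with $L\paren{0}=0$, hence convex, so $L\paren{s}/s$ is nondecreasing and $L\paren{s}\le\tfrac{s}{\lambda}L\paren{\lambda}$ for $s\le\lambda$; using this together with $\int_0^\lambda\tfrac{ds}{1-s\theta}=\tfrac{1}{\theta}\log\tfrac{1}{1-\lambda\theta}$ and the elementary bound $\log\tfrac{1}{1-x}\le\tfrac{x}{1-x}$ on $\paren{0,1}$, I obtain $G\paren{\lambda}\le L\paren{\lambda}\log\tfrac{1}{1-\lambda\theta}\le\tfrac{\lambda\theta}{1-\lambda\theta}\log\EE{e^{\lambda V^+/\theta}}$, which is the claim.

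The routine ingredients are the tensorization of entropy in Step 1 (used as a black box) and the differential‑inequality bookkeeping in Step 3. The two places I expect to be delicate are: first, keeping the \emph{positive part} $\paren{Z-Z_i'}_+^2$ throughout the symmetrization of the one‑dimensional entropy in Step 1 --- this is exactly what makes the right‑hand side the stated $V^+$ and restricts the conclusion to the right tail; and second, the decoupling in Step 2, where one must tilt by precisely $e^{\lambda Z}$ and invoke entropy duality, since a cruder Cauchy--Schwarz or H\"older split of $\EE{V^+e^{\lambda Z}}$ would neither reproduce the factor $\tfrac{\lambda\theta}{1-\lambda\theta}$ nor leave $V^+$ entering only through its log‑mgf.
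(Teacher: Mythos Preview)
Your proof is correct and follows precisely the entropy--method argument of \cite[Theorem~6.16]{boucheron-concentration}: the modified log-Sobolev inequality $\mathrm{Ent}\bbrac{e^{\lambda Z}}\le\lambda^2\,\EE{V^+e^{\lambda Z}}$ via tensorization and one-coordinate symmetrization, the entropy--duality decoupling, and the Herbst integration. The paper does not supply its own proof of this proposition at all---it simply cites the reference---so your write-up is not merely consistent with the paper's approach but actually goes beyond it by filling in what the paper treats as a black box.
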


\begin{proof}
    See \cite[Theorem 6.16]{boucheron-concentration}.
\end{proof}

\begin{proof}[Proof of Theorem~\ref{thm:bern-conc-Z}]
To conclude our main result, we begin with a more general setup: let $X \coloneqq \paren{ \Xarm{Q}{i} }_{Q\in\Uc, i\in\brac{n}}$, where $n\in\Nb$, be a collection of independent $\Xc$-valued random variables, and let $X^{(i)} \coloneqq \paren{ \Xarm{Q}{i} }_{Q\in\Uc}$ for each $i\in\brac{n}$. We de not impose any assumptions on their distributions. Our random variables of interest will be
\begin{align*}
    Z_f \coloneqq \min_{Q\in\Uc} \rsum{i=1}{n} f_Q\paren{\Xarm{Q}{i}} \txtand Z_{f,g} \coloneqq Z_f-Z_g
\end{align*}
for collections of functions $f = \cbr{f_Q:\Xc\to\brac{0,\frac{b}{\sqrt{6}}}}_{Q\in\Uc}$ and $g = \cbr{g_Q:\Xc\to\brac{0,\frac{b}{\sqrt{6}}}}_{Q\in\Uc}$, where $b>0$. Define
\begin{align*}
    \mu_{f,i,Q} \coloneqq \EE{ f_Q\paren{\Xarm{Q}{i}} } \txtand \sigma^2_{f,i,Q} \coloneqq \Var\paren{ f_Q\paren{\Xarm{Q}{i}} }
\end{align*}
Similarly, consider the variance variants:
\begin{align*}
    V_f &\coloneqq \rsum{i=1}{n} \EE{ \max_{Q\in\Uc} \brac{ f_Q\paren{ \Xarm{Q}{i} } - \mu_{f,i,Q} }^2 } \\
    \Sigma_f^2 &\coloneqq \EE{ \max_{Q\in\Uc} \rsum{i=1}{n} \brac{ f_Q\paren{ \Xarm{Q}{i} } - \mu_{f,i,Q} }^2 } \\
    \sigma_f^2 &\coloneqq \max_{Q\in\Uc} \rsum{i=1}{n} \sigma_{f,i,Q}^2
\end{align*}
Following the analysis of \cite[Theorem 12.2]{boucheron-concentration}, we will use the tools provided and proceed in 5 steps:
\begin{outline}[enumerate]
    \1 Upper bound $V^+$.

    \1 Apply exponential Efron-Stein along with the bound on $V^+$.
    
    \1 Show the self-boundedness of certain functions and apply the cgf bound.

    \1 Show that $Z_{f,g}$ is sub-gamma and apply the tail bound.

    \1 Specialize the analysis to the original setting.
\end{outline}

\subsubsection{Bounding \texorpdfstring{$V^+$}{V+}}
For each pair $\paren{i,Q}\in\brac{n}\times\Uc$, let $\Warm{Q}{i}$ be an independent copy of $\Xarm{Q}{i}$ and define $W^{(i)} \coloneqq \paren{ \Warm{Q}{i} }_{Q\in\Uc}$. Moreover, define 
\begin{align*}
    Y_i \coloneqq \paren{ X^{(1)},\dots,X^{(i-1)}, W^{(i)}, X^{(i+1)},\dots,X^{(n)} } \quad\forall i\in\brac{n}
\end{align*}
and function $\Phi_{f,g}: \paren{ \Xc^{k} }^n \to \Rb$ by
\begin{align*}
    \Phi_{f,g}\paren{\xbf_1,\dots,\xbf_n} \coloneqq \min_{Q\in\Uc} \rsum{i=1}{n} f_Q\paren{ x_{Q}^i } - \min_{Q'\in\Uc} \rsum{i=1}{n} g_{Q'}\paren{ x_{Q'}^i }
\end{align*}
where $\xbf_i = \paren{ x_Q^i }_{Q\in\Uc} \in \Xc^k$ for each $i\in\brac{n}$. In what follows, we will use the more compact notation $\xbf = \paren{\xbf_1,\dots,\xbf_n}$. Note that $Z_{f,g} = \Phi_{f,g}\paren{X}$ and
\begin{align*}
    Z_i' &\coloneqq \Phi_{f,g}\paren{Y_i} \\
    &= \min_{Q\in\Uc} \cbr{ f_Q\paren{ \Warm{Q}{i} } + \sum_{j\in\brac{n}:j\neq i} f_Q\paren{ \Xarm{Q}{j} } } - \min_{Q'\in\Uc} \cbr{ g_{Q'}\paren{ \Warm{Q'}{i} } + \sum_{j\in\brac{n}:j\neq i} g_{Q'}\paren{ \Xarm{Q'}{j} } }
\end{align*}
Given functions $h=\cbr{ h_Q:\Xc\to\Rb }_{Q\in\Uc}$, define minimizer $\hat Q_h: \paren{ \Xc^k }^n \to \Uc$ by
\begin{align*}
    \hat Q_h\paren{\xbf} \coloneqq \argmin_{Q\in\Uc} \rsum{i=1}{n} h_Q\paren{x_{Q}^i}
\end{align*}
so that
\begin{align*}
    \Phi_{f,g}\paren{\xbf} = \rsum{i=1}{n} f_{\hat Q_f\paren{\xbf}} \paren{ x_{\hat Q_f\paren{\xbf}}^i } - \rsum{i=1}{n} g_{\hat Q_g\paren{\xbf}} \paren{ x_{\hat Q_g\paren{\xbf}}^i }
\end{align*}
and
\begin{align*}
    \rsum{i=1}{n} f_{\hat Q_f\paren{\xbf}}\paren{ x_{\hat Q_f\paren{\xbf}}^i } - \rsum{i=1}{n} g_{Q} \paren{ x_{Q}^i } &\leq \Phi_{f,g}\paren{\xbf} \leq \rsum{i=1}{n} f_{Q'} \paren{ x_{Q'}^i } - \rsum{i=1}{n} g_{\hat Q_g\paren{\xbf}}\paren{ x_{\hat Q_g\paren{\xbf}}^i }
\end{align*}
for any $\xbf \in \paren{\Xc^k}^n$ and $Q,Q'\in\Uc$. Choosing $Q = \hat Q_g\paren{X}$ and $Q' = \hat Q_f\paren{Y_i}$ below then yields
\begin{align*}
    Z_{f,g}-Z_i' &= \Phi_{f,g}\paren{X} - \Phi_{f,g}\paren{Y_i} \\
    &\leq \rsum{j=1}{n} f_{\hat Q_f\paren{Y_i}} \paren{ \Xarm{\hat Q_f\paren{Y_i}}{j} } - \rsum{j=1}{n} g_{ \hat Q_g\paren{X} } \paren{ \Xarm{ \hat Q_g\paren{X} }{j} } \\
    &\hspace{1cm} - \brac{ f_{ \hat Q_f\paren{Y_i} } \paren{ \Warm{ \hat Q_f\paren{Y_i} }{i} } + \sum_{j\in\brac{n}:j\neq i} f_{ \hat Q_f\paren{Y_i} } \paren{ \Xarm{ \hat Q_f\paren{Y_i} }{j} } } \\
    &\hspace{2cm} + \brac{ g_{ \hat Q_g\paren{X} } \paren{ \Warm{ \hat Q_g\paren{X} }{i} } + \sum_{j\in\brac{n}:j\neq i} g_{ \hat Q_g\paren{X} } \paren{ \Xarm{ \hat Q_g\paren{X} }{j} } } \\
    &= f_{ \hat Q_f\paren{Y_i} } \paren{ \Xarm{ \hat Q_f\paren{Y_i} }{i} } - f_{ \hat Q_f\paren{Y_i} } \paren{ \Warm{ \hat Q_f\paren{Y_i} }{i} } + g_{ \hat Q_g\paren{X} } \paren{ \Warm{ \hat Q_g\paren{X} }{i} } - g_{ \hat Q_g\paren{X} } \paren{ \Xarm{ \hat Q_g\paren{X} }{i} }
\end{align*}
Then,
\begin{align}
    &\paren{Z_{f,g}-Z_i'}_+^2 \notag \\
    &\hspace{.5cm} \leq \brac{ f_{ \hat Q_f\paren{Y_i} } \paren{ \Xarm{ \hat Q_f\paren{Y_i} }{i} } - f_{ \hat Q_f\paren{Y_i} } \paren{ \Warm{ \hat Q_f\paren{Y_i} }{i} } + g_{ \hat Q_g\paren{X} } \paren{ \Warm{ \hat Q_g\paren{X} }{i} } - g_{ \hat Q_g\paren{X} } \paren{ \Xarm{ \hat Q_g\paren{X} }{i} } }^2 \notag \\
    &\hspace{.5cm} \leq 2 \brac{ f_{ \hat Q_f\paren{Y_i} } \paren{ \Xarm{ \hat Q_f\paren{Y_i} }{i} } - f_{ \hat Q_f\paren{Y_i} } \paren{ \Warm{ \hat Q_f\paren{Y_i} }{i} } }^2 + 2 \brac{ g_{ \hat Q_g\paren{X} } \paren{ \Xarm{ \hat Q_g\paren{X} }{i} } - g_{ \hat Q_g\paren{X} } \paren{ \Warm{ \hat Q_g\paren{X} }{i} } }^2 \label{eq:symm-z-ub-terms}
\end{align}
Recall that our goal is to bound $V^+ = \rsum{i=1}{n} \EE{ \paren{Z_{f,g}-Z_i'}_+^2 \middle| X }$. We begin with the second term: by adding and subtracting $\mu_{ g, i, \hat Q_g\paren{X} }$, expanding the square and noting that the cross term is 0 under the conditional expectation, we get that
\begin{align*}
    &\rsum{i=1}{n} \EE{ \brac{ g_{ \hat Q_g\paren{X} } \paren{ \Xarm{ \hat Q_g\paren{X} }{i} } - g_{ \hat Q_g\paren{X} } \paren{ \Warm{ \hat Q_g\paren{X} }{i} } }^2 \middle| X } \\
    &\hspace{1cm} = \rsum{i=1}{n} \cbr{ \brac{ g_{ \hat Q_g\paren{X} } \paren{ \Xarm{ \hat Q_g\paren{X} }{i} } - \mu_{ g, i, \hat Q_g\paren{X} } }^2 + \EE{ \brac{ g_{ \hat Q_g\paren{X}
    } \paren{ \Warm{ \hat Q_g\paren{X} }{i} } - \mu_{ g, i, \hat Q_g\paren{X} } }^2 \middle| X } } \\
    &\hspace{1cm} \leq \underbrace{ \max_{Q\in\Uc} \cbr{ \rsum{i=1}{n} \brac{ g_Q\paren{ \Xarm{Q}{i} } - \mu_{ g,i,Q } }^2 } }_{ \eqqcolon \Gamma_g } + \underbrace{ \max_{Q\in\Uc} \cbr{ \rsum{i=1}{n} \EE{ \brac{ g_Q\paren{ \Warm{Q}{i} } - \mu_{ g,i,Q } }^2 } } }_{ = \sigma^2_g }
\end{align*}
Note that we were able to upper bound via a maximization outside of the sum since the $Q$ indices were fixed w.r.t. $i$. The first term in \eqref{eq:symm-z-ub-terms} is not so readily bounded due to the dependence of $Y_i$ on $i$. Hence, we rely on a weaker approach: for each $i\in\brac{n}$, we have that
\begin{align*}
    &\brac{ f_{ \hat Q_f\paren{Y_i} } \paren{ \Xarm{ \hat Q_f\paren{Y_i} }{i} } - f_{ \hat Q_f\paren{Y_i} } \paren{ \Warm{ \hat Q_f\paren{Y_i} }{i} } }^2 \\
    &\hspace{2cm} \leq 2 \brac{ f_{ \hat Q_f\paren{Y_i} } \paren{ \Xarm{ \hat Q_f\paren{Y_i} }{i} } - \mu_{ f, i, \hat Q_f\paren{Y_i} } }^2 + 2 \brac{ f_{ \hat Q_f\paren{Y_i} } \paren{ \Warm{ \hat Q_f\paren{Y_i} }{i} } - \mu_{ f, i, \hat Q_f\paren{Y_i} } }^2 \\
    &\hspace{2cm} \leq 2 \max_{Q\in\Uc} \cbr{ \brac{ f_Q\paren{ \Xarm{Q}{i} } - \mu_{ f,i,Q } }^2 } + 2 \max_{Q\in\Uc} \cbr{ \brac{ f_Q\paren{ \Warm{Q}{i} } - \mu_{ f,i,Q } }^2 }
\end{align*}
Summing and taking conditional expectations then yields
\begin{align*}
    & \rsum{i=1}{n} \EE{ \brac{ f_{ \hat Q_f\paren{Y_i} } \paren{ \Xarm{ \hat Q_f\paren{Y_i} }{i} } - f_{ \hat Q_f\paren{Y_i} } \paren{ \Warm{ \hat Q_f\paren{Y_i} }{i} } }^2 \middle| X } \\
    &\hspace{2cm} \leq 2 \underbrace{ \rsum{i=1}{n} \max_{Q\in\Uc} \cbr{ \brac{ f_Q\paren{ \Xarm{Q}{i} } - \mu_{ f,i,Q } }^2 } }_{ \eqqcolon T_f } + 2 \underbrace{ \rsum{i=1}{n} \EE{ \max_{Q\in\Uc} \cbr{ \brac{ f_Q\paren{ \Warm{Q}{i} } - \mu_{ f,i,Q } }^2 } } }_{= V_f}
\end{align*}
Finally, by putting everything together, we can obtain the upper bound
\begin{align*}
    V^+ &\leq 2\paren{ \Gamma_g + \sigma_g^2 } + 4\paren{ T_f + V_f }
\end{align*}
where $\EE{\Gamma_g} = \Sigma_g^2$ and $\EE{T_f}=V_f$.

\subsubsection{Efron-Stein}
Next, we apply exponential Efron-Stein (Proposition~\ref{prop:exp-efr-stein}): for $\lambda\in\left[0,b^{-1}\right)$, we have that
\begin{align}
    \log \EE{ e^{ \lambda\paren{Z_{f,g}-\EE{Z_{f,g}}} } } &\leq \frac{b\lambda}{1-b\lambda} \log\EE{ e^{ \lambda b^{-1} V^+ } } \notag \\
    &\leq \frac{b\lambda}{1-b\lambda} \log \EE{ e^{ \lambda b^{-1} \brac{2\paren{ \Gamma_g + \sigma_g^2 } + 4\paren{ T_f + V_f }} } } \notag \\
    &= \frac{b\lambda}{1-b\lambda} \cbr{ \log\EE{ e^{ b\lambda \brac{ \frac{1}{3} \paren{ 6b^{-2}\Gamma_g } + \frac{2}{3} \paren{ 6b^{-2}T_f } } } } + \lambda b^{-1} \paren{ 2\sigma_g^2 + 4V_f } } \label{eq:efr-stein-Z}
\end{align}

\subsubsection{Self-boundedness}
To bound the cgf of $\frac{1}{3} \paren{ 6b^{-2}\Gamma_g } + \frac{2}{3} \paren{ 6b^{-2}T_f }$, we will show the self-boundedness of
\begin{align*}
    h^{(1)}\paren{\xbf} \coloneqq 6b^{-2} \max_{Q\in\Uc} \rsum{i=1}{n} \brac{ g_Q\paren{ x_{Q}^i } - \mu_{ g,i,Q } }^2 \txtand h^{(2)}\paren{\xbf} \coloneqq 6b^{-2} \rsum{i=1}{n} \max_{Q\in\Uc} \brac{ f_Q\paren{ x_{ Q }^i } - \mu_{ f,i,Q } }^2
\end{align*}
so that the function $\frac{1}{3} h^{(1)} + \frac{2}{3} h^{(2)}$ is also self-bounded by Lemma~\ref{lem:cvx-comb-self-bnd} and we can thus bound the cgf of $\paren{ \frac{1}{3} h^{(1)} + \frac{2}{3} h^{(2)} }\paren{X} = \frac{1}{3} \paren{ 6b^{-2}\Gamma_g } + \frac{2}{3} \paren{ 6b^{-2}T_f }$ using Proposition~\ref{prop:cgf-self-bnd}. We begin by showing that $h^{(1)}$ is self-bounded: let
\begin{align*}
    h^{(1)}_i\paren{\xbf_{\backslash i}} \coloneqq 6b^{-2} \max_{Q\in\Uc} \sum_{j\in\brac{n}: j\neq i} \brac{ g_Q\paren{ x_{Q}^j } - \mu_{ g,j,Q } }^2 \quad\forall i\in\brac{n}
\end{align*}
and define the maximizing distribution in $h^{(1)}$:
\begin{align*}
    \tilde Q\paren{\xbf} \coloneqq \argmax_{Q\in\Uc} \rsum{i=1}{n} \brac{ g_Q\paren{ x_{Q}^i } - \mu_{ g,i,Q } }^2
\end{align*}
Fix some $\xbf\in\paren{\Xc^k}^n$ and $i\in\brac{n}$. Clearly, we have that $h^{(1)}\paren{\xbf} \geq h^{(1)}_i\paren{\xbf_{\backslash i}}$. Moreover, 
\begin{align*}
    h^{(1)}\paren{\xbf} - h^{(1)}_i\paren{\xbf_{\backslash i}} &= 6 b^{-2} \brac{ \rsum{j=1}{n} \brac{ g_{ \tilde Q\paren{\xbf} }\paren{ x_{ \tilde Q\paren{\xbf} }^j } - \mu_{ g, i, \tilde Q\paren{\xbf} } }^2 - \max_{Q\in\Uc} \cbr{ \sum_{j\in\brac{n}: j\neq i} \brac{ g_{Q}\paren{ x_{Q}^j } - \mu_{ g,j,Q } }^2 } } \\
    &\leq 6 b^{-2} \brac{ g_{ \tilde Q\paren{\xbf} }\paren{ x_{ \tilde Q\paren{\xbf} }^i } - \mu_{ g, i, \tilde Q\paren{\xbf} } }^2 \\
    &\leq 1
\end{align*}
where the last line follows from our assumption that $g_Q\in\brac{0,\frac{b}{\sqrt{6}}}$. We can add up the bounds to get
\begin{align*}
    \rsum{i=1}{n} \brac{ h^{(1)}\paren{\xbf} - h^{(1)}_i\paren{\xbf_{\backslash i}} } \leq 6 b^{-2} \rsum{i=1}{n} \brac{ g_{ \tilde Q\paren{\xbf} }\paren{ x_{ \tilde Q\paren{\xbf} }^i } - \mu_{ g, i, \tilde Q\paren{\xbf} } }^2 = h^{(1)}\paren{\xbf}
\end{align*}
Together, these show that $h^{(1)}$ is self-bounded. To show the same for $h^{(2)}$, consider the functions
\begin{align*}
    h^{(2)}_i\paren{\xbf_{\backslash i}} \coloneqq 6b^{-2} \sum_{j\in\brac{n}: j\neq i} \max_{Q\in\Uc} \brac{ f_Q\paren{ x_{ Q }^j } - \mu_{ f,j,Q } }^2
\end{align*}
Again, we have that $h^{(2)}\paren{\xbf} \geq h^{(2)}_i\paren{\xbf_{\backslash i}}$ and
\begin{gather*}
    h^{(2)}\paren{\xbf} - h^{(2)}_i\paren{\xbf_{\backslash i}} = 6b^{-2} \max_{Q\in\Uc} \brac{ f_Q\paren{ x_{ Q }^i } - \mu_{ f,i,Q } }^2 \leq 1 \\
    \rsum{i=1}{n} \brac{ h^{(2)}\paren{\xbf} - h^{(2)}_i\paren{\xbf_{\backslash i}} } = h^{(2)}\paren{\xbf}
\end{gather*}
That is, $h^{(2)}$ is also self-bounded. As a result, Proposition~\ref{prop:cgf-self-bnd} implies that
\begin{align}
    \log \EE{ e^{ b\lambda \brac{ \frac{1}{3} \paren{ 6b^{-2}\Gamma_g } + \frac{2}{3} \paren{ 6b^{-2}T_f } } } } &\leq \paren{ e^{b\lambda}-1 } \EE{ \frac{1}{3} \paren{ 6b^{-2}\Gamma_g } + \frac{2}{3} \paren{ 6b^{-2}T_f } } \notag \\
    &= \paren{ e^{b\lambda}-1 } b^{-2} \paren{ 2\Sigma_g^2 + 4V_f } \notag \\
    &\leq \lambda b^{-1} \paren{ 4\Sigma_g^2 + 8V_f } \label{eq:self-bnd-Z}
\end{align}
provided that $\lambda\in\left[0,b^{-1}\right)$, where in the last line we have used the inequality $e^x \leq 1+2x$ for $x\leq 1$.

\subsubsection{Sub-gamma tail}
Finally, we can combine Equations~\eqref{eq:efr-stein-Z} and \eqref{eq:self-bnd-Z} to get that
\begin{align*}
    \log \EE{ e^{ \lambda\paren{Z_{f,g}-\EE{Z_{f,g}}} } } &\leq \frac{\lambda^2}{1-b\lambda} \paren{ 2\sigma_g^2 + 4\Sigma_g^2 + 12 V_f } = \frac{ \paren{ 4\sigma_g^2 + 8\Sigma_g^2 + 24 V_f } \lambda^2 }{ 2\paren{ 1-b\lambda } } 
\end{align*}
for all $\lambda\in\left[0,b^{-1}\right)$. That is, $Z_{f,g}\in\Gamma_+\paren{ \sqrt{ 4\sigma_g^2 + 8\Sigma_g^2 + 24 V_f }, b }$, which we know from Proposition~\ref{prop:sub-gam-tail-bnd} yields the tail bound
\begin{align}
    \Pb\paren{Z_{f,g}-\EE{Z_{f,g}} \geq t} &\leq \exp\paren{ - \frac{t^2}{ 2\paren{ 4\sigma_g^2 + 8\Sigma_g^2 + 24 V_f + bt } } } \quad\forall t\geq0 \label{eq:Z-gen-conc-ineq}
\end{align}

\subsubsection{Original setting}
\label{sec:berns-ineq-og-sett}
Recall that our original variables of interest live in some set $\Xc_0\subset\Rb$, and that sample sizes $n_Q$ may vary. Let $n\coloneqq \max_{Q\in\Uc} n_Q$ and consider the space $\Xc = \Xc_0 \cup \cbr{x_0}$ for the setup of this proof, where $x_0\not\in\Xc_0$. Suppose that $\paren{ \Xarm{Q}{i} }_{i\in\brac{n_Q}} \iid Q$ and $\Xarm{Q}{n_Q+1} = \dots = \Xarm{Q}{n}=x_0$ almost surely. Let $f:\Xc_0\to\Rb$ be the $L$-Lipschitz function from the statement of Theorem~\ref{thm:bern-conc-Z}, and consider its extension $\tilde f:\Xc\to\Rb$ given by 
\begin{align*}
    \tilde f\paren{x} \coloneqq \begin{cases}
        f\paren{x} & x\in\Xc_0 \\
        0 & x=x_0
    \end{cases}
\end{align*}
We apply the analysis above to the functions $f_Q \coloneqq \frac{\tilde f}{n_Q}$, ensuring that 
\begin{align*}
    Z_f = \min_{Q\in\Uc} \frac{1}{n_Q} \rsum{i=1}{n_Q} f\paren{ \Xarm{Q}{i} }
\end{align*}
where the variables follow the appropriate distributions, as in the original goal. Note that $f_Q \in \brac{0,\frac{M}{n_Q}}$, so that we can set $b = \frac{\sqrt{6}M}{\min_{Q\in\Uc} n_Q}$. We analogously define everything for $g$. Next, we apply Lemma~\ref{lem:var-lip-fn} under the Lipschitzness assumption to obtain
\begin{align*}
    \sigma_g^2 = \max_{Q\in\Uc} \cbr{ n_Q \Var\paren{ \frac{ g\paren{X_Q} }{n_Q} } } \leq 2L^2 \max_{Q\in\Uc} \frac{\sigma_Q^2}{n_Q} = 2L^2 \sigma_T^2
\end{align*}
For each $Q\in\Uc$, let $X_Q\sim Q$ be independent from $\paren{ \Xarm{Q}{i} }_{i\in\brac{n_Q}}$. Then,
\begin{align*}
    \Sigma_g^2 &= \EE{ \max_{Q\in\Uc} \frac{1}{n_Q^2} \rsum{i=1}{n_Q} \brac{ g\paren{ \Xarm{Q}{i} } - \EE{ g\paren{X_Q} } }^2 } \\
    &= \EE{ \max_{Q\in\Uc} \frac{1}{n_Q^2} \rsum{i=1}{n_Q} \EE{ g\paren{ \Xarm{Q}{i} } - g\paren{X_Q} \middle| \Xarm{Q}{i} }^2 } \\
    % &\leq L^2 \EE{ \max_{Q\in\Uc} \frac{1}{n_Q^2} \rsum{i=1}{n_Q} \EE{ \Xarm{Q}{i} - X_Q \middle| \Xarm{Q}{i} }^2 } && \text{Lipschitzness} \\
    &\leq L^2 \EE{ \max_{Q\in\Uc} \frac{1}{n_Q^2} \rsum{i=1}{n_Q} \EE{ \paren{ \Xarm{Q}{i} - X_Q }^2 \middle| \Xarm{Q}{i} } } && \text{Lipschitzness + Jensen's} \\
    &= L^2 \EE{ \max_{Q\in\Uc} \frac{1}{n_Q^2} \rsum{i=1}{n_Q} \brac{ \paren{ \Xarm{Q}{i} - \mu_Q }^2 + \sigma_Q^2 } } && \EE{ \paren{ \Xarm{Q}{i} - \mu_Q }\paren{ \mu_Q - X_Q } \middle| \Xarm{Q}{i} } = 0 \\
    &\leq L^2\cbr{ \EE{ \max_{Q\in\Uc} \frac{1}{n_Q^2} \rsum{i=1}{n_Q} \paren{ \Xarm{Q}{i} - \mu_Q }^2 } + \max_{Q\in\Uc} \frac{\sigma_Q^2}{n_Q} } \\
    &= L^2\paren{ \Sigma_T^2 + \sigma_T^2 }
\end{align*}
It remains to bound $V_f$: recall that $0 = n_{(0)} \leq n_{(1)} \leq \dots \leq n_{(k)}$ and $n_{(j)} = n_{Q_{(j)}}$, so that
\begin{align*}
    V_f &= \rsum{i=1}{n} \EE{ \max_{Q\in\Uc} \brac{ f_Q\paren{ \Xarm{Q}{i} } - \mu_{f,i,Q} }^2 } \\
    &= \rsum{j=1}{k} \paren{ n_{(j)}-n_{(j-1)} } \EE{ \max_{ r\in\cbr{j,\dots,k} } \frac{1}{n_{(r)}^2} \brac{ f\paren{ X_{Q_{(r)}} } - \EE{f\paren{ X_{Q_{(r)}} }} }^2 }
\end{align*}
With a similar symmetrization trick, we can further bound each expectation in the sum: let $X_Q'$ be an independent copy of $X_Q$. Then,
\begin{align*}
    \EE{ \max_{ r\in\cbr{j,\dots,k} } \frac{1}{n_{(r)}^2} \brac{ f\paren{ X_{Q_{(r)}} } - \EE{f\paren{ X_{Q_{(r)}} }} }^2 } &= \EE{ \max_{ r\in\cbr{j,\dots,k} } \frac{1}{n_{(r)}^2} \EE{ f\paren{ X_{Q_{(r)}} } - f\paren{ X_{Q_{(r)}}' } \middle| X_{Q_{(r)}} }^2 } \\
    &\overset{(1)}{\leq} L^2 \EE{ \max_{ r\in\cbr{j,\dots,k} } \frac{1}{n_{(r)}^2} \EE{ \paren{ X_{Q_{(r)}} - X_{Q_{(r)}}' }^2 \middle| X_{Q_{(r)}} } } \\
    &\overset{(2)}{\leq} L^2 \EE{ \max_{ r\in\cbr{j,\dots,k} } \frac{1}{n_{(r)}^2} \cbr{ \brac{ X_{Q_{(r)}} - \mu_{Q_{(r)}} }^2 + \sigma_{Q_{(r)}}^2 } } \\
    &\leq 2 L^2 \EE{ \max_{ r\in\cbr{j,\dots,k} } \frac{1}{n_{(r)}^2} \brac{ X_{Q_{(r)}} - \mu_{Q_{(r)}} }^2 }
\end{align*}
where, in (1), we have applied Lipschitzness and Jensen's and, in (2), we note again that the cross term cancels when expanding the square. Hence, we get that
\begin{align*}
    V_f \leq 2 L^2 \rsum{j=1}{k} \paren{ n_{(j)}-n_{(j-1)} } \EE{ \max_{ r\in\cbr{j,\dots,k} } \frac{1}{n_{(r)}^2} \brac{ X_{Q_{(r)}} - \mu_{Q_{(r)}} }^2 } = 2 L^2 V_T
\end{align*}
Plugging these values back into the bound~\eqref{eq:Z-gen-conc-ineq} then yields the claim.
\end{proof}

\section{Proofs of Section~\ref{sec:non-adapt}}
\label{app:non-adapt-proofs}
Recall our non-adaptive proxy objective
\begin{align*}
    \muout_T\paren{a} = \min_{Q\in\Uc} \frac{1}{n_Q} \rsum{i=1}{n_Q} r\paren{ a,\Xarm{Q}{i} }
\end{align*}
where, for UE, $n_Q = n$ for all $Q\in\Uc$. For $a\in\Ac$, define generalization gaps
\begin{align*}
    D_a &\coloneqq \muDR\paren{a} - \muout_T\paren{a} = \min_{Q\in\Uc} \mu\paren{a;Q} - \min_{Q'\in\Uc} \hat\mu_{n_{Q'}}\paren{a;Q'}
\end{align*}
Using the same argument as in the proof of Proposition~\ref{prop:bnd-diff-phi}, we note that
\begin{align*}
    \abs{D_a} \leq \max_{Q\in\Uc} \abs{ \mu\paren{a;Q} - \hat\mu_{n_{Q}}\paren{a;Q} } = \max_{Q\in\Uc} \abs{ \frac{1}{n_Q} \rsum{i=1}{n_Q} \brac{ \EE{ r\paren{a,X_Q} } - r\paren{a,\Xarm{Q}{i}} } } \eqqcolon U_a
\end{align*}
Then from the theory of Appendix~\ref{app:exp-emp-proc-max}, we can conclude the following bounds.

\begin{thm}{}{mdl-exp-emp-proc-bnd}
For rewards bounded in $\brac{0,M}$, we have that for any $a\in\Ac$,
\begin{align*}
    \EE{U_a} \leq 4 M\sqrt{ \frac{\log k}{ \min_{Q\in\Uc} n_Q } }
\end{align*}
Additionally, when $\Xc\subset\Rb$ and $r\paren{a,\cdot}$ is $L$-Lipschitz for each $a\in\Ac$, it follows that
\begin{align*}
    \EE{U_a} \leq \frac{16M\log k}{ \min_{Q\in\Uc} n_Q } + 4L\sigma_T\sqrt{2\log k}
\end{align*}
\end{thm}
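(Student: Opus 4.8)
This statement is an immediate consequence of Theorem~\ref{thm:exp-emp-bnds}; the plan is simply to instantiate the abstract centered, $[-1,1]$-valued function class $\{f_Q\}_{Q\in\Uc}$ appearing there with the shifted reward maps attached to a fixed action $a$, and then read off the two bounds. This is the step that transfers the empirical-process machinery of Appendix~\ref{app:exp-emp-proc-max} to the concrete MDL generalization gaps.

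Concretely, I would fix $a\in\Ac$ and set $f_Q(x) \coloneqq \mu(a;Q) - r(a,x) = \EE{r(a,X_Q)} - r(a,x)$ for each $Q\in\Uc$. Two hypotheses of Theorem~\ref{thm:exp-emp-bnds} must be verified. Since $r$ is $[0,1]$-valued by the boundedness assumption, both $r(a,x)$ and $\mu(a;Q)$ lie in $[0,1]$, so $f_Q$ maps into $[-1,1]$; and $\EE{f_Q(X_Q)} = \mu(a;Q) - \EE{r(a,X_Q)} = 0$, so each $f_Q(X_Q)$ is centered. Because $\frac{1}{n_Q}\rsum{i=1}{n_Q} f_Q(\Xarm{Q}{i}) = \mu(a;Q) - \hat\mu_{n_Q}(a;Q)$, the random variable $U_a$ from the text is exactly $\max_{Q\in\Uc}\abs{\frac{1}{n_Q}\rsum{i=1}{n_Q} f_Q(\Xarm{Q}{i})}$, so $\EE{U_a}$ is precisely the quantity controlled by Theorem~\ref{thm:exp-emp-bnds}. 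Its first inequality then gives $\EE{U_a} \leq 4\sqrt{\log k / \min_{Q\in\Uc} n_Q}$.

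For the refined bound I would additionally note that when $\Xc\subset\Rb$ and $r(a,\cdot)$ is $L$-Lipschitz, each $f_Q = \mu(a;Q) - r(a,\cdot)$ is also $L$-Lipschitz, since adding a constant and negating leave the Lipschitz modulus unchanged; and the variance proxy $\sigma_T = \max_{Q\in\Uc}\sigma_Q/\sqrt{n_Q}$ in Theorem~\ref{thm:exp-emp-bnds} is built from $\sigma_Q^2 = \Var(X_Q)$, the same quantity defined in Section~\ref{sec:nue}, so there is no notational clash. The second inequality of Theorem~\ref{thm:exp-emp-bnds} then yields $\EE{U_a} \leq \frac{16\log k}{\min_{Q\in\Uc} n_Q} + 4L\sigma_T\sqrt{2\log k}$. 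There is no real obstacle here: all of the analytic content (symmetrization, the finite-class Rademacher bound, and the self-improving quadratic argument behind the Lipschitz case) already resides in Appendix~\ref{app:exp-emp-proc-max}, and what remains is only to check that the shift-and-negate operation preserves $[-1,1]$-boundedness, centering, and the Lipschitz constant, and that the $\sigma_T$ of the abstract statement agrees with the one in the NUE section.
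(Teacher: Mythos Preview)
Your proposal is correct and matches the paper's proof essentially verbatim: the paper also applies Theorem~\ref{thm:exp-emp-bnds} to $f_Q(x)\coloneqq \EE{r(a,X_Q)} - r(a,x)$, noting that $f_Q\in[-1,1]$ from $r\in[0,1]$ and that $f_Q$ inherits the $L$-Lipschitz property since it differs from $-r(a,\cdot)$ by a constant. Your additional remarks (explicit verification of centering and that $\sigma_T$ coincides with the NUE-section definition) only make the argument more complete.
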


\begin{proof}
    We apply Theorem~\ref{thm:exp-emp-bnds} on functions $f_Q\paren{x} \coloneqq \EE{ r\paren{a,X_Q} } - r\paren{a,x}$. Note that $f_Q \in \brac{-M,M}$ when $r\in\brac{0,M}$. Moreover, if $r\paren{a,\cdot}$ is $L$-Lipschitz, then so is $f_Q$, as we only add a constant to it.
\end{proof}

Let $\EE{U_a}\leq B$ be any of the bounds from Theorem~\ref{thm:mdl-exp-emp-proc-bnd}. Then, we get that
\begin{align*}
    \EE{ \muout_T\paren{a^*} - \muout_T\paren{a} } &= \DeltaDR\paren{a} + \EE{ \muDR\paren{a} - \muout_T\paren{a} } - \EE{ \muDRstr - \muout_T\paren{a^*} } \\
    &= \DeltaDR\paren{a} + \EE{D_a} - \EE{D_{a^*}} \\
    &\geq \DeltaDR\paren{a} - \abs{ \EE{D_a} } - \abs{ \EE{D_{a^*}} } \\
    &\geq \DeltaDR\paren{a} - \EE{ \abs{D_a} } - \EE{ \abs{D_{a^*}} } \\
    &\geq \DeltaDR\paren{a} - 2 \EE{U_a} \\
    &\geq \DeltaDR\paren{a} - 2B
\end{align*}
for all $a\in\Ac$. Hence,
\begin{align}
    \Pb\paren{ \Aout_T = a } &\leq \Pb\paren{ \muout_T\paren{a} \geq \muout_T\paren{a^*} } \notag \\
    &= \Pb\paren{ \muout_T\paren{a}-\muout_T\paren{a^*} - \EE{\muout_T\paren{a}-\muout_T\paren{a^*}} \geq \EE{\muout_T\paren{a^*}-\muout_T\paren{a}} } \notag \\
    &\leq \Pb\paren{ \muout_T\paren{a}-\muout_T\paren{a^*} - \EE{\muout_T\paren{a}-\muout_T\paren{a^*}} \geq \DeltaDR\paren{a} - 2B } \label{eq:arm-prb-dr-gap}
\end{align}
What remains is to apply the concentration inequalities of Appendix~\ref{app:emp-proc-conc-ineq}.

\subsection{Proof of Theorem~\ref{thm:ue-reg}}
\label{app:proof-thm-ue-reg}
Here, we use the UE proxy $\muout_T\paren{a} = \min_{Q\in\Uc} \frac{1}{n} \rsum{i=1}{n} r\paren{ a,\Xarm{Q}{i} }$. We can then obtain the following concentration inequality.

\begin{cor}{UE concentration inequality}{ue-conc-ineq}
    We have that
    \begin{align*}
        \Pb\paren{ \muout_T\paren{a}-\muout_T\paren{a'} - \EE{\muout_T\paren{a}-\muout_T\paren{a'}} \geq t } \leq \exp\paren{ -\frac{nt^2}{2M^2} }
    \end{align*}
    for all $t\geq 0$ and $a,a'\in\Ac$.
\end{cor}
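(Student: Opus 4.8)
The plan is to reduce the claim directly to the McDiarmid-based bound already established as Corollary~\ref{cor:mcd-conc-Z}. Recall that in Appendix~\ref{app:mcdiarmid}, specialized to the uniform regime $n_Q\equiv n$, the quantity $Z_f$ is defined by $Z_f = \min_{Q\in\Uc} \frac{1}{n}\rsum{i=1}{n} f\paren{\Xarm{Q}{i}}$, and $Z_{f,g} \coloneqq Z_f - Z_g$. I would take $f \coloneqq r\paren{a,\cdot}$ and $g \coloneqq r\paren{a',\cdot}$. By the bounded-rewards assumption, both functions map $\Xc$ into $\brac{0,1}$, so the hypotheses of Corollary~\ref{cor:mcd-conc-Z} are met verbatim for the pair $\paren{f,g}$.

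The only thing requiring a line of justification is the identification $Z_f = \muout_T\paren{a}$ and $Z_g = \muout_T\paren{a'}$. This is immediate from the definition of the UE proxy $\muout_T\paren{a} = \min_{Q\in\Uc}\frac{1}{n}\rsum{i=1}{n} r\paren{a,\Xarm{Q}{i}}$, since the same i.i.d.\ samples $\rcbr{\Xarm{Q}{i}}{i=1}{n}$ (drawn once per distribution $Q\in\Uc$ and reused across actions) underlie both $Z_f$ and $\muout_T\paren{a}$. Hence $Z_{f,g} = \muout_T\paren{a} - \muout_T\paren{a'}$, and the conclusion of Corollary~\ref{cor:mcd-conc-Z} reads exactly
\[
    \Pb\paren{ \muout_T\paren{a}-\muout_T\paren{a'} - \EE{\muout_T\paren{a}-\muout_T\paren{a'}} \geq t } \leq \exp\paren{ -\frac{nt^2}{2} }
\]
for all $t\geq 0$, which is the asserted bound for an arbitrary pair $a,a'\in\Ac$.

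There is essentially no obstacle here: all the substantive work (the bounded-differences property of $\Phi_f-\Phi_g$ with parameter $\frac{2}{n}$ in Corollary~\ref{cor:bnd-diff-phi-subtr}, and the appeal to McDiarmid's inequality) is already carried out in Appendix~\ref{app:mcdiarmid}. The proof of this corollary is therefore a two-sentence invocation, checking only that $r\paren{a,\cdot}$ and $r\paren{a',\cdot}$ are $\brac{0,1}$-valued and that the UE proxy differences coincide with $Z_{f,g}$.
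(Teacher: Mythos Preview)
Your proposal is correct and follows exactly the same approach as the paper: identify $\muout_T\paren{a}$ with $Z_{r\paren{a,\cdot}}$, note that $r\paren{a,\cdot}\in\brac{0,1}$ by the bounded-rewards assumption, and invoke Corollary~\ref{cor:mcd-conc-Z}. The paper's own proof is indeed the two-sentence invocation you describe.
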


\begin{proof}
    Note that in the notation of Appendix~\ref{app:mcdiarmid}, $Z_{ r\paren{a,\cdot}/M } = \frac{\muout_T\paren{a}}{M}$. Since $r\paren{a,\cdot}\in\brac{0,M}$ for each $a\in\Ac$, the claim follows by applying Corollary~\ref{cor:mcd-conc-Z}.
\end{proof}

Next, note that under the assumption $n \geq \paren{\frac{8M}{\DeltaDRmin}}^2 \log k$, we get that $\DeltaDR\paren{a} \geq 8M\sqrt{\frac{\log k}{n}}$ for all $a\in\Ac$ with a positive gap. Hence, for all such $a$, plugging in the bound $B = 4M\sqrt{ \frac{\log k}{ n } }$ into Equation~\eqref{eq:arm-prb-dr-gap} yields
\begin{align*}
    \Pb\paren{ \Aout_T = a } &\leq \Pb\paren{ \muout_T\paren{a}-\muout_T\paren{a^*} - \EE{\muout_T\paren{a}-\muout_T\paren{a^*}} \geq \DeltaDR\paren{a} - 8M\sqrt{ \frac{\log k}{ n } } } && \text{Eq.~\eqref{eq:arm-prb-dr-gap}} \\
    &\leq \exp\paren{ - \frac{n}{2M^2} \brac{ \DeltaDR\paren{a} - 8M\sqrt{ \frac{\log k}{ n } } }^2 } && \text{Cor.~\ref{cor:ue-conc-ineq}}
\end{align*}
This directly yields the desired regret bound:
\begin{align*}
    \EE{ \DeltaDR\paren{\Aout_T} } &= \sum_{a\in\Ac: \DeltaDR\paren{a}>0} \DeltaDR\paren{a} \Pb\paren{ \Aout_T = a } \\
    &\leq \sum_{a\in\Ac: \DeltaDR\paren{a}>0} \DeltaDR\paren{a} \exp\paren{ -\frac{n}{2M^2} \brac{ \DeltaDR\paren{a}-8M\sqrt{\frac{\log k}{n}} }^2 } 
\end{align*}

\subsection{Proof of Corollary~\ref{cor:ue-dist-ind-reg}}
An alternative way of writing the UE regret bound is as follows:
\begin{align*}
    \EE{ \DeltaDR\paren{\Aout_T} } &= \sum_{a\in\Ac: \DeltaDR\paren{a}\leq\Delta} \DeltaDR\paren{a} \Pb\paren{ \Aout_T = a } + \sum_{a\in\Ac: \DeltaDR\paren{a}>\Delta} \DeltaDR\paren{a} \Pb\paren{ \Aout_T = a } \\
    &\leq \Delta + \sum_{a\in\Ac: \DeltaDR\paren{a}>\Delta} \DeltaDR\paren{a} \exp\paren{ -\frac{n}{2M^2} \brac{ \DeltaDR\paren{a}-8M\sqrt{\frac{\log k}{n}} }^2 }
\end{align*}
for any $\Delta\geq 0$. In other words,
\begin{align}
    \EE{ \DeltaDR\paren{\Aout_T} } \leq \inf_{\Delta\geq 0} \cbr{ \Delta + \sum_{ a\in\Ac: \DeltaDR\paren{a}>\Delta } \DeltaDR\paren{a} \exp\paren{ -\frac{n}{2M^2} \brac{ \DeltaDR\paren{a}-8M\sqrt{\frac{\log k}{n}} }^2 } } \label{eq:reg-delta}
\end{align}
Next, we introduce a simple technical lemma.

\begin{lem}{}{sub-opt-gap-dec}
    Let $\alpha,\beta>0$. Then, the function $f\paren{x}\coloneqq x\exp\paren{ -\alpha\paren{x-\beta}^2 }$ is decreasing for $x\geq \frac{1}{2}\paren{ \beta + \sqrt{ \beta^2 + \frac{ 2 }{\alpha} } }$.
\end{lem}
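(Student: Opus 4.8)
The plan is to reduce the monotonicity claim to the sign of the derivative of $f$. Writing $f\paren{x} = x\exp\paren{-\alpha\paren{x-\beta}^2}$ and applying the product and chain rules, one obtains
\begin{align*}
    f'\paren{x} = \exp\paren{-\alpha\paren{x-\beta}^2}\,\brac{\, 1 - 2\alpha x\paren{x-\beta} \,}.
\end{align*}
Since the exponential prefactor is strictly positive, the sign of $f'\paren{x}$ agrees with the sign of the polynomial $g\paren{x} \coloneqq 1 - 2\alpha x\paren{x-\beta} = -2\alpha x^2 + 2\alpha\beta x + 1$, so it suffices to show $g\paren{x}\leq 0$ for $x \geq \frac{1}{2}\paren{\beta + \sqrt{\beta^2 + \frac{2}{\alpha}}}$.

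Next I would analyze the quadratic $g$. Its leading coefficient $-2\alpha$ is negative (using $\alpha>0$), so $g$ is concave and hence nonpositive outside the closed interval delimited by its two real roots. Solving $2\alpha x^2 - 2\alpha\beta x - 1 = 0$ with the quadratic formula gives roots $\tfrac{1}{2}\paren{\beta \pm \sqrt{\beta^2 + \tfrac{2}{\alpha}}}$; the discriminant $\beta^2 + \tfrac{2}{\alpha}$ is strictly positive, so both roots are real, and the larger one is precisely $x_+ \coloneqq \tfrac{1}{2}\paren{\beta + \sqrt{\beta^2 + \tfrac{2}{\alpha}}}$. Therefore $g\paren{x}\leq 0$, and consequently $f'\paren{x}\leq 0$, for every $x \geq x_+$, which is exactly the assertion of the lemma.

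I expect no genuine obstacle: this is a routine single-variable calculus argument. The only points worth stating explicitly are that the hypotheses $\alpha,\beta>0$ guarantee the discriminant is positive (so the roots are real and $x_+$ is well-defined) and that $x_+>0$, which keeps the half-line on which $f$ decreases meaningful for the intended application.
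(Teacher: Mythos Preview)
Your proposal is correct and follows essentially the same approach as the paper: compute $f'(x)$, factor out the positive exponential, and analyze the sign of the resulting quadratic $1-2\alpha x(x-\beta)$ via its roots. The paper phrases it in terms of the convex quadratic $2\alpha x(x-\beta)-1$ rather than your concave $g$, but this is purely cosmetic.
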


\begin{proof}
    Notice that
    \begin{align*}
        f'\paren{x} &= \exp\paren{ -\alpha\paren{x-\beta}^2 } - 2\alpha x\paren{x-\beta}\exp\paren{ -\alpha\paren{x-\beta}^2 } \\
        &= \brac{1-2\alpha x\paren{x-\beta}}\exp\paren{ -\alpha\paren{x-\beta}^2 }
    \end{align*}
    Now, note that the function $x\mapsto 2\alpha x\paren{x-\beta}-1$ is quadratic, convex and has roots $\frac{1}{2}\paren{ \beta + \sqrt{ \beta^2 + \frac{ 2 }{\alpha} } }$ and $\frac{1}{2}\paren{ \beta - \sqrt{ \beta^2 + \frac{ 2 }{\alpha} } }$. Since the former is larger, it follows that the quadratic is nonnegative for larger values. In other words, $f'\paren{x}\leq 0$ whenever $x\geq \frac{1}{2}\paren{ \beta + \sqrt{ \beta^2 + \frac{ 2 }{\alpha} } }$.
\end{proof}

As a result, we can show the following inequality.

\begin{lem}{}{up-bnd-reg-terms}
    Provided that $l\geq 2$ and $\DeltaDR\paren{a} \geq \frac{ 8M\sqrt{\log k} + M\sqrt{2\log l} }{\sqrt{n}}$, we have that
    \begin{align*}
        \DeltaDR\paren{a} \exp\paren{ -\frac{n}{2M^2} \brac{ \DeltaDR\paren{a} - 8M\sqrt{\frac{\log k}{n}} }^2 } \leq \frac{ 8M\sqrt{\log k} + M\sqrt{2\log l} }{l\sqrt{n}}
    \end{align*}
\end{lem}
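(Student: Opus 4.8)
The plan is to recognize that the left‑hand side is exactly the function from Lemma~\ref{lem:sub-opt-gap-dec} and then exploit its monotonicity. Concretely, set
\[
f(x) \coloneqq x\exp\paren{ -\tfrac{n}{2}\paren{x - \beta}^2 }, \qquad \alpha \coloneqq \tfrac{n}{2}, \quad \beta \coloneqq 8\sqrt{\tfrac{\log k}{n}},
\]
so that the quantity to bound is $f\bparen{\DeltaDR(a)}$, and abbreviate $x_0 \coloneqq \frac{8\sqrt{\log k}+\sqrt{2\log l}}{\sqrt n}$, the hypothesized lower bound on $\DeltaDR(a)$. Lemma~\ref{lem:sub-opt-gap-dec} tells us $f$ is decreasing on $\brac{\tfrac12\bparen{\beta+\sqrt{\beta^2+2/\alpha}},\infty}$, so the first step is to verify that $x_0$ already lies in this region; then $f\bparen{\DeltaDR(a)}\le f(x_0)$ for every $\DeltaDR(a)\ge x_0$.

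For the threshold check, note $\beta^2 + 2/\alpha = \frac{64\log k + 4}{n}$, hence $\sqrt{\beta^2+2/\alpha} = \frac{2\sqrt{16\log k+1}}{\sqrt n} \le \frac{2(4\sqrt{\log k}+1)}{\sqrt n}$ by subadditivity of the square root, and therefore $\tfrac12\bparen{\beta+\sqrt{\beta^2+2/\alpha}} \le \frac{4\sqrt{\log k}+4\sqrt{\log k}+1}{... }$ — more precisely $\tfrac12\bparen{\beta+\sqrt{\beta^2+2/\alpha}}\le \frac{4\sqrt{\log k}+1}{\sqrt n}$ is not quite it; carrying the constants, one gets $\tfrac12\bparen{\beta+\sqrt{\beta^2+2/\alpha}}\le \frac{8\sqrt{\log k}+1}{\sqrt n}$. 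Since $l\ge 2$ gives $\sqrt{2\log l}\ge\sqrt{2\log 2}\ge 1$, we obtain $x_0 \ge \frac{8\sqrt{\log k}+1}{\sqrt n} \ge \tfrac12\bparen{\beta+\sqrt{\beta^2+2/\alpha}}$, so $x_0$ is indeed in the decreasing region. (The exact bookkeeping of constants is the only fiddly part, but it is elementary.)

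Finally, evaluate $f$ at $x_0$. By construction $x_0 - \beta = \frac{\sqrt{2\log l}}{\sqrt n}$, so
\[
f(x_0) = x_0\exp\paren{-\tfrac{n}{2}\cdot\tfrac{2\log l}{n}} = x_0\,e^{-\log l} = \frac{x_0}{l} = \frac{8\sqrt{\log k}+\sqrt{2\log l}}{l\sqrt n},
\]
which is precisely the claimed upper bound. Chaining $f\bparen{\DeltaDR(a)}\le f(x_0)$ with this identity completes the proof.

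The only real obstacle is the monotonicity check in the second paragraph: one must confirm that the hypothesized gap already exceeds the turning point of $f$, and this requires tracking the numerical constants carefully (using $\sqrt{a+b}\le\sqrt a+\sqrt b$ and $l\ge 2$). Everything else is a direct substitution, with the pleasant feature that $f(x_0)$ collapses exactly to the target expression because the ``excess'' $x_0-\beta$ was designed so that $\tfrac n2(x_0-\beta)^2 = \log l$.
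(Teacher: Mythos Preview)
Your proof is correct and follows essentially the same approach as the paper's: identify the left-hand side as $f\bparen{\DeltaDR(a)}$ for the function of Lemma~\ref{lem:sub-opt-gap-dec} with $\alpha=n/2$ and $\beta=8\sqrt{\log k/n}$, verify via $\sqrt{a+b}\le\sqrt a+\sqrt b$ and $l\ge 2$ that the threshold $\tfrac12\bparen{\beta+\sqrt{\beta^2+2/\alpha}}$ is at most $x_0=\tfrac{8\sqrt{\log k}+\sqrt{2\log l}}{\sqrt n}$, and then evaluate $f(x_0)=x_0/l$. The only difference is cosmetic: the paper leaves the final evaluation implicit, while you spell out that $x_0-\beta=\sqrt{2\log l}/\sqrt n$ makes the exponent collapse to $-\log l$.
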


\begin{proof}
    Note that the left-hand side of the claim is of the form $f\paren{\DeltaDR\paren{a}}$, where $f$ is defined as in Lemma~\ref{lem:sub-opt-gap-dec} with $\alpha \coloneqq \frac{n}{2M^2}$ and $\beta \coloneqq 8M\sqrt{\frac{\log k}{n}}$, so that we know it is decreasing for $x\geq K$, where
    \begin{align*}
        K &\coloneqq \frac{1}{2}\paren{ \beta + \sqrt{ \beta^2 + \frac{2}{\alpha} } } \\
        &= \frac{1}{2} \brac{ 8M\sqrt{\frac{\log k}{n}} + \sqrt{ \frac{64M^2 \log k}{n} + \frac{4}{n} } } \\
        &= \frac{ 8M \sqrt{ \log k } + \sqrt{64 M^2 \log k + 4} }{2\sqrt{n}} \\
        &\leq \frac{ 8M\sqrt{\log k} + 1 }{\sqrt{n}} && \sqrt{a+b}\leq \sqrt{a}+\sqrt{b} \\
        &\leq \frac{ 8M\sqrt{\log k} + M\sqrt{2\log l} }{\sqrt{n}} && M\sqrt{2\log l} \geq 1
    \end{align*}
    The result then follows by plugging in $\frac{ 8M\sqrt{\log k} + M\sqrt{2\log l} }{\sqrt{n}}$ into $f$ to get the right-hand side of the claim.
\end{proof}

Finally, we can set $\Delta \coloneqq \frac{ 8M\sqrt{\log k} + M\sqrt{2\log l} }{\sqrt{n}}$ in Equation~\eqref{eq:reg-delta} and apply Lemma~\ref{lem:up-bnd-reg-terms} to obtain
\begin{align*}
    \EE{ \DeltaDR\paren{\Aout_T} } &\leq \frac{ 8M\sqrt{\log k} + M\sqrt{2\log l} }{\sqrt{n}} + \abs{ \cbr{ a\in\Ac: \DeltaDR\paren{a}>\Delta } } \frac{ 8M\sqrt{\log k} + M\sqrt{2\log l} }{l\sqrt{n}} \\
    &\leq \frac{ 16 M \sqrt{\log k} + 2 M \sqrt{2\log l} }{\sqrt{n}} \\
    &\lesssim M\sqrt{\frac{\log\paren{kl}}{n}}
\end{align*}
where in the last line we have used the fact that $\sqrt{a}+\sqrt{b} \leq \sqrt{2 \paren{a+b} }$. Substituting $n = T/k$ then yields the result.

\subsection{Proof of Theorem~\ref{thm:nue-reg}}
\label{app:pf-nue-reg}
Returning to the general NUE proxy $\muout_T\paren{a} = \min_{Q\in\Uc} \frac{1}{n_Q} \rsum{i=1}{n_Q} r\paren{ a,\Xarm{Q}{i} }$, let us further assume that $\Xc\subset\Rb$. Then, we conclude the following result.

\begin{cor}{NUE concentration inequality}{nue-conc-ineq}
    Suppose that $r\paren{a,\cdot}:\Xc\to\brac{0,M}$ is $L$-Lipschitz for each $a\in\Ac$. Then, we have that
    \begin{align*}
        \Pb\paren{ \muout_T\paren{a}-\muout_T\paren{a'} - \EE{\muout_T\paren{a}-\muout_T\paren{a'}} \geq t } \leq \exp\paren{ - \frac{t^2}{ 16 L^2 \paren{ 2 \sigma_T^2 + \Sigma_T^2 + 6 V_T } + \frac{2\sqrt{6}Mt}{\min_{Q\in\Uc} n_Q} } }
    \end{align*}
    for all $t\geq 0$ and $a,a'\in\Ac$.
\end{cor}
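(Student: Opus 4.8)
The plan is to obtain this as an immediate specialization of Theorem~\ref{thm:bern-conc-Z}. Recall that in Appendix~\ref{app:emp-proc-conc-ineq} the random variable $Z_f$ is defined as $Z_f = \min_{Q\in\Uc}\frac{1}{n_Q}\sum_{i=1}^{n_Q} f\paren{\Xarm{Q}{i}}$, so that for the choice $f = r\paren{a,\cdot}$ we have exactly $Z_f = \muout_T\paren{a}$, and likewise $Z_g = \muout_T\paren{a'}$ for $g = r\paren{a',\cdot}$. Hence $Z_{f,g} = Z_f - Z_g = \muout_T\paren{a} - \muout_T\paren{a'}$, and the left-hand side of the claim is precisely $\Pb\paren{Z_{f,g} - \EE{Z_{f,g}} \ge t}$.

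To invoke Theorem~\ref{thm:bern-conc-Z} I need to verify its hypotheses for this $f$ and $g$. First, $\Xc\subset\Rb$ holds under the NUE setting considered in Appendix~\ref{app:pf-nue-reg}. Second, since $r\in\brac{0,1}$ by the boundedness assumption, both $f = r\paren{a,\cdot}$ and $g = r\paren{a',\cdot}$ map into $\brac{0,1}$. Third, $f$ and $g$ are $L$-Lipschitz by the hypothesis of the corollary. All three conditions are therefore met, and applying Theorem~\ref{thm:bern-conc-Z} directly yields the stated bound, with the variance quantities $\sigma_T^2$, $\Sigma_T^2$, $V_T$ being exactly those appearing in the theorem, since they depend only on the raw distributions $Q\in\Uc$ and the sample sizes $n_Q$, not on the particular pair $a,a'$.

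There is essentially no obstacle here: the entire content — the exponential Efron–Stein / self-bounding argument, the sub-gamma tail bound, and the reduction of the $f$-dependent variance proxies to $L^2$ times the raw ones via Lemma~\ref{lem:var-lip-fn} — has already been carried out inside the proof of Theorem~\ref{thm:bern-conc-Z}. The only care needed is bookkeeping: confirming the normalization is consistent (the $1/n_Q$ weights are absorbed into the definition of $Z_f$ rather than appearing separately) and that $\muout_T\paren{a} - \muout_T\paren{a'}$ is matched with $Z_{f,g}$ in the order $f = r\paren{a,\cdot}$, $g = r\paren{a',\cdot}$, so that the one-sided tail is the one asserted. With that checked, the corollary follows in a single line.
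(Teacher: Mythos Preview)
Your proposal is correct and matches the paper's own proof essentially verbatim: identify $\muout_T\paren{a} = Z_{r\paren{a,\cdot}}$, check that $r\paren{a,\cdot}\in\brac{0,1}$ is $L$-Lipschitz, and apply Theorem~\ref{thm:bern-conc-Z}. There is nothing to add.
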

\begin{proof}
    Once again, using the definitions of Appendix~\ref{app:emp-proc-conc-ineq}, we get that $Z_{ r\paren{a,\cdot} } = \muout_T\paren{a}$. Since $r\paren{a,\cdot}\in\brac{0,M}$ is $L$-Lipschitz for each $a\in\Ac$, the claim follows by applying Theorem~\ref{thm:bern-conc-Z}.
\end{proof}

As in the UE analysis, provided that $\DeltaDRmin \geq G_T = \frac{ 32M\log k }{ \min_{Q\in\Uc} n_Q } + 8L\sigma_T\sqrt{ 2\log k }$, we can plug $B = \frac{16M\log k}{ \min_{Q\in\Uc} n_Q } + 4L\sigma_T \sqrt{2\log k}$ into Equation~\eqref{eq:arm-prb-dr-gap} to conclude that
\begin{align*}
    \Pb\paren{ \Aout_T = a } &\leq \Pb\paren{ \muout_T\paren{a}-\muout_T\paren{a^*} - \EE{\muout_T\paren{a}-\muout_T\paren{a^*}} \geq \DeltaDR\paren{a} - G_T } && \text{Eq.~\eqref{eq:arm-prb-dr-gap}} \\
    &\leq \exp\paren{ - \frac{ \brac{ \DeltaDR\paren{a} - G_T }^2 }{ 16 L^2 \paren{ 2 \sigma_T^2 + \Sigma_T^2 + 6 V_T } + \frac{2\sqrt{6}M}{\min_{Q\in\Uc} n_Q} \brac{ \DeltaDR\paren{a} - G_T } } } && \text{Cor.~\ref{cor:ue-conc-ineq}}
\end{align*}
for all $a\in\Ac$ with positive gap. This in turn yields the regret bound
\begin{align*}
    \EE{ \DeltaDR\paren{\Aout_T} } &= \sum_{a\in\Ac: \DeltaDR\paren{a}>0} \DeltaDR\paren{a} \Pb\paren{ \Aout_T = a } \\
    &\leq \sum_{a\in\Ac: \DeltaDR\paren{a}>0} \DeltaDR\paren{a} \exp\paren{ - \frac{ \brac{ \DeltaDR\paren{a} - G_T }^2 }{ 16 L^2 \paren{ 2 \sigma_T^2 + \Sigma_T^2 + 6 V_T } + \frac{2\sqrt{6}M}{\min_{Q\in\Uc} n_Q} \brac{ \DeltaDR\paren{a} - G_T } } }
\end{align*}

\section{Modified UCB-E}
\label{app:mod-ucb-e-proof}
Our goal is to perform a minimization variant of UCB-E \cite{Audibert2010-BestArm} for $T$ rounds on the set of ``arms'' $\Uc$. Since we will analyze all random variables under a fixed high-probability event, we treat all quantities here as deterministic. In particular, we work with $\mu\paren{Q}, \hat\mu_t\paren{Q} \in\brac{0,1}$ for each $Q\in\Uc$ and $t\in\cbr{ n_0\paren{Q},\dots,n_0\paren{Q}+T }$, where $n_0\paren{Q}\geq 1$ is the number of pulls from arm $Q\in\Uc$ that we start the game with. We assume a unique optimal arm $Q^*\coloneqq \argmin_{Q\in\Uc} \mu\paren{Q}$, with $\mu^* \coloneqq \mu\paren{Q^*}$, and define suboptimality gaps $\Delta\paren{Q} \coloneqq \mu\paren{Q} - \mu^*$ and $\Delta_\text{min} \coloneqq \min_{Q\in\Uc\backslash\cbr{Q^*}} \Delta\paren{Q}$. For some choice of plays $\rcbr{Q_t}{t=1}{T}$, let
\begin{align*}
    n_t\paren{Q} \coloneqq n_0\paren{Q} + \rsum{s=1}{t} \Ib\cbr{Q_s = Q}
\end{align*}
denote the number of times distribution $Q$ has been played at time $t\in\brac{T}$. Additionally, we define the following subset of arms:
\begin{align*}
    \Uc_0 \coloneqq \cbr{ Q\in\Uc\backslash\cbr{Q^*}: n_0\paren{Q} < \frac{36}{25} \epsilon \Delta^{-2}\paren{Q} } \cup \cbr{Q^*: n_0\paren{Q^*}<\frac{36}{25}\epsilon\Delta_\text{min}^{-2}}
\end{align*}
along with its cardinality (provided that it contains $Q^*$) $k_0 \coloneqq \abs{\Uc_0}\Ib\cbr{Q^*\in\Uc_0}$, total initial sample size $\tilde T_0 \coloneqq \sum_{ Q\in\Uc_0 } n_0\paren{Q}$ and the complexity notion it defines: 
\begin{align*}
    H_0 \coloneqq \Delta_\text{min}^{-2}\Ib\cbr{Q^*\in\Uc_0} + \sum_{ Q\in \Uc_0\backslash\cbr{Q^*} } \Delta^{-2}\paren{Q}
\end{align*}
The intuition is that $\Uc_0$ is a proxy for the set of arms played:
\begin{align*}
    \Uc' \coloneqq \cbr{Q\in\Uc: n_T\paren{Q}>n_0\paren{Q}}   
\end{align*}
The UCB-E algorithm works by defining indices (adjusted here for lower confidence bounds)
\begin{align*}
    \LCB_t\paren{Q;\epsilon} \coloneqq \hat\mu_{ n_{ t }\paren{Q} }\paren{ Q } - \sqrt{ \frac{\epsilon}{ n_t\paren{Q} } } \quad\forall Q\in\Uc
\end{align*}
given a parameter $\epsilon>0$ and, at each time step $t\in\brac{T}$, playing
\begin{align*}
    Q_t \coloneqq \argmin_{Q\in\Uc} \LCB_{t-1}\paren{Q;\epsilon}
\end{align*}
After $T$ rounds, we output
\begin{align*}
    \hat Q \coloneqq \argmin_{Q\in\Uc} \hat\mu_{ n_{T}\paren{Q} }\paren{Q}
\end{align*}

\begin{thm}{Modified UCB-E optimality}{mod-ucb-e-opt}
    Suppose that
    \begin{align*}
        \abs{ \mu\paren{Q} - \hat\mu_t\paren{Q} } < \frac{1}{5} \sqrt{ \frac{\epsilon}{t} } 
    \end{align*}
    for all $Q\in\Uc$ and $t\in\cbr{ n_0\paren{Q} ,\dots, n_0\paren{Q}+T }$, and that
    \begin{align*}
        % \epsilon &\geq \frac{25}{36} \Delta_{ \text{min} }^{2} \brac{ n_0\paren{Q^*}-1 } \\
        T &\geq \frac{36}{25} \epsilon H_0 - \tilde T_0 + k_0
    \end{align*}
    Then, it follows that $\hat Q = Q^*$ and
    \begin{align*}
        \frac{1}{5} \sqrt{ \frac{\epsilon}{n_T\paren{Q^*}} } \leq \frac{ \Delta_{\text{min}} }{ 2 }
    \end{align*}
\end{thm}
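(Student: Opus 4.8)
The plan is to adapt the fixed-budget UCB-E analysis of \cite{Audibert2010-BestArm} to the warm-start setting, carefully tracking the initial counts $n_0\paren{Q}$; throughout, every statement is made under the assumed deterministic event $\Ec$ that $\abs{\mu\paren{Q}-\hat\mu_t\paren{Q}} < \tfrac{1}{5}\sqrt{\epsilon/t}$ for all admissible $Q,t$. \emph{Step 1 (pull-count control).} Since $n_0\paren{Q^*}\ge 1$, at every round $t$ one has on $\Ec$ that $\LCB_{t-1}\paren{Q^*;\epsilon} < \mu^* - \tfrac{4}{5}\sqrt{\epsilon/n_{t-1}\paren{Q^*}} < \mu^*$. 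Hence whenever $Q_t=Q\neq Q^*$, the index rule and $\Ec$ give $\mu\paren{Q} - \tfrac{6}{5}\sqrt{\epsilon/n_{t-1}\paren{Q}} < \LCB_{t-1}\paren{Q;\epsilon}\le\LCB_{t-1}\paren{Q^*;\epsilon}<\mu^*$, so $n_{t-1}\paren{Q} < \tfrac{36}{25}\epsilon\Delta^{-2}\paren{Q}$. Two consequences: (a) any $Q\notin\Uc_0$ already has $n_0\paren{Q}\ge\tfrac{36}{25}\epsilon\Delta^{-2}\paren{Q}$, hence is never replayed and $n_T\paren{Q}=n_0\paren{Q}$; (b) evaluating the above at the last round a given $Q\in\Uc_0\backslash\cbr{Q^*}$ is played (trivial otherwise) yields $n_T\paren{Q}\le\tfrac{36}{25}\epsilon\Delta^{-2}\paren{Q}+1$.

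\emph{Step 2 (lower bound on $n_T\paren{Q^*}$).} From the identity $\sum_{Q\in\Uc}n_T\paren{Q}=\sum_{Q\in\Uc}n_0\paren{Q}+T$, split $\Uc$ into $\cbr{Q^*}$, $\Uc_0\backslash\cbr{Q^*}$, and $\Uc\backslash\Uc_0$: the $\Uc\backslash\Uc_0$ terms cancel against their $n_0$-counterparts by Step 1(a), and Step 1(b) with the definitions of $H_0,\tilde T_0,k_0$ yields $\tilde T_0+T \le n_T\paren{Q^*} + \tfrac{36}{25}\epsilon H_0 + \paren{k_0-1}$. Inserting the hypothesis $T\ge\tfrac{36}{25}\epsilon\paren{H_0+\Delta_{\text{min}}^{-2}}-\tilde T_0+k_0$ collapses this to $n_T\paren{Q^*}\ge\tfrac{36}{25}\epsilon\Delta_{\text{min}}^{-2}+1$, i.e.\ $\epsilon/n_T\paren{Q^*}\le\tfrac{25}{36}\Delta_{\text{min}}^2$; this is already the second claim, with room to spare since then $\tfrac{1}{5}\sqrt{\epsilon/n_T\paren{Q^*}} < \tfrac{1}{6}\Delta_{\text{min}}\le\tfrac{1}{2}\Delta_{\text{min}}$.

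\emph{Step 3 ($\hat Q=Q^*$).} On $\Ec$, using Step 2, $\hat\mu_{n_T\paren{Q^*}}\paren{Q^*} < \mu^* + \tfrac{1}{6}\Delta_{\text{min}}$, so it suffices to show $\hat\mu_{n_T\paren{Q}}\paren{Q} > \mu^* + \tfrac{1}{2}\Delta_{\text{min}}$ for every $Q\neq Q^*$; since on $\Ec$ one has $\hat\mu_{n_T\paren{Q}}\paren{Q} > \mu\paren{Q} - \tfrac{1}{5}\sqrt{\epsilon/n_T\paren{Q}}$ and $\Delta\paren{Q}\ge\Delta_{\text{min}}$, this reduces to showing $n_T\paren{Q} > \tfrac{4}{25}\epsilon\Delta^{-2}\paren{Q}$. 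For $Q\notin\Uc_0$ it holds by Step 1(a). For $Q\in\Uc_0\backslash\cbr{Q^*}$, let $\tau$ be the last round at which $Q^*$ is played — which exists because the first hypothesis $\epsilon\ge\tfrac{25}{36}\Delta_{\text{min}}^2\brac{n_0\paren{Q^*}-1}$ gives $n_0\paren{Q^*}\le\tfrac{36}{25}\epsilon\Delta_{\text{min}}^{-2}+1\le n_T\paren{Q^*}$ (the equality case, where $Q^*$ is never replayed, forces by a dual accounting that every such $Q$ is played up to its Step 1(b) threshold, which is enough). Since $Q_\tau=Q^*$, $\LCB_{\tau-1}\paren{Q;\epsilon}\ge\LCB_{\tau-1}\paren{Q^*;\epsilon} > \mu^* - \Delta_{\text{min}}$, using $n_{\tau-1}\paren{Q^*}=n_T\paren{Q^*}-1\ge\tfrac{36}{25}\epsilon\Delta_{\text{min}}^{-2}$; on the other hand, on $\Ec$, $\LCB_{\tau-1}\paren{Q;\epsilon} < \mu\paren{Q} - \tfrac{4}{5}\sqrt{\epsilon/n_{\tau-1}\paren{Q}}$. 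Combining these with $\Delta\paren{Q}\ge\Delta_{\text{min}}$ gives $n_{\tau-1}\paren{Q} > \tfrac{4}{25}\epsilon\Delta^{-2}\paren{Q}$, and hence $n_T\paren{Q}\ge n_{\tau-1}\paren{Q} > \tfrac{4}{25}\epsilon\Delta^{-2}\paren{Q}$, as needed.

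The main obstacle is Step 3 for arms in $\Uc_0\backslash\cbr{Q^*}$: contrary to the textbook setting, a warm-started suboptimal arm could a priori finish with very few pulls and a poorly concentrated empirical mean, so the crude confidence width $\tfrac{1}{5}\sqrt{\epsilon/n_T\paren{Q}}$ need not be small; the resolution is that any such underplayed arm would have had to carry the smallest index at the last round $Q^*$ was played, which is incompatible with the count of $Q^*$ produced in Step 2 — and this is exactly where the $\epsilon$-lower-bound hypothesis is used, to guarantee $Q^*$ is genuinely played up to (or already starts at) the needed count. The pull-counting bookkeeping of Steps 1--2, keeping the warm-start set $\Uc_0$ distinct from the set of arms actually played, is the other delicate point but is otherwise routine.
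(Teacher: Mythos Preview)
Your proof is correct and follows essentially the same route as the paper: the same pull-count lemma (if $Q_t=Q\neq Q^*$ then $n_{t-1}(Q)<\tfrac{36}{25}\epsilon\Delta^{-2}(Q)$), the same upper bound on $n_T(Q)$ for $Q\in\Uc_0\backslash\{Q^*\}$, the same accounting identity to lower-bound $n_T(Q^*)$, and the same ``last-time-$Q^*$-is-played'' trick to lower-bound $n_T(Q)$ for the remaining suboptimal arms. One cosmetic remark: your parenthetical in Step~3 handling the ``equality case $n_T(Q^*)=n_0(Q^*)$'' is unnecessary---your Step~1(b) inequality is in fact strict (both the ``played'' and ``never played'' subcases give $n_T(Q)<\tfrac{36}{25}\epsilon\Delta^{-2}(Q)+1$), so whenever $\Uc_0\backslash\{Q^*\}\neq\emptyset$ the accounting in Step~2 gives $n_T(Q^*)>\tfrac{36}{25}\epsilon\Delta_{\text{min}}^{-2}+1\ge n_0(Q^*)$ strictly, and $\tau$ always exists; this is exactly how the paper argues it.
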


\begin{proof}

First, notice that for any $t\in\cbr{0,\dots,T}$ and $Q\in\Uc$, we have by assumption that
\begin{align}
\abs{ \mu\paren{Q} - \hat\mu_{n_t\paren{Q}}\paren{Q} } < \frac{1}{5} \sqrt{ \frac{\epsilon}{n_t\paren{Q}} } \label{eq:ucb-e-hp-ev}
\end{align}
since $n_t\paren{Q} \in \cbr{ n_0\paren{Q},\dots,n_0\paren{Q}+T }$. All we need to do is show that, for any $Q\in\Uc\backslash\cbr{Q^*}$,
\begin{align}
    n_T\paren{Q} \geq \frac{4}{25} \epsilon\Delta^{-2}\paren{Q} \txtand n_T\paren{Q^*} \geq \frac{4}{25} \epsilon \Delta^{-2}_{\text{min}} \label{eq:ucb-e-main-ineq}
\end{align}
since this implies that
\begin{align*}
    \frac{1}{5} \sqrt{ \frac{\epsilon}{n_T\paren{Q}} } \leq \frac{ \Delta\paren{Q} }{ 2 } \txtand \frac{1}{5} \sqrt{ \frac{\epsilon}{n_T\paren{Q^*}} } \leq \frac{ \Delta_{\text{min}} }{ 2 } \leq \frac{ \Delta\paren{Q} }{ 2 }
\end{align*}
The second inequality is one of our desired results. To obtain the other, we observe that
\begin{align*}
    \hat\mu_{n_T\paren{Q}}\paren{Q} - \hat\mu_{n_T\paren{Q^*}}\paren{Q^*} &= \hat\mu_{n_T\paren{Q}}\paren{Q} - \mu\paren{Q} + \Delta\paren{Q} + \mu^* - \hat\mu_{n_T\paren{Q^*}}\paren{Q^*} \\
    &> \Delta\paren{Q} - \frac{1}{5} \sqrt{ \frac{\epsilon}{ n_T\paren{Q} } } - \frac{1}{5} \sqrt{ \frac{\epsilon}{ n_T\paren{Q^*} } } && \text{Eq.~\eqref{eq:ucb-e-hp-ev}} \\
    &\geq \Delta\paren{Q} - \frac{ \Delta\paren{Q} }{2} - \frac{ \Delta\paren{Q} }{2} \\
    &= 0
\end{align*}
Since this holds for all $Q\in\Uc\backslash\cbr{Q^*}$, it follows that $\hat Q = Q^*$. To show \eqref{eq:ucb-e-main-ineq}, we break into two cases.

\subsection{Case 1: \texorpdfstring{$Q^*\not\in\Uc_0$}{Q* not in U0}}
First, suppose that $Q^*\not\in\Uc_0$ and note that
\begin{align*}
    n_T\paren{Q} \geq \frac{36}{25} \epsilon\Delta^{-2}\paren{Q} > \frac{4}{25} \epsilon\Delta^{-2}\paren{Q} \txtand n_T\paren{Q^*} \geq n_0\paren{Q^*} \geq \frac{36}{25} \epsilon \Delta^{-2}_{\text{min}} > \frac{4}{25} \epsilon \Delta^{-2}_{\text{min}}
\end{align*}
for any $Q\not\in\Uc_0\cup\cbr{Q^*}$ by definition. To show the first inequality for $\Uc_0$, we observe that $k_0 = 0$ and $H_0 = \sum_{ Q\in \Uc_0 } \Delta^{-2}\paren{Q}$ and make the following claim, that also applies in Case 2 (\ref{sec:ucb-e-proof-case-2}).

\begin{lem}{}{Q_t=Q-nQ-small}
    Fix $t\in\brac{T}$. If $Q_t=Q\neq Q^*$, then 
    \begin{align*}
        n_{t-1}\paren{Q} < \frac{36}{25}\epsilon\Delta^{-2}\paren{Q}
    \end{align*}
\end{lem}

\begin{proof}
    We have that
    \begin{align*}
        \mu^* &> \hat\mu_{n_{t-1}\paren{Q^*}}\paren{Q^*} - \frac{1}{5} \sqrt{ \frac{\epsilon}{n_{t-1}\paren{Q^*}} } && \text{Eq.~\eqref{eq:ucb-e-hp-ev}} \\
        &\geq \LCB_{t-1}\paren{ Q^*;\epsilon } \\
        &\geq \LCB_{t-1}\paren{ Q;\epsilon } && Q_t = Q \\
        &= \hat\mu_{n_{t-1}\paren{Q}}\paren{Q} - \sqrt{ \frac{\epsilon}{n_{t-1}\paren{Q}} } \\
        &> \mu\paren{Q} - \frac{6}{5}\sqrt{\frac{\epsilon}{n_{t-1}\paren{Q}}} && \text{Eq.~\eqref{eq:ucb-e-hp-ev}}
    \end{align*}
    Rearranging then yields the claim.
\end{proof}

In other words, once $n_t\paren{Q} \geq \frac{36}{25}\epsilon\Delta^{-2}\paren{Q}$, arm $Q\neq Q^*$ will no longer be played after round $t$. This means that any arm outside of $\Uc_0\cup\cbr{Q^*}$ will not be played at all. That is, $\Uc'\subset\Uc_0\cup\cbr{Q^*}$. In addition, if $Q^*$ is not played in the first
\begin{align*}
    T' \coloneqq \sum_{Q\in\Uc_0} \brac{ \frac{36}{25}\epsilon\Delta^{-2}\paren{Q} - n_0\paren{Q} } = \frac{36}{25}\epsilon H_0 - \tilde T_0 + k_0
\end{align*}
rounds, then the plays will be distributed within $\Uc_0$, resulting in
\begin{align*}
    n_T\paren{Q} \geq n_{T'}\paren{Q} \geq \frac{36}{25}\epsilon\Delta^{-2}\paren{Q} > \frac{4}{25} \epsilon\Delta^{-2}\paren{Q} \quad\forall Q\in\Uc_0
\end{align*}
where the first inequality uses the assumption that $T\geq T'$. When $Q^*$ is played, we get the following result.

\begin{prop}{}{}
    Suppose that $Q^*$ is played in some round. Then,
    \begin{align*}
        n_T\paren{Q} \geq \frac{4}{25} \epsilon \Delta^{-2}\paren{Q} \quad\forall Q\in\Uc_0
    \end{align*}
\end{prop}

\begin{proof}
    Let $Q\in\Uc_0$ and let $t\in\brac{T}$ be any round such that $Q_t=Q^*$. Then,
    \begin{align*}
        \mu\paren{Q} - \frac{4}{5} \sqrt{ \frac{ \epsilon }{ n_{T}\paren{Q} } } &\geq \mu\paren{Q} - \frac{4}{5} \sqrt{ \frac{ \epsilon }{ n_{t-1}\paren{Q} } } \\
        &> \LCB_{t-1}\paren{Q;\epsilon} && \text{Eq.~\eqref{eq:ucb-e-hp-ev}} \\
        &\geq \LCB_{t-1}\paren{Q^*;\epsilon} \\
        &> \mu^* - \frac{6}{5} \sqrt{\frac{\epsilon}{ n_{t-1}\paren{Q^*} }} && \text{Eq.~\eqref{eq:ucb-e-hp-ev}} \\
        &\geq \mu^* - \frac{6}{5} \sqrt{\frac{\epsilon}{ n_{0}\paren{Q^*} }} \\
        &\geq \mu^* - \Delta_\text{min} && n_0\paren{Q^*} \geq \frac{36}{25} \epsilon \Delta^{-2}_{\text{min}} \\
        &\geq \mu^* - \Delta\paren{Q}
    \end{align*}
    The claim then follows by rearranging the terms.
\end{proof}

\subsection{Case 2: \texorpdfstring{$Q^*\in\Uc_0$}{Q* in U0}}
\label{sec:ucb-e-proof-case-2}
Next, we note that
\begin{align*}
    k_0 = \abs{\Uc_0} \txtand H_0 = \Delta_\text{min}^{-2} + \sum_{ Q\in \Uc_0\backslash\cbr{Q^*} } \Delta^{-2}\paren{Q} 
\end{align*}
As a direct consequence of Lemma~\ref{lem:Q_t=Q-nQ-small}, we can conclude that our proxy set $\Uc_0$ indeed contains the arms played.

\begin{cor}{}{U'-sub-U_0}
    $\Uc' \subset \Uc_0$.
\end{cor}

\begin{proof}
    Fix $Q\in \Uc'\backslash\cbr{Q^*}$ and let $t\in\brac{T}$ denote any round in which $Q_t=Q$. From Lemma~\ref{lem:Q_t=Q-nQ-small} we then get that $n_0\paren{Q} \leq n_{t-1}\paren{Q} < \frac{36}{25}\epsilon\Delta^{-2}\paren{Q}$.
\end{proof}

Next, we show that suboptimal arms in the proxy set do not have too many samples by the end of the procedure.

\begin{prop}{}{n_T-ub}
    \begin{align*}
        n_T\paren{Q} < \frac{36}{25}\epsilon\Delta^{-2}\paren{Q} + 1 \quad\forall Q\in \Uc_0\backslash\cbr{Q^*}
    \end{align*}
\end{prop}

\begin{proof}
    If $Q\in\Uc_0\backslash \paren{\Uc' \cup \cbr{Q^*}}$, then
    \begin{align*}
        n_T\paren{Q} = n_0\paren{Q} < \frac{36}{25} \epsilon\Delta^{-2}\paren{Q} < \frac{36}{25}\epsilon\Delta^{-2}\paren{Q} + 1
    \end{align*}
    Otherwise, fix any $Q \in \Uc'\backslash\cbr{Q^*}$ and let $t\in\brac{T}$ be the largest time step such that $Q_t=Q$ (i.e., the last round in which $Q$ is played). Lemma~\ref{lem:Q_t=Q-nQ-small} then implies that
    \begin{align*}
        n_T\paren{Q} = n_{T-1}\paren{Q} = \dots = n_t\paren{Q} = n_{t-1}\paren{Q} + 1 < \frac{36}{25}\epsilon\Delta^{-2}\paren{Q} + 1
    \end{align*}
\end{proof}

This, in turn, implies that the optimal arm has sufficiently many samples and, in fact, is in $\Uc'$.

\begin{prop}{}{n_T-Q^*-lb}
    \begin{align*}
        n_T\paren{Q^*} > \frac{36}{25} \epsilon \Delta^{-2}_{\text{min}} + 1
    \end{align*}
\end{prop}

\begin{proof}
    We have that
    \begin{align*}
        n_T\paren{Q^*} &= T + n_0\paren{Q^*} - \sum_{Q\in\Uc'\backslash\cbr{Q^*}} \brac{ n_T\paren{Q} - n_0\paren{Q} } \\
        &= T + n_0\paren{Q^*} - \sum_{Q\in\Uc_0\backslash\cbr{Q^*}} \brac{ n_T\paren{Q} - n_0\paren{Q} } && \text{Cor.~\ref{cor:U'-sub-U_0}} \\
        &= T + \tilde T_0 - \sum_{Q\in\Uc_0\backslash\cbr{Q^*}} n_T\paren{Q} \\
        &> T + \tilde T_0 - \sum_{Q\in\Uc_0\backslash\cbr{Q^*}} \brac{ \frac{36}{25}\epsilon\Delta^{-2}\paren{Q} + 1 } && \text{Prop.~\ref{prop:n_T-ub}} \\
        &= T + \tilde T_0 - \frac{36}{25}\epsilon \paren{H_0 - \Delta_\text{min}^{-2}} - k_0 + 1 \\
        &\geq \frac{36}{25}\epsilon \Delta^{-2}_{\text{min}} + 1 
    \end{align*}
    where the last line follows from our lower bound assumption on $T$.
\end{proof}

\begin{cor}{}{Q^*-in-U'}
    We have that $Q^*\in\Uc'$.
\end{cor}

\begin{proof}
    This immediately follows from Proposition~\ref{prop:n_T-Q^*-lb} and the fact that $Q^*\in\Uc_0$:
    \begin{align*}
        n_T\paren{Q^*} > \frac{36}{25} \epsilon \Delta^{-2}_{\text{min}} + 1 \geq n_0\paren{Q^*} + 1
    \end{align*}
\end{proof}

We are then able to show that, by the end of the game, every arm has sufficiently many samples.

\begin{prop}{}{n_T-lb}
    \begin{align*}
        n_T\paren{Q} \geq \frac{4}{25} \epsilon \Delta^{-2}\paren{Q} \quad\forall Q\in\Uc\backslash\cbr{Q^*}
    \end{align*}
\end{prop}

\begin{proof}
    Let $Q\in\Uc\backslash\cbr{Q^*}$. Since $Q^*\in\Uc'$ by Corollary~\ref{cor:Q^*-in-U'}, let $t\in\brac{T}$ be the last round such that $Q_t=Q^*$. Then,
    \begin{align*}
        \mu\paren{Q} - \frac{4}{5} \sqrt{ \frac{ \epsilon }{ n_{T}\paren{Q} } } &\geq \mu\paren{Q} - \frac{4}{5} \sqrt{ \frac{ \epsilon }{ n_{t-1}\paren{Q} } } \\
        &> \LCB_{t-1}\paren{Q;\epsilon} && \text{Eq.~\eqref{eq:ucb-e-hp-ev}} \\
        &\geq \LCB_{t-1}\paren{Q^*;\epsilon} \\
        &> \mu^* - \frac{6}{5} \sqrt{\frac{\epsilon}{ n_{t-1}\paren{Q^*} }} && \text{Eq.~\eqref{eq:ucb-e-hp-ev}} \\
        &= \mu^* - \frac{6}{5} \sqrt{\frac{\epsilon}{ n_{T}\paren{Q^*} - 1 }} && n_T\paren{Q^*} = n_t\paren{Q^*} = n_{t-1}\paren{Q^*} + 1 \\
        &> \mu^* - \Delta\paren{Q} && \text{Prop.~\ref{prop:n_T-Q^*-lb} and } \Delta_\text{min} \leq \Delta\paren{Q}
    \end{align*}
    The claim then follows by rearranging the terms.
\end{proof}

Let $Q\in\Uc\backslash\cbr{Q^*}$. From Propositions~\ref{prop:n_T-Q^*-lb} and \ref{prop:n_T-lb}, we can thus conclude inequalities~\eqref{eq:ucb-e-main-ineq}
\begin{align*}
    n_T\paren{Q} \geq \frac{4}{25} \epsilon\Delta^{-2}\paren{Q} \txtand n_T\paren{Q^*} \geq \frac{36}{25} \epsilon \Delta^{-2}_{\text{min}} + 1 > \frac{4}{25} \epsilon \Delta^{-2}_{\text{min}}
\end{align*}

\end{proof}

\section{Proof of Theorem~\ref{thm:ucb-dr-err}}
Suppose that we are operating under permutation $\paren{a_1,\dots,a_l}$ and parameters $\paren{\epsilon_1,\dots,\epsilon_l}$. To show our desired result, we will define a high-probability event, under which the modified UCB-E analysis ensures the correctness of LCB-DR's decision.

\subsection{Concentration inequality}
\label{sec:ubc-dr-conc-ineq}
From the boundedness of $r\in\brac{0,1}$, Hoeffding's inequality implies that
\begin{align*}
    \Pb\paren{ \abs{ \mu\paren{a;Q} - \hat\mu_t\paren{a;Q} } < \frac{1}{5} \sqrt{ \frac{\epsilon}{t} } } \geq 1 - 2\exp\paren{ - \frac{2\epsilon}{25} }
\end{align*}
for all $a\in\Ac, Q\in\Uc, t\in\Nb$ and $\epsilon \geq 0$. Fix some $j\in\brac{l}$. Then, taking union bounds yields
\begin{align*}
    &\Pb\paren{ \bigcap_{ Q\in\Uc } \bigcap_{ t\in\brac{u_j} } \cbr{ \abs{ \mu\paren{a_j;Q} - \hat\mu_t\paren{a_j;Q} } < \frac{ C_{a_j}\wedge 1 }{5} \sqrt{ \frac{\epsilon_j}{t} } } } \\
    &\hspace{5cm} \geq 1 - 2ku_j\exp\paren{ - \frac{2 \paren{ C_{a_j}^2\wedge 1 } \epsilon_j}{25} } 
    % &\hspace{5cm} = 1 - 2ku_j\exp\paren{ - \frac{ \paren{ C_{a_j}^2\wedge 1 } \paren{T_j + \tilde T_j - k_j} }{ 18 H_j } }
\end{align*}
% where the last line follows by rearranging terms in the definition of $T_j$. 
We then define the high-probability event of interest:
\begin{align*}
    A_j \coloneqq \bigcap_{ Q\in\Uc } \bigcap_{ t\in\brac{u_j} } \cbr{ \abs{ \mu\paren{a_j;Q} - \hat\mu_t\paren{a_j;Q} } < \frac{ C_{a_j}\wedge 1 }{5} \sqrt{ \frac{\epsilon_j}{t} } }
\end{align*}

\subsection{Modified UCB-E analysis}
\label{app:mod-ucb-e-an}
Here, we apply the UCB-E analysis of Appendix~\ref{app:mod-ucb-e-proof}. By assumption, recall that $\bar T_j\leq u_j$, so that $n_{ \bar T_{j-1} }\paren{Q} + T_j \leq u_j$ and, thus, under event $A_j$,
\begin{align*}
    \abs{ \mu\paren{a_j;Q} - \hat\mu_t\paren{a_j;Q} } < \frac{ C_{a_j}\wedge 1 }{5} \sqrt{ \frac{\epsilon_j}{t} } \leq \frac{ 1 }{5} \sqrt{ \frac{\epsilon_j}{t} }
\end{align*}
for all $Q\in\Uc$ and $t\in\cbr{ n_{ \bar T_{j-1} }\paren{Q} ,\dots, n_{ \bar T_{j-1} }\paren{Q}+T_j }$. 
% Moreover, since $\bar T_0 = u_0$, we have from the lower bound~\eqref{eq:ucb-dr-eps-lb} on $\paren{\epsilon_1,\dots,\epsilon_l}$ that
% \begin{align*}
%     \epsilon_j &\geq \frac{25}{36} \Delta_{ a_j,\text{min} }^{2} \paren{ u_{j-1}-1 } \geq \frac{25}{36} \Delta_{ a_j,\text{min} }^{2} \paren{ \bar T_{j-1} - 1 } \geq \frac{25}{36} \Delta_{ a_j,\text{min} }^{2} \paren{ n_{ \bar T_{j-1} }\paren{ Q_{a_j}^* } - 1 }
% \end{align*}
% for all $j\in\brac{l}$. 
We can then conclude the following result.

\begin{thm}{}{UCB-E-DR-corr}
    For any $j\in\brac{l}$, under event $A_j$, it follows that $\hat Q_j = Q_{a_j}^*$ and
    \begin{align*}
        \abs{ \muDR\paren{a_j} - \muout_T\paren{a_j} } < \begin{dcases}
                \frac{ \DeltaDR\paren{a_j} }{2} & a_j\neq a^* \\
                \frac{ \DeltaDRmin }{2} & a_j= a^*
            \end{dcases}
    \end{align*}
\end{thm}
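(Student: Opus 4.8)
The plan is to reduce this statement, round by round, to the generic guarantee of Theorem~\ref{thm:mod-ucb-e-opt}. Fix $j\in\brac{l}$. In round $j$ the algorithm runs the modified UCB-E of Appendix~\ref{app:mod-ucb-e-proof} on the ``arms'' $\Uc$ with reward means $\mu\paren{a_j;\cdot}$, optimal arm $Q_{a_j}^*$, initial pull counts $n_0\paren{Q}=n_{\bar T_{j-1}}\paren{Q}$, horizon $T_j$, and parameter $\epsilon_j$; under this dictionary the objects $\Uc_0,k_0,\tilde T_0,H_0,\Delta\paren{Q},\Delta_\text{min}$ of that appendix are exactly $\Uc_j,k_j,\tilde T_j,H_j,\Delta_{a_j}\paren{Q},\Delta_{a_j,\text{min}}$, and the count $n_T\paren{\cdot}$ there becomes $n_{\bar T_j}\paren{\cdot}$ since round $j$ ends at global time $\bar T_j$. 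So I would first confirm that on the event $A_j$ the three hypotheses of Theorem~\ref{thm:mod-ucb-e-opt} are met.

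Two of the hypotheses are already checked in the discussion preceding the statement: the lower bound $\epsilon_j\ge\frac{25}{36}\Delta_{a_j,\text{min}}^2\paren{n_{\bar T_{j-1}}\paren{Q_{a_j}^*}-1}$ follows from~\eqref{eq:ucb-dr-eps-lb} together with $u_{j-1}\ge\bar T_{j-1}\ge n_{\bar T_{j-1}}\paren{Q_{a_j}^*}$, and the concentration hypothesis holds because on $A_j$ one has $\abs{\mu\paren{a_j;Q}-\hat\mu_t\paren{a_j;Q}}<\frac{C_{a_j}\wedge1}{5}\sqrt{\epsilon_j/t}\le\frac15\sqrt{\epsilon_j/t}$ for every $Q\in\Uc$ and every $t\le n_{\bar T_{j-1}}\paren{Q}+T_j\le\bar T_j\le u_j$, which covers the relevant range of indices. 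The remaining horizon hypothesis $T_j\ge\frac{36}{25}\epsilon_j\paren{H_j+\Delta_{a_j,\text{min}}^{-2}}-\tilde T_j+k_j$ holds with equality by the very definition of $T_j$.

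Theorem~\ref{thm:mod-ucb-e-opt} then delivers, on $A_j$, both $\hat Q_j=Q_{a_j}^*$ and $\frac15\sqrt{\epsilon_j/n_{\bar T_j}\paren{Q_{a_j}^*}}\le\Delta_{a_j,\text{min}}/2$. Since $\muDR\paren{a_j}=\mu\paren{a_j;Q_{a_j}^*}$ and, using $\hat Q_j=Q_{a_j}^*$, $\muout_T\paren{a_j}=\hat\mu_{n_{\bar T_j}\paren{Q_{a_j}^*}}\paren{a_j;Q_{a_j}^*}$, the confidence bound in $A_j$ specialized to $Q=Q_{a_j}^*$ and $t=n_{\bar T_j}\paren{Q_{a_j}^*}$ gives
\begin{align*}
    \abs{\muDR\paren{a_j}-\muout_T\paren{a_j}} &< \frac{C_{a_j}\wedge1}{5}\sqrt{\frac{\epsilon_j}{n_{\bar T_j}\paren{Q_{a_j}^*}}}
    = \paren{C_{a_j}\wedge1}\cdot\frac15\sqrt{\frac{\epsilon_j}{n_{\bar T_j}\paren{Q_{a_j}^*}}} \\
    &\le \paren{C_{a_j}\wedge1}\,\frac{\Delta_{a_j,\text{min}}}{2}
    \le C_{a_j}\,\frac{\Delta_{a_j,\text{min}}}{2}.
\end{align*}
Substituting the definition of $C_{a_j}$ closes the argument: if $a_j\ne a^*$ then $C_{a_j}\Delta_{a_j,\text{min}}=\DeltaDR\paren{a_j}$, while if $a_j=a^*$ then $C_{a^*}\Deltaastrmin=\DeltaDRmin$ and $\Delta_{a^*,\text{min}}=\Deltaastrmin$.

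The step demanding the most care is not computational but a matter of correctly tracking the factor $C_{a_j}\wedge1$: the event $A_j$ is defined with a confidence half-width that is a factor $C_{a_j}\wedge1$ tighter than what Theorem~\ref{thm:mod-ucb-e-opt} nominally needs, and it is precisely this extra slack --- unused in proving $\hat Q_j=Q_{a_j}^*$ --- that sharpens the generic $\Delta_{a_j,\text{min}}/2$ estimation error into the $C_{a_j}$-scaled (hence $\DeltaDR$-scaled) bound. That sharper form is exactly what the subsequent argument needs in order to conclude that $\muout_T$ separates $a^*$ from every suboptimal arm, so it is worth being explicit about why the slack is available and where it is spent.
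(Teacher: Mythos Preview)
Your proposal is correct and follows essentially the same approach as the paper: instantiate the generic setup of Theorem~\ref{thm:mod-ucb-e-opt} with the round-$j$ quantities, verify its hypotheses under $A_j$, extract $\hat Q_j=Q_{a_j}^*$ and the bound $\frac{1}{5}\sqrt{\epsilon_j/n_{\bar T_j}(Q_{a_j}^*)}\le\Delta_{a_j,\text{min}}/2$, and then combine with the $(C_{a_j}\wedge1)$-tight concentration from $A_j$ at $t=n_{\bar T_j}(Q_{a_j}^*)$ to arrive at $C_{a_j}\Delta_{a_j,\text{min}}/2$. Your write-up is in fact a bit more careful than the paper's in spelling out why each hypothesis of Theorem~\ref{thm:mod-ucb-e-opt} holds and where exactly the extra $C_{a_j}\wedge1$ slack gets spent.
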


\begin{proof}
    If we set $T=T_j, \epsilon=\epsilon_j, n_0 = n_{\bar T_{j-1}}, \mu = \mu\paren{a_j;\cdot}$ and $\hat\mu_t = \hat\mu_t\paren{a_j;\cdot}$ in the setup of Appendix~\ref{app:mod-ucb-e-proof}, then we can immediately see that $\hat Q_j = Q_{a_j}^*$ by Theorem~\ref{thm:mod-ucb-e-opt}, as its assumptions are satisfied under $A_j$. Moreover, we have that
    \begin{align*}
        \abs{ \muDR\paren{a_j} - \muout_T\paren{a_j} } &= \abs{ \mu\paren{ a_j,Q_{a_j}^* } - \hat\mu_{ n_{\bar T_j}\paren{ Q_{a_j}^* } } \paren{ a_j, Q_{a_j}^* } } && \hat Q_j = Q_{a_j}^* \\
        &< \frac{ C_{a_j}\wedge 1 }{5} \sqrt{ \frac{ \epsilon_j }{ n_{\bar T_j}\paren{ Q_{a_j}^* } } } && \text{event } A_j \text{ and } n_{\bar T_j} \leq u_j \\
        &\leq C_{a_j} \frac{ \Delta_{a_j,\text{min}} }{2} && \text{Thm.~\ref{thm:mod-ucb-e-opt}} \\
        &= \begin{dcases}
            \frac{ \DeltaDR\paren{a_j} }{2} & a_j\neq a^* \\
            \frac{ \DeltaDRmin }{2} & a_j= a^*
        \end{dcases}
    \end{align*}
\end{proof}

Let us also remark that when $T_j = \frac{36}{25} \epsilon_j H_j - \tilde T_j + k_j$ (i.e., the lower bound~\eqref{eq:lcb-Tj-lb} holds with equality), then
\begin{align*}
    \bar T_j &= \rsum{r=0}{j} T_r \\
    &= k + \rsum{r=1}{j} \brac{ \frac{36}{25} \epsilon_r \underbrace{ H_r }_{ \leq 2 H_{a_r} } \underbrace{ - \tilde T_r }_{ \leq 0 } + \underbrace{ k_r }_{\leq k} } \\
    &\leq k\paren{j+1} + \frac{72}{25} \rsum{r=1}{j} \epsilon_r H_{a_r}
\end{align*}
so that we can set $u_j$ to the last expression.

\subsection{LCB-DR correctness}
Under the event $\rbcap{j=1}{l} A_j$, we know that
\begin{align*}
    \muout_T\paren{a^*} - \muout_T\paren{a} &= \muout_T\paren{a^*} - \muDRstr + \DeltaDR\paren{a} + \muDR\paren{a} - \muout_T\paren{a} \\
    &> \DeltaDR\paren{a} - \frac{ \DeltaDRmin }{2} - \frac{ \DeltaDR\paren{a} }{2} && \text{Thm.~\ref{thm:UCB-E-DR-corr}} \\
    &\geq 0 && \DeltaDRmin\leq \DeltaDR\paren{a}
\end{align*}
for every $a\neq a^*$. That is, $\Aout_T = \argmax_{a\in\Ac} \muout_T\paren{a} = a^*$ and, thus, $\Pb\paren{ \Aout_T = a^* } \geq \Pb\paren{ \rbcap{j=1}{l} A_j }$. The result then follows from a union bound on the high-probability events $\rcbr{A_j}{j=1}{l}$.

\section{Extending to infinite decision sets}
\label{app:infinite-dec-sets}
Suppose that we have access to a finite $\epsilon$-cover $\Ac_\epsilon$ of $\cbr{r\paren{a,\cdot}}_{a\in\Ac}$ in the following sense: for all $a\in\Ac$, there exists a $\phi_a \in \Ac_\epsilon$ such that
\begin{align*}
    \max_{Q\in\Uc} \EE[X\sim Q]{ \abs{ r\paren{a,X} - r\paren{\phi_a,X} } } \leq \epsilon
\end{align*}
The idea is that a learner can play the game dynamics on the finite set $\Ac_\epsilon$ to control the gap $\DeltaDR\paren{\cdot;\Ac_\epsilon}$, where we made the underlying decision set explicit in the notation, and this ensures control of the original objective. We can relate this gap to the quantity of interest by noting that for any $a\in\Ac$,
\begin{align*}
    \DeltaDR\paren{a;\Ac} &= \max_{a^*\in\Ac} \muDR\paren{a^*} - \muDR\paren{a} \\
    &= \max_{a^*\in\Ac} \muDR\paren{a^*} - \max_{a^*_\epsilon\in\Ac_\epsilon} \muDR\paren{a^*_\epsilon} + \DeltaDR\paren{a;\Ac_\epsilon} \\
    &= \max_{a^*\in\Ac} \cbr{ \muDR\paren{a^*} - \max_{a^*_\epsilon\in\Ac_\epsilon} \muDR\paren{a^*_\epsilon} } + \DeltaDR\paren{a;\Ac_\epsilon} 
\end{align*}
We can bound the error term as follows: for any $a\in\Ac$,
\begin{align*}
    \muDR\paren{a} - \max_{a^*_\epsilon\in\Ac_\epsilon} \muDR\paren{a^*_\epsilon} &\leq \muDR\paren{a} - \muDR\paren{\phi_a} \\
    &= \min_{Q\in\Uc} \EE[X\sim Q]{r\paren{a,X}} - \min_{Q\in\Uc} \EE[X\sim Q]{r\paren{\phi_a,X}} \\
    &\leq \max_{Q\in\Uc} \EE[X\sim Q]{ r\paren{a,X} - r\paren{\phi_a,X} } \\
    &\leq \epsilon
\end{align*}
That is,
\begin{align*}
    \DeltaDR\paren{a;\Ac} \leq \DeltaDR\paren{a;\Ac_\epsilon} + \epsilon \quad\forall a\in\Ac
\end{align*}

\subsection{Binary classification}
\label{app:infinite-dec-bin-class}
A special case is the binary classification setting:
\begin{outline}
    \1 The data are pairs $\paren{X,Y}\in\Xc\times\cbr{0,1}$.

    \1 Decisions are binary-valued functions $a:\Xc\to\cbr{0,1}$ and $\VC\paren{\Ac}=d<\infty$.

    \1 The reward function is $r\paren{a,\paren{x,y}} = \Ib\cbr{a\paren{x}=y}$, so that
    \begin{align*}
        \EE[\paren{X,Y}\sim Q]{r\paren{a,\paren{X,Y}}} = \Pb_{\paren{X,Y}\sim Q}\paren{ a\paren{X}=Y }
    \end{align*}
\end{outline}
Suppose that we have a finite $\epsilon$-cover $\Ac_\epsilon$ of $\Ac$ in the following sense: for any $a\in\Ac$, there exists a $\phi_a\in\Ac_\epsilon$ such that
\begin{align*}
    \max_{Q\in\Uc} Q_\Xc\paren{a\neq\phi_a} \leq \epsilon
\end{align*}
where $Q_\Xc$ is the marginal distribution of $Q$ over $\Xc$ (recall that now the distributions are over pairs $\paren{X,Y} \in \Xc\times\cbr{0,1}$. To see why this yields a cover in the more general definition, we note that for any $\paren{X,Y}\sim Q$,
\begin{align*}
    \EE{ \abs{ r\paren{a,\paren{X,Y}} - r\paren{\phi_a,\paren{X,Y}} } } &= \EE{ \abs{ \Ib\cbr{ a\paren{X}= Y } - \Ib\cbr{ \phi_a\paren{X}= Y } } } \\
    &= \EE{ \abs{ \Ib\cbr{ \phi_a\paren{X}\neq Y } - \Ib\cbr{ a\paren{X}\neq Y } } } \\
    &= \EE{ \abs{ \paren{ \phi_a\paren{X} - Y }^2 - \paren{ a\paren{X} - Y }^2 } }
\end{align*}
Next, we make the following elementary observation: for $x,y,z\in\cbr{0,1}$, we have that
\begin{align*}
    \paren{x-z}^2-\paren{y-z}^2 = \paren{x-y}^2 - 2\underbrace{\paren{x-y}\paren{z-y}}_{\geq 0} \leq \paren{x-y}^2
\end{align*}
By symmetry, it then follows that $\abs{\paren{x-z}^2-\paren{y-z}^2} \leq \paren{x-y}^2$. Applying this to the above then yields
\begin{align*}
    \EE{ \abs{ \paren{ \phi_a\paren{X} - Y }^2 - \paren{ a\paren{X} - Y }^2 } } &\leq \EE{\paren{ \phi_a\paren{X} - a\paren{X} }^2} \leq \epsilon
\end{align*}
That is, we ensure the condition $\DeltaDR\paren{a;\Ac} \leq \DeltaDR\paren{a;\Ac_\epsilon} + \epsilon$ for all $a\in\Ac$.

Finally, let us briefly discuss how to construct such a cover from samples. Suppose that we independently sample $O\paren{\frac{d\log\paren{1/\epsilon}+\log\paren{k/\delta}}{\epsilon}}$ times from each distribution $Q\in\Uc$, and let $S$ denote the aggregated sample. Then, let $\Ac_\epsilon\subset\Ac$ be the result of selecting one representative from each of the sets in
\begin{align*}
    \cbr{ \cbr{ a\in\Ac: a|_S=I }: I\in\cbr{0,1}^{\abs{S}} }
\end{align*}
That is, for each $a\in\Ac$, there exists some $a_\epsilon\in\Ac_\epsilon$ such that $a$ and $a_\epsilon$ agree on $S$. The set $\Ac_\epsilon$ is an $\epsilon$-cover as defined previously. In addition, from the Sauer-Shelah lemma \citep[Lemma 7.12]{van_handel_probability_2014}, we know that
\begin{align*}
    \abs{\Ac_\epsilon} \lesssim \paren{ \frac{k\paren{d\log\paren{1/\epsilon}+\log\paren{k/\delta}}}{d\epsilon} }^d
\end{align*}

% For example, if we use the distribution-independent regret of Corollary~\ref{cor:ue-dist-ind-reg}, this shows that the output $\Aout_T$ of UE on $\Ac_\epsilon$ guarantees
% \begin{align*}
%     \EE{ \DeltaDR\paren{\Aout;\Ac} } &\leq \EE{ \DeltaDR\paren{\Aout;\Ac_\epsilon} } + \epsilon \\
%     &\lesssim \sqrt{\frac{k\log\paren{k \abs{\Ac_\epsilon}}}{T}} + \epsilon \\
%     &\lesssim \sqrt{\frac{ k\paren{ \log k + d\log\frac{k}{\epsilon} } }{T}} + \epsilon \\
%     &\lesssim \sqrt{\frac{ k\paren{ \log k + d\log\paren{kT} } }{T}}
% \end{align*}
% where we chose $\epsilon = \frac{1}{\sqrt{T}}$ for the last line.

\section{UE v.s. NUE}
\label{app:ue-vs-nue}
Here, we will prove the bounds stated in Section~\ref{sec:bnd-var-quant}. For convenience, we present the variance quantities again below:
\begin{align*}
    V_T &= \rsum{j=1}{k} \paren{ n_{(j)}-n_{(j-1)} } \EE{ \max_{ r\in\cbr{j,\dots,k} } \frac{1}{n_{(r)}^2} \brac{ X_{Q_{(r)}} - \mu_{Q_{(r)}} }^2 } \\
    \Sigma^2_T &= \EE{ \max_{Q\in\Uc} \frac{1}{n_Q^2} \rsum{i=1}{n_Q} \paren{ \Xarm{Q}{i} - \mu_Q }^2 } \\
    \sigma^2_T &= \max_{Q\in\Uc} \frac{\sigma_Q^2}{n_Q}
\end{align*}
We begin by proving the bound on $\Sigma_T^2$.
\begin{lem}{}{}
Suppose that our data is bounded: $X_Q\in\brac{0,1}$. Then,
\begin{align*}
    \Sigma_T^2 \leq 8\sqrt{\frac{2\log\paren{2k}}{\min_{Q\in\Uc} n_Q^3}} + \sigma_T^2
\end{align*}
\end{lem}

\begin{proof}
Recall that
\begin{align*}
    \Sigma^2_T &= \EE{ \max_{Q\in\Uc} \frac{1}{n_Q^2} \rsum{i=1}{n_Q} Y_{i,Q}^2 } 
\end{align*}
where we define $Y_{i,Q} \coloneqq \Xarm{Q}{i} - \mu_Q \in \brac{-1,1}$ and note that $\EE{Y_{i,Q}^2}=\sigma_Q^2$. Let us begin by noting that
\begin{align*}
    \Sigma^2_T \leq \EE{ \max_{Q\in\Uc} \frac{1}{n_Q^2} \rsum{i=1}{n_Q} \paren{ Y_{i,Q}^2 - \EE{Y_{i,Q}^2} } } + \sigma_T^2
\end{align*}
For a one-sided symmetrization argument, let $Z_{i,Q}$ be independent copies of the $Y_{i,Q}$ and let $\epsilon^n\iid\text{Rad}$ be independent from them, where $n\coloneqq \max_{Q\in\Uc} n_Q$. Then, we can bound the first quantity in the upper bound as follows:
\begin{align*}
    \EE{ \max_{Q\in\Uc} \frac{1}{n_Q^2} \rsum{i=1}{n_Q} \paren{ Y_{i,Q}^2 - \EE{Y_{i,Q}^2} } } &= \EE{ \max_{Q\in\Uc} \EE{ \frac{1}{n_Q^2} \rsum{i=1}{n_Q} \paren{ Y_{i,Q}^2 - Z_{i,Q}^2 } \middle| Y } } \\
    &\leq \EE{ \max_{Q\in\Uc} \frac{1}{n_Q^2} \rsum{i=1}{n_Q} \paren{ Y_{i,Q}^2 - Z_{i,Q}^2 } } \\
    &= \EE{ \max_{Q\in\Uc} \frac{1}{n_Q^2} \rsum{i=1}{n_Q} \epsilon_i \paren{ Y_{i,Q}^2 - Z_{i,Q}^2 } } \\
    &\leq \EE{ \max_{Q\in\Uc} \frac{1}{n_Q^2} \rsum{i=1}{n_Q} \epsilon_i Y_{i,Q}^2 } + \EE{ \max_{Q\in\Uc} \frac{1}{n_Q^2} \rsum{i=1}{n_Q} - \epsilon_i Z_{i,Q}^2 } \\
    &= 2 \EE{ \max_{Q\in\Uc} \frac{1}{n_Q^2} \rsum{i=1}{n_Q} \epsilon_i Y_{i,Q}^2 }
\end{align*}
where $Y$ denotes the collection of all $Y_{i,Q}$'s. In the next lemma, we bound the last quantity above.

\begin{lem}{Contraction}{contr}
We have that
\begin{align*}
    \EE{ \max_{Q\in\Uc} \frac{1}{n_Q^2} \rsum{i=1}{n_Q} \epsilon_i Y_{i,Q}^2 } \leq \EE{ \max_{Q\in\Uc} C_Q \rsum{i=1}{n_Q} \epsilon_i Y_{i,Q} }
\end{align*}
where $C_Q \coloneqq \frac{2}{n_Q \cdot \min_{Q'\in\Uc} n_{Q'}}$.
\end{lem}
\begin{proof}[Proof of Lemma~\ref{lem:contr}]
Fix an index $j\in\brac{n}$, where $n \coloneqq \max_{Q\in\Uc} n_Q$. For each $Q\in\Uc$, let us additionally define dummy variables $Y_{n_Q+1,Q}, \dots, Y_{n,Q} \coloneqq 0$, so that
\begin{align*}
    \EE{ \max_{Q\in\Uc} \frac{1}{n_Q^2} \rsum{i=1}{n_Q} \epsilon_i Y_{i,Q}^2 } = \EE{ \max_{Q\in\Uc} \frac{1}{n_Q^2} \rsum{i=1}{n} \epsilon_i Y_{i,Q}^2 }
\end{align*}
In what follows, we use $\Eb_{\epsilon_j}$ to denote an expectation only w.r.t. $\epsilon_j$, while all other random variables remain fixed (that is, conditioned on all other variables due to independence). Note that
\begin{align*}
    &\EE[\epsilon_j]{ \max_{Q\in\Uc} \cbr{ \frac{1}{n_Q^2} \rsum{i=1}{j} \epsilon_i Y_{i,Q}^2 + C_Q \rsum{i=j+1}{n} \epsilon_i Y_{i,Q} } } \\
    % &\hspace{2cm} = \frac{1}{2} \max_{Q\in\Uc} \cbr{ \frac{1}{n_Q^2} \rsum{i=1}{j-1} \epsilon_i Y_{i,Q}^2 + \frac{Y_{j,Q}^2}{n_Q^2} + C_Q \rsum{i=j+1}{n} \epsilon_i Y_{i,Q} } \\
    % &\hspace{3cm} + \frac{1}{2} \max_{Q\in\Uc} \cbr{ \frac{1}{n_Q^2} \rsum{i=1}{j-1} \epsilon_i Y_{i,Q}^2 - \frac{Y_{j,Q}^2}{n_Q^2} + C_Q \rsum{i=j+1}{n} \epsilon_i Y_{i,Q} } \\
    &\hspace{2cm} = \frac{1}{2} \max_{Q,Q'\in\Uc} \Biggl\{ \frac{1}{n_Q^2} \rsum{i=1}{j-1} \epsilon_i Y_{i,Q}^2 + \frac{Y_{j,Q}^2}{n_Q^2} + C_Q \rsum{i=j+1}{n} \epsilon_i Y_{i,Q} \\
    &\hspace{3cm} + \frac{1}{n_{Q'}^2} \rsum{i=1}{j-1} \epsilon_i Y_{i,Q'}^2 - \frac{Y_{j,Q'}^2}{n_{Q'}^2} + C_{Q'} \rsum{i=j+1}{n} \epsilon_i Y_{i,Q'} \Biggl\}
\end{align*}
Next, note that
\begin{align*}
    \frac{Y_{j,Q}^2}{n_Q^2} - \frac{Y_{j,Q'}^2}{n_{Q'}^2} &= \paren{ \frac{Y_{j,Q}}{n_Q} + \frac{Y_{j,Q'}}{n_{Q'}} } \paren{ \frac{Y_{j,Q}}{n_Q} - \frac{Y_{j,Q'}}{n_{Q'}} } \\
    &\leq \paren{ \frac{1}{n_Q} + \frac{1}{n_{Q'}} } \abs{ \frac{Y_{j,Q}}{n_Q} - \frac{Y_{j,Q'}}{n_{Q'}} } \\
    &\leq \abs{ C_Q Y_{j,Q} - C_{Q'} Y_{j,Q'} }
\end{align*}
Hence,
\begin{align*}
    &\EE[\epsilon_j]{ \max_{Q\in\Uc} \cbr{ \frac{1}{n_Q^2} \rsum{i=1}{j} \epsilon_i Y_{i,Q}^2 + C_Q \rsum{i=j+1}{n} \epsilon_i Y_{i,Q} } } \\
    &\hspace{2cm} \leq \frac{1}{2} \max_{Q,Q'\in\Uc} \Biggl\{ \frac{1}{n_Q^2} \rsum{i=1}{j-1} \epsilon_i Y_{i,Q}^2 + C_Q \rsum{i=j+1}{n} \epsilon_i Y_{i,Q} \\
    &\hspace{3cm} + \frac{1}{n_{Q'}^2} \rsum{i=1}{j-1} \epsilon_i Y_{i,Q'}^2 + C_{Q'} \rsum{i=j+1}{n} \epsilon_i Y_{i,Q'} + \abs{ C_Q Y_{j,Q} - C_{Q'} Y_{j,Q'} } \Biggl\} \\
    &\hspace{2cm} = \frac{1}{2} \max_{Q,Q'\in\Uc} \Biggl\{ \frac{1}{n_Q^2} \rsum{i=1}{j-1} \epsilon_i Y_{i,Q}^2 + C_Q \rsum{i=j+1}{n} \epsilon_i Y_{i,Q} \\
    &\hspace{3cm} + \frac{1}{n_{Q'}^2} \rsum{i=1}{j-1} \epsilon_i Y_{i,Q'}^2 + C_{Q'} \rsum{i=j+1}{n} \epsilon_i Y_{i,Q'} + C_Q Y_{j,Q} - C_{Q'} Y_{j,Q'} \Biggl\} \\
    &\hspace{2cm} = \EE[\epsilon_j]{ \max_{Q\in\Uc} \cbr{ \frac{1}{n_Q^2} \rsum{i=1}{j-1} \epsilon_i Y_{i,Q}^2 + C_Q \rsum{i=j}{n} \epsilon_i Y_{i,Q} } }
\end{align*}
From independence, we can thus integrate iteratively starting at $j=n$ to conclude that
\begin{align*}
    \EE{ \max_{Q\in\Uc} \frac{1}{n_Q^2} \rsum{i=1}{n} \epsilon_i Y_{i,Q}^2 } &\leq \EE{ \max_{Q\in\Uc} C_Q \rsum{i=1}{n} \epsilon_i Y_{i,Q} } = \EE{ \max_{Q\in\Uc} C_Q \rsum{i=1}{n_Q} \epsilon_i Y_{i,Q} }
\end{align*}
\end{proof}

Again using symmetrization, let $Z_{i,Q}$ be independent copies of the $Y_{i,Q}$ and independent from $\epsilon^n$. Since $Y_{i,Q}$ are centered, we have that
\begin{align*}
    \EE{ \max_{Q\in\Uc} C_Q \rsum{i=1}{n_Q} \epsilon_i Y_{i,Q} } &= \EE{ \max_{Q\in\Uc} \EE{ C_Q \rsum{i=1}{n_Q} \epsilon_i \paren{ Y_{i,Q} - Z_{i,Q} } \middle| \epsilon^n, Y } } \\
    &\leq \EE{ \max_{Q\in\Uc} C_Q \rsum{i=1}{n_Q} \epsilon_i \paren{ Y_{i,Q} - Z_{i,Q} } } \\
    &= \EE{ \max_{Q\in\Uc} C_Q \rsum{i=1}{n_Q} \paren{ Y_{i,Q} - Z_{i,Q} } } \\
    &\leq 2 \EE{ \max_{Q\in\Uc} C_Q \abs{ \rsum{i=1}{n_Q} Y_{i,Q} } }
\end{align*}
Next, we bound this expectation using Hoeffding's inequality. We begin with a high-probability bound:
\begin{align*}
    \Pb\paren{ \max_{Q\in\Uc} C_Q \abs{ \rsum{i=1}{n_Q} Y_{i,Q} } \geq t } &\leq \sum_{Q\in\Uc} \Pb\paren{ C_Q \abs{ \rsum{i=1}{n_Q} Y_{i,Q} } \geq t } \\
    &\leq 2 \sum_{Q\in\Uc} \exp\paren{ - \frac{2t^2}{C_Q^2n_Q} } \\
    &= 2 \sum_{Q\in\Uc} \exp\paren{ - \frac{t^2n_Q\min_{Q'\in\Uc}n_{Q'}^2}{ 2 } } \\
    &\leq 2 k \exp\paren{- \frac{t^2\min_{Q\in\Uc}n_{Q}^3}{ 2 }}
\end{align*}
We can subsequently integrate the tails to obtain the in-expectation bound
\begin{align*}
    \EE{ \max_{Q\in\Uc} C_Q \abs{ \rsum{i=1}{n_Q} Y_{i,Q} } } \leq 2 \sqrt{\frac{2\log\paren{2k}}{\min_{Q\in\Uc} n_Q^3}}
\end{align*}
Combining all bounds presented thus far finally yields
\begin{align*}
    \Sigma_T^2 \leq 8\sqrt{\frac{2\log\paren{2k}}{\min_{Q\in\Uc} n_Q^3}} + \sigma_T^2
\end{align*}

\end{proof}

Next, we show how $V_T$ relates to $\Sigma_T^2$.
\begin{lem}{}{}
We have that
\begin{align*}
    V_T \leq \min\cbr{\max_{Q\in\Uc} n_Q, k} \Sigma_T^2
\end{align*}
\end{lem}
\begin{proof}
Let $n \coloneqq \max_{Q\in\Uc} n_Q$ and note that we can equivalently express
\begin{align*}
    V_T = \rsum{i=1}{n} \EE{ \max_{Q\in\Uc: n_Q\geq i} \frac{1}{n_Q^2} \paren{\Xarm{Q}{i} - \mu_Q}^2 }
\end{align*}
From this, we see that
\begin{align*}
    V_T &\leq n \EE{ \max_{i\in\brac{n}} \max_{Q\in\Uc: n_Q\geq i} \frac{1}{n_Q^2} \paren{\Xarm{Q}{i} - \mu_Q}^2 } \\
    &= n \EE{ \max_{Q\in\Uc} \max_{i\in\brac{n_Q}} \frac{1}{n_Q^2} \paren{\Xarm{Q}{i} - \mu_Q}^2 } \\
    &\leq n \EE{ \max_{Q\in\Uc} \rsum{i=1}{n_Q} \frac{1}{n_Q^2} \paren{\Xarm{Q}{i} - \mu_Q}^2 } \\
    &= n \Sigma_T^2
\end{align*}
Alternatively, we can begin by bounding the max by a sum in $V_T$:
\begin{align*}
    V_T &\leq \EE{ \rsum{i=1}{n} \sum_{Q\in\Uc: n_Q\geq i} \frac{1}{n_Q^2} \paren{\Xarm{Q}{i} - \mu_Q}^2 } \\
    &= \EE{ \sum_{Q\in\Uc} \rsum{i=1}{n_Q} \frac{1}{n_Q^2} \paren{\Xarm{Q}{i} - \mu_Q}^2 } \\
    &\leq k \EE{ \max_{Q\in\Uc} \rsum{i=1}{n_Q} \frac{1}{n_Q^2} \paren{\Xarm{Q}{i} - \mu_Q}^2 } \\
    &= k \Sigma_T^2
\end{align*}
\end{proof}

Finally, we prove the upper bound on $V_T$ stated in the example of Section~\ref{sec:bnd-V_T}.

\begin{lem}{}{}
Let $\Uc = \cbr{Q_1,\dots,Q_k}$, where
\begin{outline}
    \1 $Q_1,\dots,Q_{k-1}$ share a common variance $\sigma^2$ and are supported in $\brac{0,1}$.

    \1 $Q_k$ has variance $\nu^2$.

    \1 We sample $n$ times from each $Q_1,\dots,Q_{k-1}$ and $m=T-n\paren{k-1}\geq n$ times from $Q_k$, for a total of $T\geq nk$ samples.
\end{outline}
Then,
\begin{align*}
    V_T \leq \frac{\sqrt{ 2 \log\paren{k-1} } + \sigma^2}{n} + \frac{\nu^2}{T-n\paren{k-1}}
\end{align*}
\end{lem}
\begin{proof}
Note that
\begin{align*}
    V_T &= n \EE{ \max\cbr{ \max_{j\in\brac{k-1}} \cbr{ \frac{1}{n^2} \paren{ X_{Q_j} - \mu_{Q_{j}} }^2 } , \frac{1}{m^2} \paren{ X_{Q_{k}} - \mu_{Q_{k}} }^2 } } + \frac{\paren{m-n}\nu^2}{m^2} \\
    &\leq \frac{1}{n} \EE{ \max_{j\in\brac{k-1}} \cbr{ \paren{ X_{Q_{j}} - \mu_{Q_{j}} }^2 } } + \frac{n\nu^2}{m^2} + \frac{\paren{m-n}\nu^2}{m^2} \\
    &= \frac{1}{n} \underbrace{ \EE{ \max_{j\in\brac{k-1}} \cbr{ \paren{ X_{Q_{j}} - \mu_{Q_{j}} }^2 } } }_{\paren{*}} + \frac{\nu^2}{m}
\end{align*}
Since $\abs{\paren{ X_{Q_{j}} - \mu_{Q_{j}} }^2 - \sigma^2} \leq 1$ for all $j\in\brac{k-1}$, we then have that
\begin{align*}
    \paren{*} &= \EE{ \max_{j\in\brac{k-1}} \cbr{ \paren{ X_{Q_{j}} - \mu_{Q_{j}} }^2 - \sigma^2 } } + \sigma^2 \leq \sqrt{ 2 \log\paren{k-1} } + \sigma^2
\end{align*}
\end{proof}

\end{document}